\let\vec\mathbf
\newcommand{\DiamondPlus}{\mathbin{\mathpalette\DiamondPlus@aux\relax}}
\newcommand{\DiamondPlus@aux}[2]{%
  \ooalign{%
    $\m@th#1\Diamond$\cr
    \hidewidth$\m@th#1\raisebox{0.0ex}{\scalebox{1}[1.1]{$+$}}$\hidewidth\cr
  }%
}
\newcommand{\mpDiamond}{\DiamondPlus}
\newcommand{\mpdDiamond}{\!\DiamondPlus\!}
\newcommand{\mlDiamond}{\Diamond}
\theoremstyle{plain}
\declaretheorem[name=Theorem,numberwithin=section]{theorem}
\newtheorem{lemma}[theorem]{Lemma}
\newtheorem{proposition}[theorem]{Proposition}
\newtheorem{corollary}[theorem]{Corollary}
\theoremstyle{definition}
\newtheorem{definition}[theorem]{Definition}
\newtheorem{example}[theorem]{Example}
\newcommand{\klara}[1]{\todo[color=red!25,inline]{Klara: #1}}
\newcommand{\floris}[1]{\todo[color=olive!25,inline]{Floris: #1}}
\newcommand{\jan}[1]{\todo[color=teal!25,inline]{Jan: #1}}
\newcommand{\matlang}{\textsf{MATLANG}\xspace}
\newcommand{\tl}{\textsf{TL}\xspace}
\newcommand{\mplang}{\textsf{MPLang}\xspace}
\newcommand{\amplang}[0]{\textsf{A-MPLang}\xspace}
\newcommand{\smplang}[1]{#1\textsf{-}\mplang}
\newcommand{\relu}[0]{\textsf{ReLU}\xspace}
\newcommand{\relub}[0]{\textsf{ReLU}}
\newcommand{\sign}[0]{\textsf{sign}\xspace}
\newcommand{\trrelu}[0]{\textsf{TrReLU}\xspace}
\newcommand{\step}[0]{\textsf{bool}\xspace}
\newcommand{\stepb}[0]{\textsf{bool}}
\newcommand{\bool}[0]{\textsf{bool}\xspace}
\newcommand{\N}{\mathbb{N}}
\newcommand{\Z}{\mathbb{Z}}
\newcommand{\Q}{\mathbb{Q}}
\newcommand{\R}{\mathbb{R}}
\newcommand{\Scal}{\mathcal S}
\theoremstyle{remark}
\newtheorem{remark}[theorem]{Remark}
\begin{document}

\title[A Logical View of GNN-Style Computation and the Role of Activation Functions]{\texorpdfstring{A Logical View of GNN-Style Computation and the Role of Activation Functions}{A Logical View of GNN-Style Computation and the Role of Activation Functions}
}

\author{Pablo Barcel\'o}
\orcid{0000-0003-2293-2653}
\affiliation{%
  \institution{Institute for Mathematical and Computational Engineering, \\ Pontifical Catholic University of Chile \& IMFD \& CENIA}
  \city{Santiago}
  \country{Chile}
}
\email{pbarcelo@uc.cl}

\author{Floris Geerts}
\orcid{0000-0002-8967-2473}
\affiliation{%
  \institution{Department of Computer Science, University of Antwerp}
  \city{Antwerp}
  \country{Belgium}
}
\email{floris.geerts@uantwerp.be}

\author{Matthias Lanzinger}
\orcid{0000-0002-7601-3727}
\affiliation{%
  \institution{Institute for Logic and Computation, TU Wien}
  \city{Vienna}
  \country{Austria}
}
\email{matthias.lanzinger@tuwien.ac.at}

\author{Klara Pakhomenko}
\orcid{0009-0009-9715-2214}
\affiliation{%
  \institution{Data Science Institute, Universiteit Hasselt}
  \city{Diepenbeek}
  \country{Belgium}
}
\email{klara.pakhomenko@uhasselt.be}

\author{Jan Van den Bussche}
\orcid{0000-0003-0072-3252}
\affiliation{%
 \institution{Data Science Institute, Universiteit Hasselt}
  \city{Diepenbeek}
  \country{Belgium}
}
\email{jan.vandenbussche@uhasselt.be}

\begin{abstract}
We study the numerical and Boolean expressiveness of \mplang, a declarative language that captures the computation of graph neural networks (GNNs) through linear message passing and activation functions. We begin with \amplang, the fragment without activation functions, and give a characterization of its expressive power in terms of walk-summed features.
For bounded activation functions, we show that (under mild conditions) all eventually constant activations yield the same expressive power—numerical and Boolean—and
that it subsumes previously established logics for GNNs with eventually
constant activation functions but without linear layers.
Finally, we prove the first expressive separation between unbounded and bounded activations in the presence of linear layers:
\mplang with \relu is strictly more powerful for numerical queries than \mplang with eventually constant activation functions, e.g., 
truncated \relu.
This hinges on subtle interactions between linear aggregation and eventually constant non-linearities, and it establishes that GNNs using \relu\ 
are more expressive than those restricted to eventually constant activations
and linear layers.
\end{abstract}

\begin{CCSXML}
<ccs2012>
   <concept>
       <concept_id>10010147.10010257</concept_id>
       <concept_desc>Computing methodologies~Machine learning</concept_desc>
       <concept_significance>500</concept_significance>
       </concept>
   <concept>
       <concept_id>10003752.10010070.10010111.10010113</concept_id>
       <concept_desc>Theory of computation~Database query languages (principles)</concept_desc>
       <concept_significance>500</concept_significance>
       </concept>
   <concept>
       <concept_id>10002950.10003624.10003633.10003639</concept_id>
       <concept_desc>Mathematics of computing~Graph coloring</concept_desc>
       <concept_significance>500</concept_significance>
       </concept>
 </ccs2012>
\end{CCSXML}
\keywords{Liouville number, symmetric function, simple function, graded modal logic}

\ccsdesc[500]{Computing methodologies~Machine learning}
\ccsdesc[500]{Theory of computation~Database query languages (principles)}
\ccsdesc[500]{Mathematics of computing~Graph coloring}

\maketitle
\section{Introduction}
\paragraph{\bf Context.} 
The study of query languages for graph-structured data has traditionally focused on structural and navigational queries, where the goal is to select tuples of nodes and/or paths based on the topology of the underlying graph \cite{DBLP:conf/pods/Baeza13}. This line of work has produced a rich theory of languages such as \emph{regular path queries} \cite{DBLP:conf/sigmod/CruzMW87,DBLP:journals/siamcomp/MendelzonW95}, \emph{conjunctive} regular path queries \cite{DBLP:conf/kr/CalvaneseGLV00}, and their extensions \cite{DBLP:conf/pods/ConsensM90,DBLP:journals/tods/BarceloLLW12,DBLP:journals/jacm/LibkinMV16,DBLP:journals/mst/ReutterRV17}, together with extensive results on expressive power, complexity of query evaluation, and static analysis \cite{DBLP:journals/jcss/FreydenbergerS13,DBLP:conf/lics/0001BV17,DBLP:conf/icalp/BarceloF019,DBLP:conf/kr/FigueiraGKMNT20,DBLP:conf/pods/FigueiraR22,DBLP:conf/icdt/Figueira20,DBLP:conf/icdt/FeierGM24,DBLP:journals/lmcs/FigueiraM25,DBLP:journals/pacmmod/FigueiraMR25}. These foundational developments have also influenced language design in practice, informing standards and systems such as SPARQL \cite{SPARQL11}, G-CORE \cite{DBLP:conf/sigmod/AnglesABBFGLPPS18}, Cypher \cite{DBLP:conf/sigmod/FrancisGGLLMPRS18}, and, most recently, the ISO-standardized GQL \cite{DBLP:conf/icdt/FrancisGGLMMMPR23}. 

A complementary perspective on querying graph data has emerged from the success of {\em graph neural networks} (GNNs) \cite{4700287,Gil+2017,Hamilton+2017,KipfWelling2017}. Rather than selecting nodes or paths, GNNs compute numerical node embeddings via local {\em message passing}, combining linear aggregation with a (typically non-linear) activation function such as \relu, truncated \relu, \sign, or \step. Each GNN therefore defines a {\em numerical} node query and, after thresholding, also a {\em Boolean} one.
This computational pattern is captured by \mplang \cite{DBLP:conf/foiks/GeertsSB22},
a language influenced by modal logics for semiring-annotated databases
\cite{graedel-prov-modal}.
It is a declarative language rooted in linear-algebraic frameworks such as \matlang \cite{brijder2019expressive} and the tensor language \tl for analysing GNN expressiveness \cite{geerts2022expressiveness}. \mplang expresses GNNs through linear combination, neighbourhood aggregation, and an activation function $\sigma$, providing a clean, logic-oriented setting for reasoning about embedding-based graph queries. This aligns with recent efforts to characterize the expressive power of GNN architectures \cite{DBLP:conf/iclr/BarceloKM0RS20,DBLP:journals/theoretics/Grohe24,DBLP:conf/icalp/BenediktLMT24,DBLP:conf/nips/AhvonenHKL24,nunn2024logic,DBLP:journals/corr/abs-2505-11930,DBLP:journals/corr/abs-2508-06091,DBLP:journals/corr/abs-2506-13911}.

\paragraph{\bf Problems studied.}
Our goal is to characterize the expressive power of \mplang-style languages in terms of their {\em numerical} and {\em Boolean} queries (the former assign real values to nodes, the latter values in~$\{0,1\}$). As a baseline for understanding the role of activation functions, we start with \amplang, the affine fragment of \mplang \emph{without} activations. We analyse the queries definable in \amplang via the walks originating at the node under evaluation and compare their Boolean expressive power with that of modal logic (ML), a standard benchmark for GNN expressiveness \cite{DBLP:conf/iclr/BarceloKM0RS20,DBLP:conf/icalp/BenediktLMT24,horrocks}. We also study closure properties of \amplang under Boolean combinations.

We then consider the general case of \smplang{$\sigma$}, which extends \amplang with expressions of the form $\sigma(e)$ applying an activation function~$\sigma$ to the embedding computed by~$e$. Following standard distinctions, we treat {\em bounded} and {\em unbounded} activation functions separately \cite{DBLP:journals/theoretics/Grohe24,DBLP:conf/icalp/BenediktLMT24}. For bounded activations, we focus on \emph{eventually constant} functions \cite{DBLP:conf/icalp/BenediktLMT24}, a natural analogue of piecewise-linear and threshold operations such as truncated {\sf ReLU}, {\sf sign}, and {\sf bool}. For such~$\sigma$, we analyse \smplang{$\sigma$} again in terms of walks and examine whether different eventually constant~$\sigma$ differ in expressive power. For unbounded activations, such as \relu, we ask whether they yield strictly more expressive power than any eventually constant function—specifically, whether there exist queries definable in \smplang{\relub} that no \smplang{$\sigma$} with eventually constant~$\sigma$ can express.

\paragraph{\bf Our results.} We consider {\em embedded} graphs, where each node has an associated feature vector. A central case is that of {\em coloured} graphs, whose embeddings are one-hot encodings of finitely many colours. Most results are stated for coloured graphs, with a few exceptions that extend to general embeddings. Crucially, all inexpressibility results already hold in the coloured setting.

\begin{itemize}

\item \underline{No activation function.}
We begin by analysing \amplang, the fragment of \mplang without activation functions. 
Numerical queries definable in \amplang correspond to
affine combinations of
walk-counts and walk-summed features
(Theorem~\ref{amplang-normalform}).
Using this, we analyse how far
the Boolean queries expressible by \amplang go as a Boolean query
language and show that it is
not closed under Boolean combinations,
making it { incomparable} to Modal Logic (ML)
on coloured graphs (\Cref{prop:nonclosure}).
Since \amplang induces exactly the embeddings computed by GNNs with linear activations, this confirms that non-linearities are essential for capturing several meaningful graph properties, and in particular, that the ability of GNNs to match the expressive power of ML \cite{DBLP:conf/iclr/BarceloKM0RS20} crucially depends on them.

\item \underline{Bounded activation functions.}
We show that, under mild assumptions, any eventually constant activation function $\sigma$ yields the same expressive power for numerical queries in \smplang{$\sigma$} over coloured graphs (Theorem~\ref{thm:eventually-constant-activation-functions}),
improving a related result in \cite{DBLP:conf/icalp/BenediktLMT24}
on Boolean queries computed by GNNs without linear layers.
This shows that \smplang{$\sigma$} with any such eventually constant $\sigma$
is closed under Boolean operations. 
Our languages subsume the logics introduced by Benedikt et al.~\cite{DBLP:conf/icalp/BenediktLMT24} and Nunn et al.~\cite{nunn2024logic} for Boolean GNN expressivity with eventually constant activations but without linear layers.

\item \underline{Beyond bounded activation functions.} 
We then turn to unbounded non-linear activation functions and compare their
expressive power with that of arbitrary eventually constant ones. Our main
technical result is that, for numerical queries over coloured graphs, \smplang{\relub} is strictly
more expressive than \mplang with any eventually constant activation
(Theorem~\ref{thm:relu-vs-trrelu}). This is the most technically demanding part
of the paper and shows that GNNs using both {\sf ReLU} and linear layers surpass
those relying only on truncated {\sf ReLU} with linear layers. A related
separation was shown by Benedikt et al.~\cite{DBLP:conf/icalp/BenediktLMT24},
but only for GNNs without linear layers. Incorporating identity activations is
substantially more challenging, as linear layers interact delicately with
non-linear transformations and isolating their contribution is essential
\cite{sammy,DBLP:conf/nips/BarlagHSVV24}. Overall, unbounded activations like
{\sf ReLU} permit numerical computations that bounded ones cannot reproduce,
leading to different expressive capabilities.

 \end{itemize}

\paragraph{{\bf Types of embeddings and queries.}}
{
Previous work on uniform logical expressivity of GNNs has largely focused on coloured graphs (Boolean node features) and Boolean queries \cite{DBLP:conf/lics/Grohe21, DBLP:conf/iclr/BarceloKM0RS20, DBLP:conf/icalp/BenediktLMT24}. We follow this tradition, but also study numerical queries and, where possible, extend results to general $d$-embeddings. As a guiding principle, our separation/inexpressibility results are proved already in the most restrictive setting (Boolean queries on colourings), while our containment results are stated in the most general setting we can obtain (often numerical queries over numerical embeddings). In the Conclusion we overview our results in \Cref{tbl:summary}. Finally, although applied GNNs typically start from real-valued attributes, discretization provides a direct link to our setting: continuous features can be quantized into finitely many bins and encoded as colours. This means our negative results remain informative under richer inputs, whereas extending some of our positive collapse/equivalence results beyond colourings is an open question.
}

\paragraph{{\bf Our techniques.}}
{
Prior work has analysed GNNs through their distinguishing power
\cite{DBLP:conf/lics/Grohe21}, and also in non-uniform variants where the
model (or an additional readout) is allowed to exploit the size of the input
graph \cite{DBLP:conf/iclr/BarceloKM0RS20}. In contrast, we study \emph{uniform}
expressivity: a single expression/network architecture must work across all
graph sizes. This perspective calls for different techniques.}
The analysis of the expressive power of the numerical query languages studied in this paper requires techniques that are less commonly encountered in classical database theory, which typically deals with query languages
based on Boolean logic. Some of our results rely on traditional methods, e.g., establishing normal forms for expressions. Likewise, inexpressibility
of a query can still be shown by exhibiting pairs of indistinguishable graphs on which the query yields different outcomes. However, the indistinguishability requirement becomes stronger, as we now need to show that all expressions return the same numerical value (or, for Boolean queries, values with the same sign). 
To this end, our proofs draw on notions from analysis, including properties of the rationals (as opposed to real numbers), alternating Liouville constructions and rational approximation of reals,
limits, symmetric functions, and both simple and piecewise polynomial functions.

\paragraph{\bf Organization of the paper.}
We present basic notions in Section \ref{sec:prelims}, introduce \mplang and its connection to GNNs
in Section \ref{sec:mplang}. Then, in Section \ref{sec:amplang} we cover \amplang, and in Sections \ref{sec:bounded} and \ref{sec:relu} we address bounded and unbounded activations, respectively. We conclude in Section \ref{sec:final}  with remarks and open problems.
Full proofs that are omitted in the main body can be found in the appendix.

\section{Preliminaries} \label{sec:prelims}

\noindent
\paragraph{Basics.}
For $k \in \mathbb{N}$, we set
$[i,j] \coloneqq \{i,i+1,\ldots,j\}$
and $[k] \coloneqq \{1,\dots,k\}$.
A function $a:\R^k \to \R$
is \textit{affine} if 
$a(x_1,\ldots,x_k) \coloneqq a_0 + \sum_{i=1}^k a_ix_i$
for $a_0,\ldots,a_k \in \R$.
We define $\relu : \mathbb{R} \to \mathbb{R}$ by $\relu(x) = \max{\{0, x\}}$, and the {\em truncated} ReLU $\trrelu : \mathbb{R} \to \mathbb{R}$ by $\trrelu(x) = \max{\{0, \min{\{1, x\}}\}}$. 

\noindent
\paragraph{Graphs and embeddings.} 
A graph $G$ is a pair $(V,E)$, where $V$ is a finite set of nodes and 
$E \subseteq {\bigl\{ \{u,v\} \mid u,v \in V,\ u \neq v \bigr\}}$ is a set of undirected edges.
A \emph{walk} of length $n \ge 0$ in  
$G$ is a sequence of nodes 
$v_0, \dots, v_n$ such that ${\{v_k, v_{k+1}\}} \in E$, for all $k \in [0,n-1]$.  
A walk of length $0$ from a vertex $v$ is the trivial walk $(v)$.

For $d > 0$, a \emph{$d$-embedding} of a graph $G = (V,E)$ is a map $\gamma : V \to \R^d$. The pair $(G,\gamma)$ is called a {\em $d$-embedded} graph, and if $v$ is a node in $V$, then the tuple $(G,\gamma,v)$ is a {\em pointed} $d$-embedded graph.

\noindent
\paragraph{Numerical queries.} 
Our goal is to analyse the expressive power of query languages that compute and transform graph embeddings. A {\em numerical query} of type $d \to r$ maps a $d$-embedded graph $(G,\gamma)$ to an $r$-embedding $\gamma'$ of $G$. Since any such query can be decomposed into $r$ numerical queries of type $d \to 1$, we restrict attention to queries with 1-dimensional output.

\noindent
\paragraph{Boolean Queries.}
Following standard practice in the literature on the expressive power of graph 
embeddings computed by GNNs \cite{DBLP:conf/icalp/BenediktLMT24, DBLP:conf/iclr/BarceloKM0RS20}, we associate with every numerical query $Q$ of type 
$d \to 1$, for $d>0$, a corresponding \emph{Boolean} query~$Q_{\mathbb{B}}$ 
defined by
\[
Q_{\mathbb{B}}(G,\gamma)(v) \coloneqq 
\begin{cases}
1, & \text{if } Q(G,\gamma)(v) > 0,\\[2pt]
0, & \text{otherwise.}
\end{cases}
\]
Thus, a Boolean query is a numerical 
query of type $d \to 1$ whose output embedding 
assigns each node a value in $\{0,1\}$. This provides a  way to 
compare numerical query languages that compute and transform graph 
embeddings with purely Boolean ones (e.g., FO or ML, introduced below).

\paragraph{Equivalence of queries.} 
Queries $Q$ and $Q'$ are {\em numerically equivalent} on a class $C$ of embedded graphs if $Q(G,\gamma)=Q'(G,\gamma)$ for every $(G,\gamma)\in C$. Similarly, we say that $Q$ and $Q'$ are {\em Boolean equivalent} on $C$, if $Q_{\mathbb{B}}(G,\gamma)=Q'_{\mathbb{B}}(G,\gamma)$ for every $(G,\gamma)\in C$.

\paragraph{Comparisons of query languages.}
We compare query languages by containment of the numerical or Boolean queries they define on a class $C$ of embedded graphs. Thus two languages may be \emph{numerically equivalent} over $C$ (when they express the same numerical queries on $C$), or one may be \emph{Boolean strictly contained} in the other (when all its Boolean queries are expressible in the latter but not vice versa).

\section{The language \mplang and GNNs}
\label{sec:mplang}

\subsection{The language \smplang{$\Sigma$}} \label{subsec:mplang}
For any finite set $\Sigma$ of functions $\sigma: \R \to \R$, called \emph{activation functions},
we define the query language \textit{\smplang{$\Sigma$}}
over $d$-embeddings, for $d > 0$, by means of the following grammar: 
$$
e\ ::= \ 1 \, \mid \, P_i \, \mid \, ae \, \mid \, e + e \, \mid \, \mpDiamond e
\mid \, \sigma(e), \ \ \ \ \ \ \ \ \ \text{where $i \in [d]$,
$\sigma \in \Sigma$, and $a \in \R$.}
$$
Real constants $a$ in expressions are often referred to as coefficients.
{Moreover, $P_i$ can be understood as projecting on the $i$-th component.}

Each \smplang{$\sigma$} expression $e$ over $d$-embeddings defines a numerical 
query of type $d \to 1$, which is inductively defined as follows on a $d$-embedded graph $(G,\gamma)$:
\begin{itemize} 
\item If $e = 1$, then $e(G,\gamma)(v) \coloneqq  1$. 
\item If $e = P_i$, then $e(G,\gamma)(v) \coloneqq  \gamma(v)_i$, the $i$-th component of $\gamma(v)$.
\item If $e = a e_1$, then $e(G,\gamma)(v) \coloneqq  a \cdot {e_1}(G,\gamma)(v)$.
\item If $e = e_1 + e_2$, then 
$e(G,\gamma)(v) \coloneqq  {e_1}(G,\gamma)(v) + {e_2}(G,\gamma)(v)$.
\item If $e = \mpDiamond e_1$,
then $e(G,\gamma)(v)\coloneqq  \sum_{{\{u,v\} \in E}} {e_1}(G,\gamma)(u)$. 
\item If $e = \sigma(e_1)$, then $e(G,\gamma)(v) \coloneqq  \sigma\bigl({e_1}(G,\gamma)(v)\bigr)$.
\end{itemize}
{
For convenience, we write $e(G,\gamma,v) \coloneqq e(G,\gamma)(v)$ for pointed embedded graphs.
}
We use $\mpDiamond^i$ to denote $i$ consecutive $\mpDiamond$ symbols.
If $\Sigma$ contains only one function $\sigma$,
we write $\smplang{\sigma}$ instead of $\smplang{\{\sigma\}}$.
A $\smplang{\Sigma}$-expression is {\em rational} if all its coefficients are rational numbers; the set of such expressions is called {\em rational} $\smplang{\Sigma}$.

\subsection{Connections between \mplang and GNNs}

We now examine the connection between \mplang and graph neural networks (GNNs). 

\paragraph{GNNs.}
A {\em GNN layer} of type $d \to r$ is a tuple $L = (W_1, W_2, b, \sigma)$, where $W_1, W_2 \in \R^{r \times d}$, $b \in \R^r$, and $\sigma : \R \to \R$ is an activation function. Such a layer defines a numerical 
query of type $d \to r$ by mapping each $d$-embedded graph $(G,\gamma)$ to: 
\[
L(G,\gamma)(v)
 \coloneqq  
\sigma \biggl(
  W_1 \cdot \gamma(v)
   + 
  W_2 \cdot \sum_{{\{u,v\} \in E}} \gamma(u)
   + 
  b
\biggr),
\qquad\text{for each } v \in V.
\]
We abuse notation by writing $\sigma(x_1,\dots,x_r)$, for $(x_1,\dots,x_r)\in\R^r$, to denote $(\sigma(x_1),\dots,\sigma(x_r))$.

A {\em GNN} over $d$-embeddings is a sequence $\Scal = L_1,\ldots,L_n$ of layers, where $L_1$ has type $d \to s_1$, each $L_i$ (for $i \in [2,n-1]$) has type $s_{i-1} \to s_i$, and $L_n$ has type $s_{n-1} \to 1$ (with $s_1=1$ if $n=1$). This GNN defines a numerical query of type $d \to 1$ on $(G,\gamma)$ by iteratively applying the layers, i.e., 
\[
 \Scal(G,\gamma)  \ =  \  (L_n \circ \cdots \circ L_1)(G,\gamma).
\]

Let $\Sigma$ be a finite set of functions $\sigma : \R \to \R$. We denote by $\Sigma$-GNNs the class of GNNs $ \Scal$, such that each GNN layer in $ \Scal$ is of the form $(W_1,W_2,b,\sigma)$ for some $\sigma \in \Sigma$. 
If $\Sigma = \{\sigma\}$, then we simply write $\sigma$-GNNs instead of $\{\sigma\}$-GNNs. 

\paragraph{The connection}
As implicit in \cite{geerts2022expressiveness}, there is a clean correspondence between queries expressible in $\smplang{\sigma}$ and those definable by $\sigma$-GNNs augmented with identity (\textsf{id}) layers.

\begin{proposition}{\em \cite{geerts2022expressiveness}} 
\label{prop:gnn-is-mplang}
    Let $\Sigma$ be a finite set of functions
    $\sigma : \mathbb{R} \to \mathbb{R}$ and $d>0$. Then $\smplang{\Sigma}$ and $(\Sigma \cup \{{\sf id}\})$-GNNs, both over $d$-embeddings, are numerically 
    equivalent on all $d$-embedded graphs.
\end{proposition}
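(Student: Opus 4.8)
The plan is to prove the two directions of the claimed equivalence separately, each by induction on syntax. For the inclusion of $(\Sigma\cup\{\textsf{id}\})$-GNN queries in \smplang{$\Sigma$}, I would argue layer by layer. Given a GNN $\Scal = L_1,\dots,L_n$ over $d$-embeddings, maintain the invariant that after the first $k$ layers every coordinate of the current $s_k$-embedding is computed, uniformly over all $d$-embedded graphs $(G,\gamma)$, by some \smplang{$\Sigma$} expression over the original $d$-embedding; for $k=0$ this holds with $P_1,\dots,P_d$. For the inductive step, if $L_{k+1}=(W_1,W_2,b,\sigma)$ and the coordinates after $k$ layers are given by expressions $f_1,\dots,f_{s_k}$, then coordinate $j$ after $L_{k+1}$ is computed by
\[
\sigma\bigl(\textstyle\sum_{\ell}(W_1)_{j\ell}\,f_\ell \;+\; \sum_{\ell}(W_2)_{j\ell}\,\Diamond f_\ell \;+\; b_j\bigr),
\]
which is again an \smplang{$\Sigma$} expression: one uses $\Diamond$, scalar multiplication and $+$ as in the grammar, realizes the constant term $b_j$ as the expression $b_j\cdot 1$, and omits the outer $\sigma$ when $\sigma=\textsf{id}$; this is well-formed because $\Diamond$ distributes over sums and scalar multiples, so the $\ell$-th coordinate of the aggregated neighbour embedding is exactly $\Diamond f_\ell$. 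Taking $k=n$ yields a single expression numerically equal to $\Scal(G,\gamma)$; this direction is routine.

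For the converse, I would induct on the structure of an \smplang{$\Sigma$} expression $e$, proving the strengthened statement that there is a $(\Sigma\cup\{\textsf{id}\})$-GNN $\Scal_e$ of type $d\to(d{+}1)$ whose output at each node $v$ has first $d$ coordinates equal to $\gamma(v)$ — a verbatim copy of the input features, carried along precisely so that the $+$-case goes through — and last coordinate equal to $e(G,\gamma)(v)$. The base cases $e=1$ and $e=P_i$ need one \textsf{id}-layer. For $e=ae_1$ and $e=\Diamond e_1$, append to $\Scal_{e_1}$ a single \textsf{id}-layer that rescales, respectively neighbour-aggregates via $W_2$, the designated value coordinate while copying the input block through a block-identity $W_1$. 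For $e=\sigma(e_1)$, append a layer that applies $\sigma\in\Sigma$ to the value coordinate. For $e=e_1+e_2$, run $\Scal_{e_1}$, re-route its preserved input block into a block-diagonal copy of $\Scal_{e_2}$, and add the two value coordinates with a final \textsf{id}-layer. All routing, duplication and width-padding is done with \textsf{id}-layers carrying block-identity weights; since every step preserves exact real values, the resulting equivalence is numerical and not merely Boolean.

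I expect the only real difficulty to lie in the cases $e=\sigma(e_1)$ and $e=e_1+e_2$, precisely because a genuine $\sigma$-layer applies $\sigma$ to \emph{every} one of its output coordinates, so a naive implementation would also corrupt the preserved input block (and any partial sums that must survive to a later stage). This is the delicate point, and it is exactly where having identity layers available is essential: coordinates that must cross a $\sigma$-stage unchanged have to be kept off the $\sigma$-layer (formally, by letting \textsf{id} act on the neurons that are merely carried, alongside the neurons where the real activation fires), and, when $\Sigma$ contains \relu, one can alternatively push a value through a \relu-stage using the decomposition $x=\relub(x)-\relub(-x)$. Once the input-preserving invariant is pinned down, the rest — bookkeeping the widths, the routing matrices, and, for $e_1+e_2$, the synchronisation of $\Scal_{e_1}$ with $\Scal_{e_2}$ — is purely linear-algebraic, and the whole construction uses only finitely many layers.
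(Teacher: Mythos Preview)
The paper does not supply its own proof of this proposition; it is imported from \cite{geerts2022expressiveness}. On the merits: your GNN $\to$ \smplang{$\Sigma$} direction is correct. For the converse, you correctly isolate the crux---a layer $(W_1,W_2,b,\sigma)$ applies $\sigma$ to \emph{every} output coordinate, so the ``preserved input block'' is destroyed at the $\sigma(e_1)$ step---but your proposed fix does not go through under the paper's definitions. You suggest ``letting \textsf{id} act on the neurons that are merely carried, alongside the neurons where the real activation fires'': this would require per-coordinate activations, which the model does not allow (each layer carries a single $\sigma$). The \relu trick $x=\relub(x)-\relub(-x)$ does let one tunnel a value through a \relu-layer, but that mechanism is available only when the identity is affinely recoverable from translates of $\sigma$; this is exactly the extra hypothesis singled out in the Remark following the proposition, not a generic fact about all $\sigma$.

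Indeed, under the definitions as written the statement appears to fail for arbitrary $\Sigma$ on general $d$-embeddings. Take $\Sigma=\{\boolb\}$ and $d=2$: the expression $\boolb(P_1)+P_2$ lies in \smplang{$\boolb$}, but on one-vertex graphs a $\{\boolb,\textsf{id}\}$-GNN is just an MLP alternating affine maps with coordinate-wise $\boolb$; after any $\boolb$-layer the state lies in a finite set and stays finite, so the final output is either affine in $(p_1,p_2)$ (if no $\boolb$-layer is used) or finitely-valued, and $\boolb(p_1)+p_2$ is neither. So the cited equivalence presumably rests on a slightly more permissive GNN model (e.g.\ per-coordinate activations), which would validate precisely the move you are trying to make. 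With the definitions given here, however, your induction genuinely breaks at the $\sigma(e_1)$ step, and no amount of width-padding or routing repairs it.
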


\begin{remark}
    Further inspecting the proof in \cite{geerts2022expressiveness}, one obtains an additional correspondence: if $\sigma$ is idempotent and the identity function {\sf id} can be expressed from $\sigma$ using affine transformations, then the queries definable by $\sigma$-GNNs over $d$-embeddings coincide with those definable in $\smplang{\sigma}$ over $d$-embeddings. Notably, \relu satisfies this property: ${\sf id}(x) = {\sf ReLU}(x) - {\sf ReLU}(-x)$. \qed
\end{remark}

\subsection{Traditional query languages on graphs.}

Classical query languages for graphs operate on node-coloured
structures and define Boolean queries by selecting nodes. 
We
briefly introduce these languages and compare them with GNNs.

\paragraph{Coloured graphs.}
A \emph{$k$-colouring} of a graph $G=(V,E)$ is a $k$-embedding
$c : V \to \{0,1\}^k$, where $c_i(v)=1$ indicates that $v$ has colour~$i$.
A \emph{$k$-coloured} graph is a pair $(G,c)$, and a \emph{pointed} one is
$(G,c,v)$ with $v\in V$. Since colourings are special embeddings,
\mplang expressions and GNNs apply directly to coloured graphs. For
readability, we use colour names (e.g.\ \textsf{red}, \textsf{blue}) in place of
$P_1,P_2$.

\paragraph{Traditional query languages.}
First-Order Logic (FO) and Modal Logic (ML) are standard formalisms for
Boolean queries on coloured graphs. FO is interpreted over $k$-coloured graphs
with adjacency $E$ and unary predicates $P_i$, for $i \in [k]$,  indicating colours.
ML evaluates formulas at nodes using propositional symbols $P_i$, Boolean
connectives, and the modality $\mlDiamond$:
\[
(\mlDiamond \varphi)(G,c)(v)=1
\quad\text{iff}\quad
\exists u\,( \{u,v\}\in E \land \varphi(G,c)(u)=1 ).
\]
Graded Modal Logic (GML) extends ML with counting modalities
$\mlDiamond^{\ge n}$, where
\[
(\mlDiamond^{\ge n}\varphi)(G,c)(v)=1
\quad\text{iff}\quad
|\{u : \{u,v\}\in E,\ \varphi(G,c)(u)=1\}| \ge n.
\]
Boolean GML queries over coloured graphs are also expressible in FO, but not vice versa. 

\paragraph{Boolean expressiveness of GNNs.}
GNNs subsume GML for Boolean queries on coloured graphs.

\begin{theorem} 
{\em \cite{DBLP:conf/iclr/BarceloKM0RS20,DBLP:conf/icalp/BenediktLMT24}} 
\label{thm:well-known}
Fix $k > 1$. We have that GML is Boolean strictly 
contained in $\smplang{\sigma}$, for $\sigma \in \{\relu,\trrelu\}$, over the class of $k$-coloured graphs. 
\end{theorem}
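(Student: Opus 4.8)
The plan is to prove the two halves of the claim separately: \textbf{(a)} every GML Boolean query over $k$-coloured graphs equals $e_{\mathbb B}$ for some $\smplang{\sigma}$ expression $e$, and \textbf{(b)} some $\smplang{\sigma}$ Boolean query is not GML-definable; together these give strict Boolean containment. Since $\trrelu(x) = \relu(x) + (-1)\,\relu\bigl(x+(-1)\bigr)$ for every $x\in\R$, any $\smplang{\trrelu}$ expression can be rewritten as an $\smplang{\relu}$ expression computing the same query, so it suffices to establish (a) for $\sigma=\trrelu$; and the witness in (b) will use no activation function at all, hence lies in $\smplang{\sigma}$ for both $\sigma\in\{\relu,\trrelu\}$.

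For (a) I would prove, by induction on the structure of a GML formula $\varphi$, that there is an $\smplang{\trrelu}$ expression $e_\varphi$ whose value on any $k$-coloured $(G,c)$ at any node $v$ lies in $\{0,1\}$ and equals $1$ iff $(G,c),v\models\varphi$. Granting this, $e_\varphi(G,c)(v)>0$ iff $e_\varphi(G,c)(v)=1$ iff $\varphi$ holds at $v$, so $(e_\varphi)_{\mathbb B}=\varphi$. The atomic case is immediate since $P_i$ is already $\{0,1\}$-valued on coloured graphs, and $\top$ is $1$; negation is $1+(-1)\,e_\psi$; conjunction is $\trrelu\bigl(e_{\psi_1}+e_{\psi_2}+(-1)\bigr)$ and disjunction $\trrelu(e_{\psi_1}+e_{\psi_2})$, which are correct because $\trrelu$ clamps the $\{0,1\}$-valued sums to the intended Boolean value. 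The crucial case is the counting modality $\Diamond^{\ge n}\psi$: since $e_\psi$ is $\{0,1\}$-valued, $(\Diamond e_\psi)(G,c)(v)$ is \emph{exactly} the number of neighbours of $v$ satisfying $\psi$, so $\trrelu\bigl(\Diamond e_\psi+(1-n)\bigr)$ returns $1$ precisely when that number is at least $n$, maintaining the invariant (for $n=0$ it returns the constant $1$, matching $\Diamond^{\ge 0}\psi\equiv\top$). This proves (a) for $\trrelu$, hence, via the rewriting above, for $\relu$.

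For (b) I would take the activation-free expression $e\coloneqq\Diamond P_1+(-1)\,\Diamond P_2$; its Boolean query $e_{\mathbb B}$ holds at $v$ iff $v$ has strictly more colour-$1$ neighbours than colour-$2$ neighbours. Assume toward a contradiction that a GML formula $\psi$ defines $e_{\mathbb B}$, and let $N$ exceed every counting number occurring in $\psi$. On a star (a depth-$1$ coloured tree) every $\Diamond$-subformula collapses to a constant at a leaf, so the truth of $\psi$ at the centre is a Boolean combination of the centre's colour atoms and conditions of the form ``the centre has $\ge t$ leaves with colour in $S$'' with $t\le N$ and $S\subseteq[k]$; in particular it depends only on the centre's colour and on the leaf-colour counts \emph{capped at $N$}. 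Now let $(G,v)$ be the star whose centre $v$ has $N$ colour-$1$ leaves and $N{+}1$ colour-$2$ leaves, and $(G',v')$ the star whose centre $v'$ has $N{+}1$ colour-$1$ leaves and $N$ colour-$2$ leaves, with $v$ and $v'$ given the same colour. Every leaf-colour count on both sides is at least $N$, so the two capped profiles coincide; hence $\psi$ takes the same value at $v$ and $v'$, whereas $e_{\mathbb B}(G,v)=0\ne 1=e_{\mathbb B}(G',v')$ — a contradiction. This is the only step that uses $k>1$, namely to have two colours available.

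The inductive translation and the capped-profile bookkeeping are routine; the single load-bearing idea is the choice of witness in (b). Conceptually, the affine neighbourhood aggregation of $\smplang{\sigma}$ can compare two \emph{unbounded} neighbour counts, while any fixed GML formula is insensitive to exact neighbour counts once they exceed its largest counting threshold — and it is precisely this mismatch that yields the separation.
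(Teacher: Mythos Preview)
The paper states this theorem purely as a citation of \cite{DBLP:conf/iclr/BarceloKM0RS20,DBLP:conf/icalp/BenediktLMT24} and gives no proof of its own, so your argument is not competing with one in the paper. Your part~(a) is the standard inductive translation used in those references, and the identity $\trrelu(x)=\relu(x)-\relu(x-1)$ is exactly how one passes from $\trrelu$ to $\relu$. For part~(b) the paper does take a slightly different route in Example~\ref{ex:beyondFO} with the same witness $\Diamond{\sf red}-\Diamond{\sf blue}$: rather than arguing directly against GML, it observes that the Boolean query is not FO-definable (a classical fact about majority-type comparisons) and concludes via $\mathrm{GML}\subseteq\mathrm{FO}$. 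Your star argument is more self-contained but also more work; one remark on rigour: the sentence ``every $\Diamond$-subformula collapses to a constant at a leaf, so $\psi$ at the centre depends only on capped leaf-colour counts'' tacitly assumes that this unfolding yields the \emph{same} dependence for both stars $G$ and $G'$, whereas the ``constant'' at a leaf depends on the centre, which depends on the leaves again. That does unwind correctly, but the clean justification is a simultaneous induction on $\psi$ showing agreement at the two centres \emph{and} at colour-$i$ leaves of $G$ and $G'$ (the $\Diamond^{\ge t}$-step then only needs that each centre has at least $N$ leaves of every relevant colour), rather than the single-graph unfolding you sketch.
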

\section{\amplang: No Activation Function}
\label{sec:amplang} 

To set a baseline for analysing the role of activation functions in the expressivity of \mplang, we first look at the purely \emph{affine} fragment, i.e., \textit{without} activation functions:
\amplang.
Since it is possible to write any affine function in \mplang without the use
of activation functions,
intuitively $\smplang{\sf id}$
is equivalent to
$\amplang$ (where ${\sf A}$ is the set of all affine functions),
hence the name.
By
\Cref{prop:gnn-is-mplang},  
this fragment captures GNNs that use only the identity activation {\sf id}. As we observe, even this seemingly weak setting can express Boolean queries beyond FO.

\begin{example}\label{ex:beyondFO}
Consider coloured graphs with node colours {\sf red} and {\sf blue}.  
The \amplang expression $(\mpDiamond {\sf red} - \mpDiamond {\sf blue})$ defines the
Boolean query that holds at a node iff it has more red than blue neighbours.
This query is not definable in FO, and thus neither in ML.
 \qed   
\end{example}

This illustrates that the expressive power of \amplang is subtler than it might initially appear.
{ 
In fact, a useful way to think about the Boolean fragment of \amplang
is to consider numerical \amplang expressions with one final bounded activation
that implements thresholding.}

\subsection{Numerical expressivity}
We show that \amplang{} can only compute numerical functions determined by walk counts and walk-summed features (over the class of all embedded graphs).  
To this aim, we derive a normal form of \amplang{} expressions in terms of walk counts
using the notion of  \textit{$\mpDiamond$-depth} of an expression:
\begin{itemize} 
\item 
The $\mpDiamond$-depth of the expressions
$1$ and $P_i$ are 0. 
\item 
Given two $\smplang{\Sigma}$ expressions $e,e'$
with $\mpDiamond$-depths $n$ and $n'$, respectively, 
the $\mpDiamond$-depth of $ae$ and $\sigma(e)$ is $n$,
the $\mpDiamond$-depth of $e+e'$ is $\text{max}\{n,n'\}$
and the $\mpDiamond$-depth of $\mpDiamond e$ is $n+1$.
\end{itemize}
Of particular interest for the normal form are expressions of the form 
$\mpDiamond^i 1$ and $\mpDiamond^i P_j$. When evaluated on a pointed embedded graph $(G,\gamma,v)$, these expressions correspond, respectively, 
to the number of walks of length $i$ starting at $v$, and to the sum, over all 
walks of length $i$ starting at $v$, of the $j$-th coordinate of the embedding 
at the endpoint of each walk. 
The normal form below shows that 
\textsf{A-MPLang} can extract information only through a simple form of walk statistics.

\begin{restatable}{theorem}{amplangnormalform}\label{amplang-normalform}
Let $e$ be an \amplang-expression over $d$-embeddings of $\mpDiamond$-depth $n$. Then, over the class of $d$-embedded graphs, $e$ is numerically 
equivalent to an expression of the form
\[
\sum_{i=0}^n \bigl( c_i\, \mpdDiamond^i 1 + \sum_{j=1}^{d} c_{i,j}\, \mpdDiamond^i P_j \bigr).
\]
\end{restatable}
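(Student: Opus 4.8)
The plan is to prove the statement by structural induction on the \amplang-expression $e$, showing along the way that the class of expressions of the displayed normal form is closed under each syntactic construct of \amplang. Concretely, for every \amplang-expression $e$ of $\Diamond$-depth $n$ I will exhibit coefficients $c_i, c_{i,j} \in \R$ ($0 \le i \le n$, $1 \le j \le d$) such that $e$ is numerically equivalent to $\sum_{i=0}^n\bigl(c_i\,\Diamond^i 1 + \sum_{j=1}^d c_{i,j}\,\Diamond^i P_j\bigr)$ on every $d$-embedded graph. The base cases are immediate: $e = 1$ has $\Diamond$-depth $0$ and equals $\Diamond^0 1$ (take $c_0 = 1$, all other coefficients $0$), and $e = P_j$ equals $\Diamond^0 P_j$.

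For the inductive step I would treat the three remaining constructs. If $e = a e_1$ and, by the induction hypothesis, $e_1$ of $\Diamond$-depth $n$ is numerically equivalent to a normal form with coefficients $c_i, c_{i,j}$, then $e$ is numerically equivalent to the same expression with every coefficient multiplied by $a$, and its $\Diamond$-depth is still $n$. If $e = e_1 + e_2$ with $e_1,e_2$ of $\Diamond$-depths $n_1,n_2$ and normal forms with coefficients $c^{(1)}, c^{(2)}$, then $e$ is numerically equivalent to the coefficient-wise sum of the two normal forms after padding the shorter one with zero coefficients up to $n = \max\{n_1,n_2\}$, which is exactly the $\Diamond$-depth of $e$. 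The only case needing a small observation is $e = \Diamond e_1$: the semantics of $\Diamond$ is \emph{linear}, i.e.\ $\Diamond(e' + e'')(G,\gamma)(v) = \Diamond e'(G,\gamma)(v) + \Diamond e''(G,\gamma)(v)$ and $\Diamond(a e')(G,\gamma)(v) = a\,\Diamond e'(G,\gamma)(v)$ for every $(G,\gamma)$ and $v$, both immediate from the defining sum $\sum_{(u,v)\in E}(\cdot)$. Hence, writing the normal form of $e_1$ (of $\Diamond$-depth $n$) as above, $\Diamond e_1$ is numerically equivalent to $\sum_{i=0}^n\bigl(c_i\,\Diamond^{i+1}1 + \sum_j c_{i,j}\,\Diamond^{i+1}P_j\bigr)$; reindexing $i \mapsto i+1$ rewrites this as an expression of the required form ranging over $i \in [1, n+1]$ with zero coefficients at $i = 0$, and since the $\Diamond$-depth of $\Diamond e_1$ is $n+1$ this matches the claimed summation bound.

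There is no substantial obstacle here: the argument is a routine induction, and the only point requiring care is the bookkeeping of $\Diamond$-depths so that the summation bound in the normal form coincides exactly with the $\Diamond$-depth of $e$ — this is what forces the zero-padding in the $e_1 + e_2$ case and the reindexing in the $\Diamond$ case. (If one reads \amplang as the fragment $\smplang{\sf A}$ allowing affine activations $\sigma(x) = a_0 + a_1 x$, the extra case $e = \sigma(e_1)$ reduces to the ones above, since $\sigma(e_1)$ is numerically equivalent to $a_1\cdot(\text{normal form of }e_1) + a_0\,\Diamond^0 1$ and has the same $\Diamond$-depth as $e_1$.) The intended reading of the resulting expression — $\Diamond^i 1$ counts the walks of length $i$ from $v$, and $\Diamond^i P_j$ sums the $j$-th feature over the endpoints of such walks — is noted before the statement and is not needed for the proof, but it is what makes the normal form meaningful as a walk-statistics description of \amplang.
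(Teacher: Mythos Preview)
Your proposal is correct and follows essentially the same structural-induction argument as the paper's proof: identical base cases, coefficient-wise treatment of scalar multiplication and addition (with zero-padding), and the $\Diamond$ case handled by linearity of the aggregation operator followed by an index shift. The paper spells out the $\Diamond$ step as an explicit chain of sum manipulations rather than invoking linearity abstractly, but the content is the same.
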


This normal form can be proven by induction on the structure of \amplang{} expressions. 
Apart from delineating what type of graph structural information \amplang can extract, \Cref{amplang-normalform} can also be used to show that certain numerical functions cannot be expressed in \amplang.
In particular, \Cref{amplang-normalform} implies the following equivalence.  

\begin{corollary}\label{thm:walk-equivalence-corollary}
The following are equivalent for pointed $d$-embedded graphs
$(G_1,\gamma_1,v_1)$
and $(G_2, \gamma_2, v_2)$:
\begin{itemize}
    \item $e(G_1,\gamma_1)(v_1) = e(G_2,\gamma_2)(v_2)$, 
    for every \textsf{A-MPLang} expression $e$ of $\mpDiamond$-depth $n$ over $d$-embeddings. 
    \item $(\mpDiamond^i 1)(G_1,\gamma_1)(v_1) = (\mpDiamond^i 1)(G_2,\gamma_2)(v_2)$
    and
    $(\mpDiamond^i P_j)(G_1,\gamma_1)(v_1) = (\mpDiamond^i P_j)(G_2,\gamma_2)(v_2)$, for all $i \in [n]$ and $j \in [d]$. 
\end{itemize}
\end{corollary}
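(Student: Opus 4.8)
The plan is to read the equivalence off directly from the normal form in \Cref{amplang-normalform}: all the real work is in that theorem, and the corollary is then a routine deduction in two directions. For the direction from the first item to the second, I would observe that each atomic expression $\Diamond^i 1$ (with $0 \le i \le n$) and $\Diamond^i P_j$ (with $0 \le i \le n$ and $j \in [d]$) is itself an \amplang-expression over $d$-embeddings whose $\Diamond$-depth is $i \le n$; applying the first item to these expressions thus immediately yields the equalities asserted in the second item. (If one reads ``$\Diamond$-depth $n$'' as ``depth exactly $n$'', one first pads such an expression: adding $\Diamond^n 1 + (-1)\,\Diamond^n 1$ leaves the value unchanged but makes the $\Diamond$-depth equal to $n$.)

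For the converse, I would take an arbitrary \amplang-expression $e$ of $\Diamond$-depth $n$ over $d$-embeddings. By \Cref{amplang-normalform}, $e$ is numerically equivalent over all $d$-embedded graphs to a fixed expression
\[
\sum_{i=0}^n \Bigl( c_i\, \Diamond^i 1 + \sum_{j=1}^{d} c_{i,j}\, \Diamond^i P_j \Bigr),
\]
whose coefficients $c_i, c_{i,j} \in \R$ depend only on $e$ and not on the input graph. Evaluating this expression at $(G_1,\gamma_1,v_1)$ and at $(G_2,\gamma_2,v_2)$ and invoking the second item, every summand $\Diamond^i 1$ and $\Diamond^i P_j$ contributes the same value in both evaluations, while the coefficients are identical; hence the two affine combinations agree, and therefore $e(G_1,\gamma_1)(v_1) = e(G_2,\gamma_2)(v_2)$.

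I do not expect a genuine obstacle here: the difficulty of the statement lies entirely in \Cref{amplang-normalform}, and everything above is immediate once that theorem is available. The only points requiring care are formal — reconciling ``$\Diamond$-depth $n$'' with ``$\Diamond$-depth at most $n$'' via the padding remark above, and ensuring the index $i$ in the second item ranges over $[0,n]$: the term $\Diamond^0 1$ is the constant $1$ and is vacuous, but $\Diamond^0 P_j = P_j$ records the feature vector at the point of evaluation itself and must be included for the equivalence to hold.
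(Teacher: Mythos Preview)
Your proposal is correct and is exactly the deduction the paper has in mind: the paper gives no explicit proof, merely stating that the corollary follows from \Cref{amplang-normalform}, and your two-direction argument is the routine unpacking of that claim. Your remark that the index $i$ in the second item must include $0$ (so that $P_j = \Diamond^0 P_j$ is covered) is in fact a necessary correction to the corollary as stated---otherwise two isolated nodes with different features would satisfy the second item but not the first---and your padding observation is likewise more careful than the paper.
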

We refer to the { condition in the second item} by saying that $(G_1,v_1,\gamma_1)$ and $(G_2,v_2,\gamma_2)$ are \emph{$n$-walk equivalent}. Moreover, they are called \emph{walk equivalent} if they are $n$-walk equivalent for all $n \in \N$.
Hence, to show that a numerical query $f$ is not expressible in \amplang, 
we need to find two walk-equivalent 
pointed embedded graphs $(G_1,\gamma_1,v_1)$ and $(G_2,\gamma_2,v_2)$ with $f(G_1,\gamma_1)(v_1) \neq f(G_2,\gamma_2)(v_2)$. 

{
By \Cref{amplang-normalform}, every \amplang-expression is a linear combination of
walk statistics of the form $\mpDiamond^i 1$ and $\mpDiamond^i P_j$, and hence can
inspect the colours/features only at the \emph{endpoints} of walks of length $i$ 
from $v$. The next example exploits this: the query depends on a property of an
\emph{intermediate} node, which is
invisible to endpoint-only walk sums and therefore inexpressible in \amplang.
}

\begin{example}\label{ex:5paths}
Consider the path $P_5$ on five nodes $v_0,v_1,v_2,v_3 ,v_4$ with undirected edges { $\bigl\{\{v_0,v_1\},\allowbreak \{v_1,v_2\},\{v_2,v_3\},\{v_3,v_4\}\bigr\}$}, and let the distinguished node be $v_1$.
We consider two $2$-colourings (with colours {\sf red} and {\sf blue}) of this graph:
\begin{itemize}
    \item Colouring $c$ colours the nodes $v_0$ and $v_4$ {\sf blue}, and
    $v_1,v_2,v_3$ {\sf red}, that is, \raisebox{-0.5ex}{\includegraphics[height=3ex]{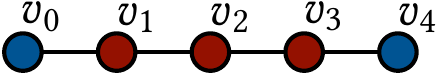}}.
    \item Colouring $c'$ colours $v_2$ {\sf blue}, and
    $v_0,v_1,v_3,v_4$ {\sf red}, that is,  \raisebox{-0.5ex}{\includegraphics[height=3ex]{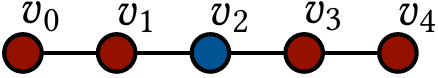}}.
\end{itemize}
We define a numerical 
query $f$ which counts the number of {\sf blue} neighbours of a node which have at least two red neighbours.
We have $f(P_5,c)(v_1)=0$ and $f(P_5,c')(v_1)=1$. Yet, $(P_5,v_1,c)$ and $(P_5,v_1,c')$ are 
walk-equivalent. {
Indeed, for $i\ge 0$ and $t\in\{0,1,2,3,4\}$ let $\mathsf{w}_i(t)$ be the number of walks of length $i$
from $v_1$ to $v_t$ in $P_5$. We note that
$\mathsf{w}_{i+1}(t)=\sum_{\{v_u,v_t\}\in E}\mathsf{w}_i(u)$.
For all $i\ge 0$, we have
$$
\mathsf{w}_i(2)=\mathsf{w}_i(0)+\mathsf{w}_i(4).
$$
For $i=0$ this is immediate and for $i>0$, using the adjacency of $P_5$ we have
$
\mathsf{w}_{i+1}(2)=\mathsf{w}_i(1)+\mathsf{w}_i(3)$,
and since $v_0$ (resp.\ $v_4$) has the unique neighbor $v_1$ (resp.\ $v_3$),
$\mathsf{w}_{i+1}(0)=\mathsf{w}_i(1)$ and  $\mathsf{w}_{i+1}(4)=\mathsf{w}_i(3)$.
Thus $\mathsf{w}_{i+1}(2)=\mathsf{w}_{i+1}(0)+\mathsf{w}_{i+1}(4)$, as desired. Now, for all $i$, $(\Diamond^i 1)(P_5,c)(v_1)=(\Diamond^i 1)(P_5,c')(v_1)$ since the underlying pointed graph is the same.
Moreover,
\[
(\Diamond^i{\sf blue})(P_5,c)(v_1)=\mathsf{w}_i(0)+\mathsf{w}_i(4),
\qquad
(\Diamond^i{\sf blue})(P_5,c')(v_1)=\mathsf{w}_i(2),
\]
which agree, as we have just shown. Finally ${\sf red}=1-{\sf blue}$ holds pointwise in both colourings,
so $\Diamond^i{\sf red}$ also agrees. Hence the two pointed structures are walk-equivalent.
}
Hence, $f$ is not expressible in \amplang{}  (over 2-coloured graphs). \qed
\end{example}

This example leverages that \amplang can only inspect the colours of terminal nodes along walks,
and computing the query relies on a property
of the middle node in walks of length 2.

\subsection{Boolean expressivity}
We know from \Cref{ex:beyondFO} that \amplang{} can express Boolean queries beyond FO. We now ask whether it forms a reasonable Boolean query language, which would at the very least require closure under the Boolean operations.
In the following,
we refer to the class of Boolean queries
expressible in (rational) $\amplang$
as (rational) $\amplang_\mathbb{B}$.
Our first result is positive.

\begin{proposition}\label{amplang-negation-closure}
    Rational $\amplang_\mathbb{B}$ on $k$-coloured graphs
    is closed under negation.
\end{proposition}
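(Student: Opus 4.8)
The plan is to first see why the naive choice fails and then repair it. For a rational \amplang-expression $e$, the obvious candidate for a ``negation'' is $-e$, but the Boolean query of $-e$ holds exactly at nodes where $e(G,c)(v) < 0$, which differs from the negation of $e$'s Boolean query precisely at nodes where $e$ evaluates to $0$. The fix exploits a discreteness phenomenon that is available for \emph{rational} expressions over \emph{coloured} graphs: the output of $e$ can only hit a fixed arithmetic progression, so there is room to shift.

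\textbf{Step 1 (discreteness).} I would show that for every rational \amplang-expression $e$ there is a positive integer $N_e$, depending only on $e$, such that $e(G,c)(v)\in\tfrac{1}{N_e}\Z$ for every $k$-coloured graph $(G,c)$ and every node $v$. This is a routine structural induction: the atoms $1$ and $P_i$ always evaluate to $0$ or $1$, hence to integers; $\Diamond$ leaves the denominator unchanged, since $(\Diamond e_1)(G,c)(v)$ is a finite sum of values of $e_1$; for $e_1+e_2$ one takes $N_{e_1+e_2}=\operatorname{lcm}(N_{e_1},N_{e_2})$; and for $a\,e_1$ with $a=p/q$ in lowest terms one takes $N_{a e_1}=q\,N_{e_1}$. (Alternatively, invoke \Cref{amplang-normalform}: its structural-induction proof preserves rationality of the coefficients, so $e$ is numerically equivalent to $\sum_{i=0}^{n}\bigl(c_i\,\Diamond^i 1+\sum_{j=1}^{d}c_{i,j}\,\Diamond^i P_j\bigr)$ with rational $c_i,c_{i,j}$; on coloured graphs each $\Diamond^i 1$ and $\Diamond^i P_j$ evaluates to a non-negative integer, so $N_e$ may be taken to be any common denominator of the finitely many $c_i,c_{i,j}$.)

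\textbf{Step 2 (shifting).} Given $e$ and $N\coloneqq N_e$ from Step~1, define the rational \amplang-expression
\[
e' \ \coloneqq\ \tfrac{1}{2N}\cdot 1 \ +\ (-1)\cdot e ,
\]
so that $e'(G,c)(v)=\tfrac{1}{2N}-e(G,c)(v)$. If $e(G,c)(v)\le 0$, then $e'(G,c)(v)\ge\tfrac{1}{2N}>0$. If $e(G,c)(v)>0$, then, since $e(G,c)(v)\in\tfrac1N\Z$, in fact $e(G,c)(v)\ge\tfrac1N$, whence $e'(G,c)(v)\le\tfrac{1}{2N}-\tfrac1N=-\tfrac{1}{2N}<0$. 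Hence $e'_{\mathbb B}(G,c)(v)=1$ iff $e(G,c)(v)\le 0$ iff $e_{\mathbb B}(G,c)(v)=0$, i.e., $e'_{\mathbb B}$ is the negation of $e_{\mathbb B}$, as required.

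The only real content lies in Step~1; Step~2 is then immediate. The conceptual point, and where the hypotheses are genuinely used, is that rational coefficients together with the $\{0,1\}$-valued features of coloured graphs confine every output value to a fixed progression $\tfrac1N\Z$, leaving a uniform gap of width at least $\tfrac1N$ on either side of $0$; this gap is exactly what lets a single additive shift separate ``$\le 0$'' from ``$>0$''. Over general embeddings, or with irrational coefficients, the discreteness is lost (for instance $\Diamond P_1-\sqrt2\,\Diamond P_2$ takes values arbitrarily close to $0$ from both sides), and the shift argument breaks down.
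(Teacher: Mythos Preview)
Your proof is correct and follows essentially the same approach as the paper: establish that on $k$-coloured graphs a rational \amplang-expression takes values only in $\tfrac{1}{N}\Z$ for some $N$ depending on $e$, then use the shifted expression $\tfrac{1}{2N}\cdot 1 - e$ as the negation. The paper obtains the discreteness via the normal form (your parenthetical alternative), while you also give a direct structural induction; otherwise the arguments coincide.
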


\begin{proof}
    Let $e$ be a rational \textsf{A-MPLang} expression of $\mpDiamond$-depth $n$ on 
$k$-coloured graphs. By the proof of \Cref{amplang-normalform}, $e$ is of the form
$\sum_{i=0}^n \Bigl(c_i\,\mpdDiamond^i 1 \;+\; \sum_{j=1}^k c_{i,j}\,\mpdDiamond^i P_j\Bigr)$, 
for suitable rational coefficients $c_i,c_{i,j}$. Let $d_e$ be a common denominator of 
all these coefficients. Since each $P_j$ is Boolean, every value $e(G,c)(v)$ is an integer 
multiple of $1/d_e$, so any positive value is at least $1/d_e$.
Set $k_e \coloneqq \nicefrac{1}{2d_e}$ and define $e' := k_e 1 - e$. Then, for a pointed 
$k$-coloured graph $(G,c,v)$:
\begin{itemize}
    \item if $e(G,c)(v)>0$, then $e'(G,c)(v)<0$;
    \item if $e(G,c)(v)\le 0$, then $e'(G,c)(v)\ge k_e>0$.
\end{itemize}
Thus $e'$ expresses the negation of the Boolean query defined by $e$.
\end{proof}

But this is as far as rational $\amplang_\mathbb{B}$  goes in behaving like a Boolean query language.

\begin{restatable}{proposition}{amplangnonclosure}\label{prop:nonclosure}
    Rational $\amplang_\mathbb{B}$ on $k$-coloured graphs is neither closed
    under conjunction nor under the Boolean $\mlDiamond$-modality.
\end{restatable}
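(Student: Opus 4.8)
The plan is to take $Q_1 := (\Diamond\,\textsf{red})_{\mathbb{B}}$, ``$v$ has a \textsf{red} neighbour'', and $Q_2 := (\Diamond\,\textsf{blue})_{\mathbb{B}}$, ``$v$ has a \textsf{blue} neighbour''; both lie in rational (indeed integer-coefficient) $\amplang_\mathbb{B}$, and the claim is that $Q_1 \wedge Q_2 \notin \amplang_\mathbb{B}$. The first point to note is that, unlike in our earlier inexpressibility arguments, the walk-equivalence method of Corollary~\ref{thm:walk-equivalence-corollary} cannot be used here: any conjunction of $\amplang_\mathbb{B}$-queries is determined by walk statistics, since by Theorem~\ref{amplang-normalform} each conjunct is. Instead I would assume towards a contradiction that an \amplang-expression $e$ has $e_{\mathbb{B}} = Q_1 \wedge Q_2$, and evaluate $e$ at the centre $v$ of the star $K_{1,2}$ under three colourings: $v$ coloured \textsf{red} in all three, with the two pendants coloured $(\textsf{red},\textsf{red})$, then $(\textsf{red},\textsf{blue})$, then $(\textsf{blue},\textsf{blue})$, so that $Q_1 \wedge Q_2$ at $v$ is false, true, false respectively. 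A direct computation of $(\Diamond^i 1)(v)$, $(\Diamond^i\textsf{red})(v)$ and $(\Diamond^i\textsf{blue})(v)$ at the centre of a star shows that, for a fixed number of pendants, each of these quantities is an affine function of the number of \textsf{blue} pendants (even-length walks return to $v$, and odd-length walks end at a leaf chosen from all leaves symmetrically); hence by the normal form of Theorem~\ref{amplang-normalform} so is $e(v)$. Thus the three values of $e(v)$ above are in arithmetic progression, so the middle one equals the average of the other two; but $Q_1 \wedge Q_2$ false on the first and third colourings forces $e(v) \le 0$ there, hence the middle value is $\le 0$, contradicting that $Q_1 \wedge Q_2$ is true on the middle colouring.

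\textbf{Non-closure under the Boolean $\Diamond$-modality.} Here I would take $Q := (\Diamond\,\textsf{red} - 1)_{\mathbb{B}}$, ``$v$ has at least two \textsf{red} neighbours'', which is in rational $\amplang_\mathbb{B}$, and show that $\Diamond Q$, ``$v$ has a neighbour with at least two \textsf{red} neighbours'', is not. Now the walk-equivalence method does apply, since $\Diamond Q$ is not walk-determined. I would use the fixed ``double-cherry'' graph $G$ with a distinguished node $v$ adjacent to two inner nodes $a,b$, where $a$ is further adjacent to leaves $p,p'$ and $b$ to leaves $q,q'$; colour $v,a,b$ \textsf{blue} in both colourings. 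In $c_1$, colour $p,p'$ \textsf{red} and $q,q'$ \textsf{blue}; in $c_2$, colour $p,q$ \textsf{red} and $p',q'$ \textsf{blue}. Under $c_1$ the neighbour $a$ of $v$ has the two \textsf{red} neighbours $p,p'$, so $\Diamond Q$ holds at $v$; under $c_2$ each of $a,b$ has only one \textsf{red} neighbour --- the crucial point being that $v$ itself is \textsf{blue}, so it never contributes to its neighbours' \textsf{red}-degrees --- so $\Diamond Q$ fails at $v$. It then remains to verify that $(G,c_1,v)$ and $(G,c_2,v)$ are walk-equivalent: the underlying graph is the same, so all $(\Diamond^\ell 1)(v)$ agree; the automorphism group of $G$ fixing $v$ acts transitively on $\{p,p',q,q'\}$, so for every $\ell$ each leaf is the endpoint of the same number $n_\ell$ of length-$\ell$ walks from $v$; since $v,a,b$ are \textsf{blue} and each colouring reddens exactly two of the four leaves, $(\Diamond^\ell\textsf{red})(v)$ equals $2n_\ell$ for even $\ell$ and $0$ for odd $\ell$ in both colourings (odd-length walks from $v$ terminate at $a$ or $b$), and $(\Diamond^\ell\textsf{blue})(v)$ then agrees as $(\Diamond^\ell 1)(v) - (\Diamond^\ell\textsf{red})(v)$. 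By Corollary~\ref{thm:walk-equivalence-corollary} no \amplang-expression distinguishes the two pointed graphs, whereas $\Diamond Q$ does, so $\Diamond Q \notin \amplang_\mathbb{B}$.

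The conceptual obstacle I expect to be central is recognising that the two halves need genuinely different tools: the $\Diamond$-modality case is a walk-indistinguishability phenomenon in the spirit of Example~\ref{ex:5paths}, whereas the conjunction case is a linear-separability failure that walk-equivalent graphs cannot witness and that must instead be exposed through the normal form on a tailored family of stars. The remaining delicate points are essentially bookkeeping: arranging the colourings of the double-cherry so that walk-equivalence survives while $\Diamond Q$ flips, and carrying out the walk-count computation on stars that yields the ``affine in the number of \textsf{blue} pendants'' property; neither is deep.
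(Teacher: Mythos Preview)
Your proposal is correct, and the overall strategy matches the paper's: for conjunction you exploit an arithmetic-progression obstruction, and for the $\Diamond$-modality you exhibit walk-equivalent pointed graphs distinguished by the query. The executions differ in instructive ways. For conjunction, the paper uses the same star-with-two-leaves instances and the same query $\Diamond\textsf{red}\wedge\Diamond\textsf{blue}$, but establishes the arithmetic progression $e(r_1)-e(r_2)=e(r_2)-e(r_3)$ by a direct structural induction on $e$ (coupled with an auxiliary invariant at the children); your route via the normal form of Theorem~\ref{amplang-normalform} is shorter and reuses machinery already in place. For the $\Diamond$-modality, the paper uses different witness graphs and two separate queries $\phi_1=\Diamond(\neg\Diamond\neg\textsf{blue})$ and $\phi_2=\Diamond(\Diamond^{\ge 3}\top)$ (one in ML, one in GML), whereas your single double-cherry with the symmetry argument on $\{p,p',q,q'\}$ is self-contained and arguably more transparent. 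Both approaches yield the result with the same conceptual content; yours leans more on Theorem~\ref{amplang-normalform} and graph symmetry, the paper's more on bespoke inductive invariants.
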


In other words, even if $\phi_1$ and $\phi_2$ are in rational $\amplang_\mathbb{B}$, the Boolean queries $\phi_1 \land \phi_2$ and $\mlDiamond \phi_1$ need not be expressible in this way.
\begin{itemize} 
\item 
The proof of non-closure under conjunction uses the 2-coloured trees $(T,c_1)$, $(T,c_2)$, and $(T,c_3)$ (shown on the left in \Cref{fig:nonclosure}) together with a Boolean conjunction $\phi$. We show that any \amplang{} expression $e$ must satisfy either $e(T,c_1)(v) \geq e(T,c_2)(v)$ or $e(T,c_3)(v) \geq  e(T,c_2)(v)$, contradicting that $\phi$ holds on $(T,c_2,v)$ but fails on both $(T,c_1,v)$ and $(T,c_3,v)$.
\item 
We show non-closure under the Boolean $\mlDiamond$-modality by exhibiting pointed 2-coloured graphs $(G,c,v)$ and $(H,c',w)$ (depicted on the right in \Cref{fig:nonclosure}) that are walk-equivalent but can be distinguished by Boolean queries $\phi_1$ (in ML) and $\phi_2$ (in GML). Each of these queries applies $\mlDiamond$ to a Boolean query that is itself in $\amplang_\mathbb{B}$.
\end{itemize} 

\begin{figure}  
    \centering
    \includegraphics[height=2.4cm]{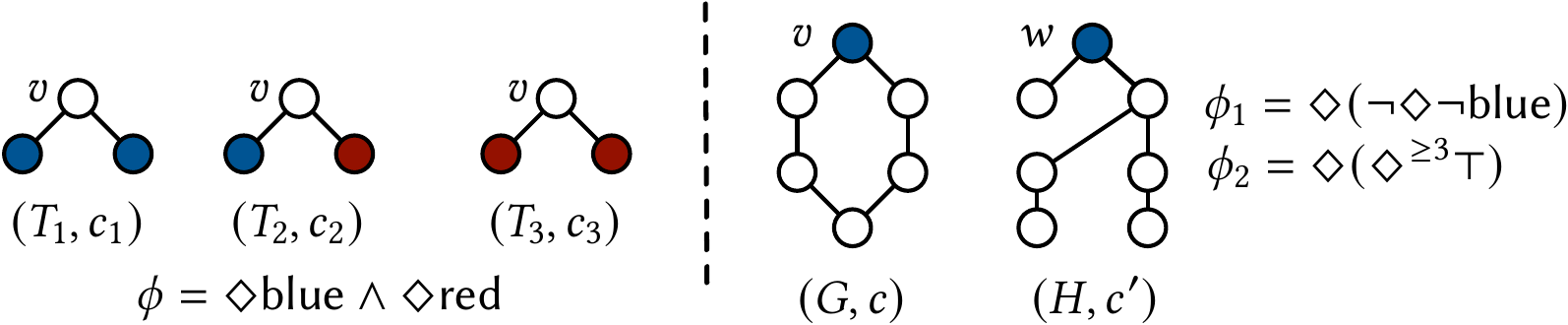}
    \caption{Graphs and Boolean queries used to show non-closure of $\amplang_\mathbb{B}$  in the proof of \Cref{prop:nonclosure}.}
    \label{fig:nonclosure}
\end{figure}

These negative results continue to hold for \amplang{} when we drop the restriction to rational coefficients. However, the proof of closure under negation relies in an essential way on having only rational coefficients. Once we lift this assumption, even that remaining Boolean closure property is lost.\looseness=-1

\begin{restatable}{proposition}{amplangnotclosedundernegation}
\label{prop:amplang-not-cloused-under-negation}
    $\amplang_\mathbb{B}$ on $k$-coloured graphs is not closed under negation.
\end{restatable}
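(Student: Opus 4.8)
The plan is to exploit the freedom afforded by irrational coefficients to build a single expression whose value can only be made positive or nonpositive by an irrational offset, so that no \amplang-expression can serve as its negation. Concretely, I would start from an \amplang-expression $e$ whose range on $k$-coloured graphs is dense in some interval — or at least accumulates at an irrational point from both sides — unlike the rational case where the proof of closure under negation used the fact that values are integer multiples of a fixed $1/d_e$ and hence cannot approach $0$ arbitrarily closely. By Theorem~\ref{amplang-normalform}, every \amplang-expression is numerically equivalent to an affine combination $\sum_{i=0}^n (c_i \Diamond^i 1 + \sum_j c_{i,j} \Diamond^i P_j)$. The idea is to pick coefficients (e.g.\ involving $\sqrt 2$ or some Liouville-type irrational, matching the ``alternating Liouville constructions'' advertised in the techniques paragraph) so that the set of achievable values $\{\,\Diamond^i 1,\ \Diamond^i P_j\,\}$ over all pointed $k$-coloured graphs — which are tuples of nonnegative integers ranging over a suitable subset of $\N^{\,\cdot}$ — produces, under that affine map, values that get arbitrarily close to $0$ from above but never equal $0$ (or symmetrically from below).

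The core construction: fix a degree bound $n$ and consider pointed graphs built from disjoint unions/attachments of paths and cycles so that the walk-count vector $(\Diamond^0 1, \Diamond^1 1, \ldots, \Diamond^n 1)$ (and the colour sums) realize a rich enough set of integer tuples. Then choose an irrational $\alpha$ and coefficients so that $e(G,c)(v) = \text{(integer linear form)} - \alpha \cdot \text{(another integer linear form)}$; by a Dirichlet / simultaneous-approximation argument one can make this difference positive yet arbitrarily small on one family of graphs $G_m$, and bounded away from $0$ and negative on another family. Now suppose for contradiction that the negation of $e_{\mathbb B}$ is expressible by some \amplang-expression $e'$. Then $e'$ is $>0$ exactly when $e \le 0$, i.e.\ $e'$ and $e$ never agree in sign-pattern in the ``both positive'' sense: on every pointed graph, $\operatorname{sign}(e) \le 0 \iff \operatorname{sign}(e') > 0$. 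Applying Theorem~\ref{amplang-normalform} to $e'$ as well, we get two affine forms in the walk-statistics; the contradiction comes from showing that the boundary between ``$e>0$'' and ``$e'>0$'' forced by this equivalence is an affine hyperplane with rational-dependent normal, which cannot separate the dense/irrationally-offset value set consistently — some graph in the approximating family $G_m$ is forced to have $e(G_m,c)(v)$ arbitrarily close to $0^+$ while $e'(G_m,c)(v) \le 0$ by continuity of the affine form $e'$ along that family, yet the defining property demands $e'(G_m,c)(v)>0$ only once $e(G_m,c)(v)\le 0$, and near the crossover both constraints become incompatible because the two affine forms, evaluated along $G_m$, are eventually forced into the same half-space.

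The main obstacle I anticipate is controlling the \emph{joint} behaviour of $e$ and the hypothetical $e'$ along a cleverly chosen family of graphs: it is not enough that $e$ alone takes values approaching an irrational threshold — one must ensure that \emph{whatever} affine form $e'$ the adversary picks (over a bounded $\Diamond$-depth, since $e'$ is a fixed expression), its sign is pinned down incompatibly on that family. This likely requires a compactness/pigeonhole step over the finitely many ``shapes'' of bounded-depth walk-statistics and a careful Liouville-style choice of $\alpha$ so that the rational affine form $e'$ cannot ``track'' the irrational crossover of $e$ — the alternating approximation ensures $e$ oscillates across $0$ on a sequence $G_1, G_2, \ldots$ of graphs that $e'$, having only rational-related coefficients on integer inputs, cannot separate in the required alternating pattern. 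I would set up the graphs so that the relevant walk-statistics along the family lie on an affine line in $\R^{n+1}$ parametrized by $m$, reducing the problem to: a rational-coefficient affine function of $m$ cannot have the same sign-changes as $\lfloor m\alpha\rfloor - $ something, which is a clean number-theoretic fact. Finally, I would note that the construction stays within $k$-coloured graphs for some fixed small $k$ (two colours suffice, matching the statement), and that it is exactly the drop of rationality — hence the loss of the ``gap $1/d_e$'' — that makes the negation inexpressible, completing the contrast with the preceding proposition.
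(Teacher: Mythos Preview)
Your high-level instincts are right --- an irrational (indeed Liouville-type) coefficient, the normal form, and a family of graphs with controllable walk counts --- but the argument has a genuine gap. You repeatedly lean on the hypothetical negator $e'$ having ``rational-related coefficients on integer inputs'' and reduce to ``a rational-coefficient affine function of $m$ cannot track an irrational crossover''. This is unjustified: $e'$ is an \emph{arbitrary} \amplang-expression and may use any real coefficients, including the very irrational $\alpha$ you built into $e$. In particular, along a one-parameter family where the walk statistics lie on an affine line, both $e$ and $e'$ become affine in the parameter $m$, and then $e' \coloneqq -e$ already satisfies $e(m)>0 \iff e'(m)<0$ on every $m$ where $e(m)\neq 0$. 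Your alternation picture therefore cannot separate $e$ from a putative negator on such a family; the ``rational hyperplane normal'' step is simply not available.

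The paper's proof sidesteps this by using \emph{growth versus decay} rather than sign-tracking. It takes $e = r\,\Diamond\mathsf{red} - \Diamond\mathsf{blue}$ on star graphs $G(a,b)$, where by the normal form any candidate $e'$ evaluates to $g(a,b)=aA(n)+bB(n)+\gamma$ with $n=a+b$ and $A,B$ univariate polynomials with \emph{arbitrary} real coefficients. Substituting $b=ra-\varepsilon$ gives $g=a\bigl(A(n)+rB(n)\bigr)-\varepsilon B(n)+\gamma$. Because $r$ is a (two-sided) Liouville number, one can find pairs $(a,b)$ with $|\varepsilon|<n^{-M}$ for any $M$, on both sides of $0$; hence $\varepsilon B(n)\to 0$ while $a\bigl(A(n)+rB(n)\bigr)$ grows polynomially unless $A+rB\equiv 0$. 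This forces $A+rB\equiv 0$ regardless of the coefficients of $e'$, after which $g(a,b)=-\varepsilon B(n)+\gamma$ and a short argument pins $\gamma\le 0$, contradicting $g(0,0)=\gamma>0$ (needed since $e(0,0)=0$). What your sketch is missing is precisely this super-polynomial decay mechanism that neutralises an adversary with unrestricted coefficients.
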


In particular, our proof demonstrates that the query $e := r\mpDiamond \mathsf{red} - \mpDiamond\mathsf{blue}$ has no negation when $r$ is irrational. In fact, our proof requires $r$ to be a Liouville number, a special type of non-algebraic real that can be approximated particularly well in terms of rational numbers. On a technical level this leads to $e$ becoming arbitrarily close to $0$ at a very fast rate, making it analytically impossible for a hypothetical \mplang expression that negates $e$ to separate between cases where $e$ evaluates to $0$ or positive values. It remains an interesting open question whether non-closure under negation can  also be observed with algebraic irrational coefficients.

\begin{remark} \label{remark:incompML}
The previous results, together with {\Cref{ex:beyondFO}}, show that
$\amplang_\mathbb{B}$ and ML are incomparable in expressive power on
coloured graphs. On uncoloured graphs, however, every ML formula is
equivalent to one of $0,1,\mlDiamond 1$, or $\neg\mlDiamond 1$ and is then
subsumed by $\amplang_\mathbb{B}$. In contrast, the GML formula $\phi_2$ in
Figure~\ref{fig:nonclosure} is not expressible in $\amplang_\mathbb{B}$ even on
uncoloured graphs. In conclusion, Theorem \ref{thm:well-known} holds only in the presence of non-linearities in GNNs. 
 \qed
\end{remark} 

As we now show, activation functions restore closure under Boolean operations. In fact, \textsf{MPLang} with any eventually constant activation is closed under Boolean operators and subsumes earlier logical frameworks for characterizing $\relu$-GNNs.

\section{Bounded Activation Functions}
\label{sec:bounded}

Having established a baseline for the numerical and Boolean expressive power
of \mplang, we now investigate what happens when we equip the language with
bounded non-linear activation functions.
We will start with the activation function \step,
show that it is canonical,
and then connect it to related work on the logical expressivity
of GNNs with eventually constant activation functions.

\subsection{MPLang with Booleanisation}
Our guiding theme here is \emph{Booleanisation}: using activations to turn numerical \mplang-expressions into robust Boolean query languages. For this, we introduce the activation function \step:
\[
\step(x)=
\begin{cases}
1 & \text{if $x>0$,}\\[2pt]
0 & \text{if $x\le 0$.}
\end{cases}
\]
This is the indicator of the positive reals, also known as the \emph{unit step function}. It bridges numerical and Boolean \mplang queries: the map $e \mapsto \step(e)$ turns any numerical expression into a Boolean one.

Following the notation $\smplang{\Sigma}$ introduced in Section~\ref{subsec:mplang}, the expressions that can use $\step$ as activation function is denoted by
$\smplang{\step}$.  Note that this is still a numerical query language;
we write $\smplang{\step}_\mathbb{B}$
for the class of resulting Boolean queries. As we see next, this activation function already yields strictly more expressive power than \amplang.

\begin{example}
Consider the Boolean expressions 
$\phi$, $\phi_1$ and $\phi_2$ used in the proof of
\Cref{prop:nonclosure} and depicted in \Cref{fig:nonclosure}. These Boolean
queries are not expressible in \amplang. However, they become definable
once we allow the \step activation.
Indeed, it is readily verified that the ML formula 
$\phi = \mlDiamond{\sf red} \land \mlDiamond{\sf blue}$
can be expressed in \smplang{$\step$} as
$e=\step(\mpDiamond{\sf red}) + \step(\mpDiamond{\sf blue}) - 1$,
that
$\phi_1 = \mlDiamond(\neg \mlDiamond \neg {\sf blue})$
can be expressed as
$e_1=\mpDiamond\bigl(1 - \step(\mpDiamond(1-{\sf blue}))\bigr)$, and that
$\phi_2 = \mlDiamond(\mlDiamond^{\geq 3}\top)$
can be expressed as $
e_2=\mpDiamond\bigl(\step((\mpDiamond 1)-2)\bigr)$.
As a final example, we observe that
\[
    \mpDiamond \step \big({\sf blue} + (\step(\mpDiamond {\sf red}-1)) -1 \big)
\]
returns the number of blue neighbours of a node
which have at least two red neighbours. 
This computes the function from \Cref{ex:5paths}
that was not computable using \amplang. Hence,
$\smplang{\stepb}$ is more expressive than \amplang
on both numerical and Boolean queries.\qed
\end{example}

More generally, we prove that $\smplang{$\stepb$}_{\mathbb{B}}$ is closed under Boolean operations.

\begin{proposition}
$\smplang{$\stepb$}_{\mathbb{B}}$ is closed under negation and disjunction. 
\end{proposition}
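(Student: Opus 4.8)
The plan is to exhibit, for any $\smplang{\stepb}$-expression $e$ defining a Boolean query, explicit $\smplang{\stepb}$-expressions that define the negation and the disjunction of the underlying queries. The key observation is that $\step$ acts as a \emph{Booleanisation}: for \emph{any} $\smplang{\stepb}$-expression $e$, the expression $\step(e)$ defines the same Boolean query as $e$ (since $\step(e)(G,\gamma)(v) > 0$ iff $\step(e)(G,\gamma)(v) = 1$ iff $e(G,\gamma)(v) > 0$), but now takes values only in $\{0,1\}$. So without loss of generality we may assume the expressions we manipulate are already $\{0,1\}$-valued by wrapping them in $\step(\cdot)$.

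For negation, given $e$ that is $\{0,1\}$-valued (after replacing $e$ by $\step(e)$), I would take $e' := \step(1 - e)$. Then $e' = 1$ exactly when $e = 0$, and $e' = 0$ exactly when $e = 1$, so $e'_\mathbb{B}$ is the negation of $e_\mathbb{B}$; note $1 - e$ is expressible in the grammar as $1 + (-1)e$. For disjunction, given $e_1, e_2$ made $\{0,1\}$-valued via $\step$, I would take $e_\vee := \step(e_1 + e_2)$: the sum $e_1 + e_2$ is positive iff at least one of $e_1, e_2$ equals $1$, which is exactly the disjunction; alternatively one could use $\step(e_1) + \step(e_2) - \step(e_1)\cdot\step(e_2)$ but since the grammar has no product, the additive formulation $\step(e_1 + e_2)$ is the clean choice. (Conjunction, though not asked, would be $\step(e_1 + e_2 - 1)$.)

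There is essentially no obstacle here beyond being careful that the intermediate expressions are actually in the $\smplang{\stepb}$ grammar — in particular that subtraction and the constants used ($1$, $-1$) are available, which they are via $ae$ and the base expression $1$ — and that the Boolean semantics $Q_\mathbb{B}(\cdot) = \mathbf{1}[Q(\cdot) > 0]$ is being tracked correctly through the strict-versus-nonstrict inequality in the definition of $\step$. The only subtlety worth a sentence is why we may assume the inputs are $\{0,1\}$-valued: this is exactly the idempotency-style fact that $\step$ composed with anything is a $0/1$ indicator agreeing on sign, which lets the additive tricks above compute the right Boolean function rather than merely something sign-equivalent. I would write the proof in a few lines: state the Booleanisation reduction, then give $e' = \step(1-e)$ for negation and $e_\vee = \step(\step(e_1) + \step(e_2))$ for disjunction, and verify the truth tables on the two or three relevant input patterns.
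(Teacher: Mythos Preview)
Your proposal is correct and follows essentially the same approach as the paper: Booleanise via $\step$, then take $1-\step(e)$ for negation and $\step(e_1)+\step(e_2)$ for disjunction. The only cosmetic difference is that you wrap an additional outer $\step$ to force the output to be literally $\{0,1\}$-valued, whereas the paper is content with expressions that merely have the correct sign; both are valid since the Boolean semantics only looks at the sign.
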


\begin{proof} Recall that a Boolean query $\phi$ is expressed by an expression $e$ if
$\phi$ holds at $v$ on $(G,\gamma)$ iff $e(G,\gamma)(v) > 0$, for every pointed
embedded graph $(G,\gamma,v)$. Let $\phi_1$ and $\phi_2$ be Boolean queries
expressed by $e_1$ and $e_2$ in \smplang{$\step$}. For $\phi = \neg \phi_1$ we
set $e \coloneqq 1 - \step(e_1)$, and for $\phi = \phi_1 \vee \phi_2$ we set
$e \coloneqq \step(e_1) + \step(e_2)$.
\end{proof}

We can therefore treat $\smplang{$\stepb$}_{\mathbb{B}}$ as a genuine Boolean query language:
whenever an expression $e$ defines a Boolean query via the threshold
condition $e(G,\gamma)(v) > 0$, we may freely build new Boolean queries from
it using the usual logical connectives and comparisons, and stay within
$\smplang{$\stepb$}_{\mathbb{B}}$. More precisely, for \smplang{$\step$}-expressions $e_1$ and $e_2$, we abbreviate:
\begin{gather*}  
e_1 \vee e_2 \coloneqq \step(e_1)+\step(e_2),\qquad
e_1 \land e_2 \coloneqq \step(e_1)+\step(e_2)-1,\qquad
\neg e_1 \coloneqq 1-\step(e_1),\\[3pt]
e_1 > e_2 \coloneqq \step(e_1-e_2),\qquad
e_1 \leq e_2 \coloneqq \neg(e_1>e_2),\qquad
e_1 = e_2 \coloneqq \neg\bigl(e_1>e_2 \,\vee\, e_2>e_1\bigr),
\end{gather*}
and similarly for the remaining comparison operators.

\begin{example}
These shorthand notations make it easy to specify more involved Boolean
queries directly at the level of \smplang{$\step$}. For instance, consider
\[
(\mpDiamond{\sf blue} > \mpDiamond{\sf red} \wedge \mpDiamond{\sf white} = 0)
\vee
(\mpDiamond{\sf white} + \mpDiamond{\sf blue} \leq \mpDiamond{\sf red}),
\]
which says that either a node has strictly more blue than red neighbours and
no white neighbours, or the total number of white and blue neighbours does not
exceed the number of red neighbours.
\qed
\end{example}

The syntactic sugar will also make it very clear
to see the connections to other logical frameworks
for GNNs with eventually constant activation functions.
To that end, we first show that the choice of $\bool$
as our non-linear bounded activation function is not arbitrary.

\subsection{Eventually constant activation functions}
We show that \smplang{$\step$} acts as a canonical language, capturing the expressive power of various \mplang variants with different activation functions over coloured graphs. Our focus is on eventually constant activations, a standard choice in the  study of neural networks \cite{DBLP:conf/icalp/BenediktLMT24,siegelmannsontag}.

\begin{definition}[Eventually constant function]
An \emph{eventually constant function} is a function $\sigma:\R\to\R$ for which
there exist $t^-<t^+$ such that $\sigma(t)=\sigma(t^-)$ for all $t\le t^-$ and
$\sigma(t)=\sigma(t^+)$ for all $t\ge t^+$. If $\sigma(t^-)\neq\sigma(t^+)$, we
call $\sigma$ \emph{uneven eventually constant}.
\end{definition}

The Booleanisation function $\step$ is an uneven eventually constant activation; others include $\trrelu$, $\sign$, and hard versions of {\sf sigmoid} and ${\sf tanh}$. We are not aware of any eventually constant activation used in machine learning that is not uneven. We show that, with rational coefficients, the specific choice of uneven eventually constant activation does not affect expressive power.

\begin{restatable}{theorem}{eventuallyconstantfunctions}
\label{thm:eventually-constant-activation-functions}
    Let $\Sigma$ be a finite set of uneven eventually constant functions. Then the rational fragment of  \smplang{$\,\Sigma$} is numerically 
    equivalent to \smplang{$\step$} on $k$-coloured graphs.
\end{restatable}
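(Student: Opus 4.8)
\noindent
The statement is an equivalence of two languages, so the plan is to prove both containments, in each direction by structural induction on expressions, rewriting an expression of the source language into a numerically equivalent one of the target. The constructors $1$, $P_i$, $e+e'$, $ae$ (for $a\in\Q$) and $\Diamond e$ translate verbatim in both directions, so everything reduces to handling a single activation application. The tool that powers both directions is a \emph{bounded-denominator lemma}: on $k$-coloured graphs, every rational \smplang{\step}-expression $E$ takes all its values in $\tfrac1{N_E}\Z$ for some $N_E\in\N_{>0}$. This is a routine induction — colours and the constant $1$ are integer-valued, $\step(\cdot)\in\{0,1\}\subseteq\Z$, and scaling by a rational, addition, and $\Diamond$ (which merely sums node values) all keep one inside a fixed lattice $\tfrac1N\Z$ — and, using the hypotheses on $\Sigma$ (in particular that each $\sigma$ sends rationals to rationals, as \trrelu, \sign, and \step do), the same invariant holds for the rational fragment of \smplang{$\Sigma$}.

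For the containment of rational \smplang{$\Sigma$} in rational \smplang{\step}, suppose inductively that a subexpression $e$ has been rewritten as an equivalent rational \smplang{\step}-expression, so its values lie in a fixed lattice $\tfrac1N\Z$. Since $\sigma$ is eventually constant with thresholds $t^-<t^+$, only finitely many lattice points $p_1<\dots<p_r$ fall in $(t^-,t^+)$, and on every other lattice point $\sigma$ is constantly $\sigma(t^-)$ or $\sigma(t^+)$. Hence $\sigma(e)$ agrees with the finite case split
\[
\sigma(t^-)\cdot\mathbf 1_{\{e\le t^-\}}+\sum_{i=1}^{r}\sigma(p_i)\cdot\mathbf 1_{\{e=p_i\}}+\sigma(t^+)\cdot\mathbf 1_{\{e\ge t^+\}},
\]
where each indicator is definable from $e$ using $\step$ and rational constants (thresholds placed halfway between consecutive lattice points), and the coefficients $\sigma(t^-),\sigma(p_i),\sigma(t^+)$ are rational by the hypotheses on $\sigma$, so the translation stays within rational \smplang{\step}.

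For the reverse containment we must express one $\step$ application using a fixed $\sigma\in\Sigma$. Given an inductively translated subexpression $e$ — again lattice-valued with granularity $\tfrac1N$, so $e>0$ already forces $e\ge\tfrac1N$ — choose rationals $\alpha>0$ and $\beta$ with $\beta\le t^-$ and $\tfrac\alpha N\ge t^+-\beta$; then $\sigma(\alpha e+\beta)$ equals $\sigma(t^-)$ when $e\le 0$ and $\sigma(t^+)$ when $e>0$, and the \emph{uneven} hypothesis guarantees these two outputs are distinct. It remains to normalise this two-valued quantity to $\{0,1\}$: for \trrelu this is immediate, since $\trrelu(Ne)=\step(e)$ already on the lattice, and in general one feeds a suitably chosen rational affine image of $\sigma(\alpha e+\beta)$ back into $\sigma$ and uses the structural hypotheses on $\sigma$ (its piecewise shape, with rational values, on the finite active interval) to land exactly on the outputs $0$ and $1$.

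The step I expect to be the main obstacle is precisely this last normalisation in the reverse direction. A general uneven eventually constant $\sigma$ is only guaranteed to separate its two \emph{eventual} regimes, and rational affine post-composition cannot by itself collapse the eventual values $\sigma(t^-),\sigma(t^+)$ — which may be irrational — onto $\{0,1\}$; one is forced to exploit how $\sigma$ behaves on its finite active interval, which is exactly where the mild hypotheses on $\Sigma$ become indispensable. A second, milder, subtlety is extending the bounded-denominator lemma to rational \smplang{$\Sigma$}, since nested activations could a priori produce irrational values; this is controlled by the same hypotheses.
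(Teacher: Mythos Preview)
Your plan is the paper's plan: the same bounded-denominator observation, the same finite case-split to simulate an eventually constant $\sigma$ by $\step$ in the forward direction, and the same scale--shift--normalise idea to simulate $\step$ by $\sigma$ in the reverse direction.

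The step you single out as the main obstacle --- normalising the two-valued output $\sigma(\alpha e+\beta)\in\{\sigma(t^-),\sigma(t^+)\}$ to $\{0,1\}$ --- is dispatched in the paper by a single affine post-composition
\[
x\ \longmapsto\ \frac{x-\sigma(t^-)}{\sigma(t^+)-\sigma(t^-)}.
\]
This is a legal $\smplang{\sigma}$ operation (scalar multiplication by reals and addition of constants are in the language), and unevenness is precisely what makes the denominator nonzero. No second application of $\sigma$ and no analysis of the ``active interval'' are needed; your more elaborate plan there is unnecessary.

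Your instinct that rationality is the delicate point is right, but the pressure is on the \emph{forward} induction rather than the reverse normalisation. To assert --- as the paper's induction does --- that the already-translated subexpression is a \emph{rational} $\smplang{\step}$-expression (so that one may speak of the lowest common denominator of its coefficients at the next nesting level), the translation must introduce only rational coefficients, i.e.\ the finitely many values $\sigma(\ell/d)$, $\sigma(t^-)$, $\sigma(t^+)$ must be rational. This is exactly the hypothesis $\sigma(\Q)\subseteq\Q$ that you add; the paper leaves it implicit. Under it (and taking $t^\pm\in\Q$, which is free since the thresholds may always be pushed outward), the normalising affine map above also has rational coefficients, so even the rational version of the reverse direction goes through without the complications you anticipate.
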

\begin{figure}
\includegraphics[width=\textwidth]{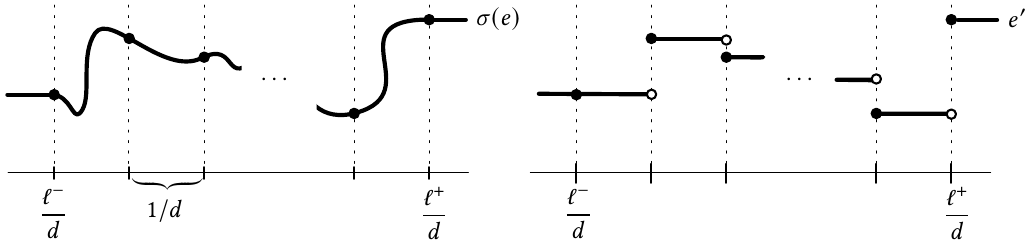}
\caption{Simulation of $\sigma(e)$ by a \smplang{$\bool$} expression $e'$ as described in the proof of \Cref{thm:eventually-constant-activation-functions}.}\label{fig:evconst}
\end{figure}
\begin{proof}[Proof sketch]
The proof relies on the fact that any rational \smplang{$\sigma$}-expression on
$k$-coloured graphs can take only finitely many values on a fixed interval,
which can be encoded in \smplang{$\step$} using suitable affine
transformations. To see this, note that if $d$
is the lowest common denominator in a rational \mplang expression $e$,
then on any pointed coloured graph $(G,c,v)$,
the evaluation $e(G,c)(v)$ only takes values of the form $\nicefrac{\ell}{d}$ for $\ell \in \mathbb{Z}$. It  thus suffices to simulate $\sigma(e)$ on that finite set only, which can be done by appropriately rescaling and combining $\bool$-functions applied on $e$, as shown in~\Cref{fig:evconst}.
\end{proof}

 This strictly extends the result of
Benedikt et al.~\cite{DBLP:conf/icalp/BenediktLMT24}, who established a similar
correspondence  for Boolean queries computed by GNNs with eventually
constant activations \emph{but without linear layers}. In contrast, our setting
allows linear layers and numerical (not only Boolean) queries, and therefore goes beyond the expressive
scope considered in their work.

\begin{remark} \label{limitremark}
    If $\sigma$ has limits at $\pm\infty$, then for any $\varepsilon>0$ any
\smplang{$\sigma$}-expression can be $\varepsilon$-approximated on all
$k$-coloured graphs by a \smplang{$\step$}-expression. This paves the way for
quantitative approximation results for \mplang with bounded activations.
    \qed
\end{remark}

\subsection{Connections with logics for Boolean GNN expressivity}

Using $\smplang{\step}$ as a stand-in for \mplang equipped
with any eventually constant function,
we can now see how the Boolean fragment $\smplang{\step}_\mathbb{B}$
connects directly to modal logics with
counting and Presburger constraints that have recently been used to characterise
GNNs \cite{nunn2024logic, DBLP:conf/icalp/BenediktLMT24}.\footnote{The logic K\# was independently introduced by Nunn et al. in
\cite{nunn2024logic}, and also captures the Boolean expressivity
of \trrelu-GNNs without linear layers.
However, the setting in this work is restricted to integer coefficients.}

Benedikt et al.\ \cite{DBLP:conf/icalp/BenediktLMT24},
introduces \emph{local modal logics with Presburger quantifiers}, such as
$\mathcal{L}\text{-MP}^2$.
 This logic extends ML with
numerical atoms of the form
\[
  \sum_{i} \lambda_i \cdot \#y\,\bigl[E(x,y)\wedge \varphi_i(y)\bigr] \,\mathsf{ op}\, \delta, 
\]
where the $\varphi_i$ are themselves formulas
in $\mathcal{L}\text{-MP}^2$, 
$\lambda_i,\delta \in \Q$, and $\mathsf{op}\in\{=,\neq,\leq,\geq,<,>\}$. Such atoms
evaluate to true or false depending on whether the indicated linear inequality
is satisfied by the counts of one-hop neighbours whose endpoints satisfy
the respective formulas. 

\paragraph{$\mathcal{L}\text{-MP}^2$ is strictly contained in $\smplang{\step}_\mathbb{B}$}
Using the shorthands we developed
for the language 
$\smplang{\step}$, it is easy to see the connection:

\begin{proposition}
    \label{prop:compare.presburger}
         $\mathcal{L}\text{-MP}^2$ 
        is Boolean strictly contained in $\smplang{\step}$ over $k$-coloured graphs (strict containment holds 
        even when $k = 2$).
\end{proposition}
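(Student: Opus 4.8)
The plan is to prove the two halves of the statement separately: containment, and then strictness.

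\emph{Containment.} For the containment $\mathcal{L}\text{-MP}^2 \subseteq \smplang{\step}_\mathbb{B}$ I would proceed by structural induction on $\mathcal{L}\text{-MP}^2$ formulas, translating each formula $\varphi$ into a $\smplang{\step}$-expression $e_\varphi$ such that $\varphi$ holds at $v$ iff $e_\varphi(G,c)(v) > 0$. The propositional atoms $P_i$ and the Boolean connectives are handled directly by the shorthands $\vee,\wedge,\neg$ already introduced for $\smplang{\step}$, and the modality $\Diamond$ likewise (applying $\Diamond$ to $\step(e_\psi)$, which is Boolean-valued). The only nontrivial case is a Presburger atom $\sum_i \lambda_i \cdot \#y[E(x,y)\wedge\varphi_i(y)]\ \mathsf{op}\ \delta$. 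By induction each $\varphi_i$ is captured by some $e_{\varphi_i}$, so the count $\#y[E(x,y)\wedge\varphi_i(y)]$ at $v$ equals $(\Diamond\, \step(e_{\varphi_i}))(G,c)(v)$. Hence the left-hand side of the atom is computed by $t \coloneqq \sum_i \lambda_i\, \Diamond\, \step(e_{\varphi_i})$, a $\smplang{\step}$-expression, and the atom $t\ \mathsf{op}\ \delta$ is expressible using the comparison shorthands $e_1 > e_2$, $e_1 \le e_2$, $e_1 = e_2$ (and their negations/combinations) already defined for $\smplang{\step}$. Note this translation uses only rational coefficients, consistent with the setting.

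\emph{Strictness.} For strictness, it suffices to exhibit a Boolean query expressible in $\smplang{\step}$ (indeed already over $2$-coloured graphs) that is not expressible in $\mathcal{L}\text{-MP}^2$. A natural candidate is a query requiring genuine nesting of $\Diamond$ through a $\step$ at an \emph{inner} position whose argument mixes a local predicate with an aggregate over the next hop — e.g.\ the ``blue neighbour with at least two red neighbours'' query $\Diamond\,\step\bigl(\mathsf{blue} + \step(\Diamond\mathsf{red}-1) - 1\bigr)$ from the earlier example, or the ML/GML queries $\phi_1 = \Diamond(\neg\Diamond\neg\mathsf{blue})$ and $\phi_2 = \Diamond(\Diamond^{\ge 3}\top)$, all of which were shown expressible in $\smplang{\step}$. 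The cleanest route is the one already used in Proposition~\ref{prop:nonclosure}: take the walk-equivalent pointed $2$-coloured graphs $(G,c,v)$ and $(H,c',w)$ from Figure~\ref{fig:nonclosure} and argue that they are also indistinguishable by $\mathcal{L}\text{-MP}^2$, yet are separated by $\phi_1$ (or $\phi_2$). Indistinguishability would follow by a round-based argument on the structure of $\mathcal{L}\text{-MP}^2$ formulas: at modal nesting depth $0$ the two pointed graphs agree on colours, and inductively, if all depth-$\ell$ formulas agree on every pair of $\ell$-walk-equivalent (or appropriately bisimilar) pointed nodes, then — because a Presburger atom only counts one-hop neighbours classified by depth-$\ell$ formulas, and walk-equivalence preserves these neighbourhood multiset statistics of definable classes — the depth-$(\ell{+}1)$ formulas also agree. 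One then checks that $v$ and $w$ remain in the same class at every depth while $\phi_1$ distinguishes them, giving the desired separation; restricting the colourings to two colours shows $k=2$ suffices.

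\emph{Main obstacle.} The bookkeeping for the strictness half is the delicate point: unlike pure ML, $\mathcal{L}\text{-MP}^2$ can aggregate over neighbours with arbitrary rational weights and compare against arbitrary thresholds, so the inductive invariant cannot simply be ``same bisimulation type''; it must be strong enough to guarantee that for \emph{every} $\mathcal{L}\text{-MP}^2$-definable partition of the next hop, the two nodes see the same neighbour-count vector. Verifying that the specific graphs of Figure~\ref{fig:nonclosure} satisfy this stronger invariant at all depths — essentially that walk-equivalence plus the particular symmetry of those graphs is preserved under the $\mathcal{L}\text{-MP}^2$ neighbourhood-refinement step, all the way down — is where the real work lies; the containment direction is routine once the shorthands are in place.
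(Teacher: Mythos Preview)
Your containment argument is essentially the paper's: translate Presburger atoms via $\Diamond\,\step(e_{\varphi_i})$ and the comparison shorthands, and close under the Boolean connectives. That part is fine.

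The strictness argument, however, does not work. All three of your candidate separating queries lie \emph{inside} $\mathcal{L}\text{-MP}^2$. The formula $\phi_1=\Diamond(\neg\Diamond\neg\mathsf{blue})$ is in ML, and $\phi_2=\Diamond(\Diamond^{\ge 3}\top)$ is in GML; both ML and GML embed into $\mathcal{L}\text{-MP}^2$, since $\Diamond^{\ge n}\psi$ is the Presburger atom $\#y[E(x,y)\wedge\psi(y)]\ge n$. Likewise ``has a blue neighbour with at least two red neighbours'' is $\#y[E(x,y)\wedge\mathsf{blue}(y)\wedge(\#z[E(y,z)\wedge\mathsf{red}(z)]\ge 2)]\ge 1$. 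So if $\phi_1$ (or $\phi_2$) distinguishes $(G,c,v)$ from $(H,c',w)$, then $\mathcal{L}\text{-MP}^2$ already distinguishes them, and your proposed indistinguishability invariant is false for those very graphs. More generally, the inductive step you sketch fails: walk-equivalence does \emph{not} preserve the multiset of neighbours classified by $\mathcal{L}\text{-MP}^2$-definable properties (indeed $\mathcal{L}\text{-MP}^2$ refines at the level of $1$-WL, which is strictly finer than walk-equivalence). The sentence ``walk-equivalence preserves these neighbourhood multiset statistics of definable classes'' is exactly where the argument breaks.

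The paper's route is different and much shorter: it exploits that $\smplang{\step}$ can apply $\Diamond$ twice \emph{without} an intervening Booleanisation, giving access to genuine two-hop counts. The query $\Diamond\Diamond\mathsf{green}=\Diamond\Diamond\mathsf{blue}$ is directly a comparison of two \amplang expressions, hence in $\smplang{\step}_\mathbb{B}$, and its inexpressibility in $\mathcal{L}\text{-MP}^2$ is quoted from Benedikt et al.\ \cite{DBLP:conf/icalp/BenediktLMT24}. The point is that a Presburger atom can only count one-hop neighbours satisfying some \emph{Boolean} inner formula, so it cannot propagate the raw numerical grandchild counts needed for that equality; this is precisely the extra power that the identity (linear) layers supply. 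If you want to produce a self-contained proof rather than cite the inexpressibility, you would need to argue at that level --- not via walk-equivalence.
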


\begin{proof}
        We first show containment. The Presburger quantifiers
        can be translated to 
        \[\sum_{i} c_{i}\,\mpdDiamond e_i \text{ op } \delta,
        \]
        where
        $c_i,\delta \in \Q$ and $\mathsf{op}\in\{=,\neq,\leq,\geq,<,>\}$,
        which can be translated using our shorthand for inequalities.
        Also recall the closure of $\smplang{\step}_\mathbb{B}$ under Boolean combinations.
        The $P_i$ represent unary predicates
        and $1$ is the formula that is always true.

          We now show that the containment is strict. Note that $\smplang{\step}_\mathbb{B}$ extends $\mathcal{L}\text{-MP}^2$ by allowing arbitrarily many hops, precisely the extra expressive power enabled by the identity function. This is illustrated by the numerical query: 
\[
  \mpdDiamond\mpdDiamond{\sf green} = \mpdDiamond\mpdDiamond{\sf blue},
\]
which asks whether a node has the same number of $\sf green$ and $\sf blue$
grandchildren. Since this is an equivalence of two \amplang expressions, it is directly recognisable as expressible in \smplang{$\step$}. But
it is known that this query is inexpressible in $\mathcal{L}$-$\mathsf{MP}^2$ 
\cite{DBLP:conf/icalp/BenediktLMT24}. 
\end{proof}
    
   This suggests that the Boolean queries definable by $\trrelu$-GNNs—equivalent, over coloured graphs, to those of $\mathcal{L}\text{-MP}^2$ \cite{DBLP:conf/icalp/BenediktLMT24}—match a specific syntactic subfragment of $\smplang{\step}$.

\paragraph{Linear layers add expressive power.}
By \Cref{thm:eventually-constant-activation-functions} and \Cref{prop:gnn-is-mplang},
\smplang{$\step$} is expressively equivalent to any $\{\sigma,\mathsf{id}\}$-GNN with $\sigma$
eventually constant. Likewise, $\mathcal{L}$-MP$^2$ is known to capture $\sigma$-GNNs for such
$\sigma$~\cite{DBLP:conf/icalp/BenediktLMT24}. Our comparison therefore shows that adding the
identity activation strictly increases expressivity for uneven eventually constant activations (even for Boolean queries).

\begin{proposition}\label{morepower}
    Let $\Sigma,\Sigma'$ be sets of uneven eventually constant activation functions.
    Then $\Sigma'$-GNNs are Boolean strictly contained in $(\Sigma \cup \{\mathsf{id}\})$-GNNs over $k$-coloured graphs (even for $k = 2$).
\end{proposition}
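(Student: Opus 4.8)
The plan is to reduce the statement to results already established in the excerpt. For containment, recall that by \Cref{thm:eventually-constant-activation-functions} every rational \smplang{$\Sigma'$}-expression is numerically equivalent over $k$-coloured graphs to a \smplang{$\step$}-expression, hence to a rational \smplang{$\Sigma$}-expression (taking any fixed uneven eventually constant $\sigma \in \Sigma$ and invoking the theorem in the other direction, or directly simulating $\step$ from $\sigma$ as in the proof of \Cref{thm:eventually-constant-activation-functions}). By \Cref{prop:gnn-is-mplang}, \smplang{$\Sigma$} is numerically equivalent to $(\Sigma \cup \{\mathsf{id}\})$-GNNs over $k$-embedded — in particular $k$-coloured — graphs. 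Composing these equivalences shows that every Boolean query of a $\Sigma'$-GNN (which, being a $(\Sigma' \cup \{\mathsf{id}\})$-GNN with all-$\sigma'$ layers, is in particular a $\Sigma'$-GNN-definable query, hence by the same chain a $\smplang{\Sigma'}$-query) is a Boolean query of some $(\Sigma \cup \{\mathsf{id}\})$-GNN. One subtlety: \smplang{$\Sigma'$} allows real coefficients whereas \Cref{thm:eventually-constant-activation-functions} is stated for the rational fragment; I would either restrict attention to rational-coefficient GNNs throughout (mirroring the footnote's caveat), or note that the separating witness below is robust enough that the containment direction only needs the rational case.

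For strictness, the key point — already flagged in the paragraph preceding the proposition — is that $\mathcal{L}$-$\mathsf{MP}^2$ captures $\sigma'$-GNNs over coloured graphs for uneven eventually constant $\sigma'$ \cite{DBLP:conf/icalp/BenediktLMT24}, and $\mathcal{L}$-$\mathsf{MP}^2$ is Boolean strictly contained in $\smplang{\step}$ by \Cref{prop:compare.presburger}, with the explicit witness being the query ``a node has the same number of $\sf green$ and $\sf blue$ grandchildren'', i.e. $\Diamond\Diamond{\sf green} = \Diamond\Diamond{\sf blue}$, expressible in \smplang{$\step$} (indeed as an equality of two \amplang expressions) but inexpressible in $\mathcal{L}$-$\mathsf{MP}^2$. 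Since $\smplang{\step}$ is Boolean equivalent to $(\Sigma \cup \{\mathsf{id}\})$-GNNs (by the chain of equivalences above, applied now with $\step$ and $\Sigma$) while $\Sigma'$-GNNs are Boolean equivalent to $\mathcal{L}$-$\mathsf{MP}^2$, this same query separates the two GNN classes. I would state this as: the Boolean query defined by $\Diamond\Diamond{\sf green} = \Diamond\Diamond{\sf blue}$ is expressible by a $(\Sigma \cup \{\mathsf{id}\})$-GNN but by no $\Sigma'$-GNN, already over $2$-coloured graphs (two colours suffice since the query uses only ${\sf green}$ and ${\sf blue}$, and one can pad or rename as needed).

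The main obstacle I anticipate is purely bookkeeping rather than mathematical: carefully matching the coefficient regimes (rational vs.\ real) and the precise statement of the $\mathcal{L}$-$\mathsf{MP}^2$-captures-$\sigma'$-GNNs result from \cite{DBLP:conf/icalp/BenediktLMT24} — in particular making sure that ``$\sigma'$-GNNs'' there means GNNs without the identity/linear layer, which is exactly the distinction this proposition is designed to exploit. I would also double-check that the inexpressibility of $\Diamond\Diamond{\sf green} = \Diamond\Diamond{\sf blue}$ in $\mathcal{L}$-$\mathsf{MP}^2$ cited in \Cref{prop:compare.presburger} is stated (or trivially adapts) for $k=2$, so that the ``even for $k=2$'' clause is justified; if the original witness uses more colours, a straightforward colour-merging argument recovers the two-colour version. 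Everything else follows by transitively chaining the numerical/Boolean equivalences already proved, so no new analytic machinery is needed here.
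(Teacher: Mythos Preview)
Your proposal is correct and follows essentially the same argument as the paper: the paper's proof is the paragraph immediately preceding the proposition, which chains \Cref{thm:eventually-constant-activation-functions} and \Cref{prop:gnn-is-mplang} to identify $(\Sigma\cup\{\mathsf{id}\})$-GNNs with \smplang{\step}, invokes the Benedikt et al.\ characterisation of $\Sigma'$-GNNs by $\mathcal{L}\text{-MP}^2$, and then applies \Cref{prop:compare.presburger} (with the same grandchildren-count witness) for strictness. Your extra care about the rational-versus-real coefficient mismatch and the $k=2$ clause is well placed---the paper glosses over both---but the underlying route is the same.
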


\Cref{thm:eventually-constant-activation-functions} and \Cref{morepower} are the exact type of results that motivate the study of GNNs from the perspective of query languages. We see how the formal study of these query languages provides real insight for the design of GNNs. Concretely, these two results show that combining multiple activation functions (of the form in \Cref{thm:eventually-constant-activation-functions}) cannot increase expressivity. That is, a $\{\trrelu, \sign, \mathsf{id}\}$-GNN is just as powerful as a $\{\trrelu, \mathsf{id}\}$-GNN. Yet, the presence of linear layers in GNNs does increase expressivity.

\section{Beyond Bounded Activation Functions}
\label{sec:relu}

A natural next question is how the expressive power of \mplang changes once we
allow \emph{unbounded} activation functions. In what follows, we fix the
activation to be $\relu$. Recall from \Cref{prop:gnn-is-mplang} that
\smplang{$\relu$} captures exactly $\relu$-GNNs, and more generally
\smplang{$\sigma$} captures $\{\mathsf{id},\sigma\}$-GNNs. Thus, the question we are facing is:

\medskip
\emph{How do $\relu$-GNNs and $\{\mathsf{id},\sigma\}$-GNNs compare in expressive
power, when $\sigma$ ranges over eventually constant activation functions?}
\medskip

This turns out to be a delicate problem. For Boolean queries, Benedikt et
al.\ \cite{DBLP:conf/icalp/BenediktLMT24} showed that, on $k$-coloured graphs,
rational $\relu$-GNNs are strictly more expressive than rational
$\{\sigma\}$-GNNs. However, by \Cref{morepower}, \smplang{$\sigma$} (or, equivalently,
$\{\sigma,\mathsf{id}\}$-GNNs) is strictly more expressive than
$\{\sigma\}$-GNNs, so the comparison between $\relu$-GNNs and
$\{\sigma,\mathsf{id}\}$-GNNs requires a fresh analysis.

Our main result shows that adding $\relu$ increases the \emph{numerical}
expressive power of \mplang.

\begin{theorem}\label{thm:relusep}
For any set of uneven eventually constant activation functions $\Sigma$, rational
\smplang{$\relu$} numerically strictly contains 
rational \smplang{$\Sigma$}
on $k$-coloured graphs. 
\end{theorem}

The proof of this theorem hinges on the following stronger separation, which
does not rely on rational coefficients.

\begin{proposition}\label{reluvsstep}
\smplang{$\relu$} numerically strictly contains \smplang{$\step$} on $k$-coloured graphs.
\end{proposition}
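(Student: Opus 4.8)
The plan is to exhibit a single numerical query computable in \smplang{$\relu$} that no \smplang{$\step$}-expression can match, using the walk-equivalence toolkit from Section~\ref{sec:amplang} as a starting point but accounting for the fact that \step can extract strictly more than walk statistics. First I would pin down \emph{why} \step is limited: by an argument in the spirit of Theorem~\ref{thm:eventually-constant-activation-functions}, a \smplang{$\step$}-expression over $k$-coloured graphs, on inputs whose relevant numerical intermediate values lie in a fixed bounded range, can only ``see'' each such value through finitely many threshold tests; consequently, the output is a \emph{piecewise-constant-then-affine-then-$\Diamond$} composition. More precisely, I would prove a structural lemma: every \smplang{$\step$}-expression $e$ is, on all $k$-coloured graphs, a finite sum of terms of the form $c\,\Diamond^{i_1}\step(\cdots\Diamond^{i_2}\step(\cdots))$, and the key point is that the innermost real-valued arithmetic below the top \step layer produces values in a \emph{discrete} set (integer multiples of $1/d$ for the common denominator $d$), so the truth of each \step test is ultimately a Boolean condition on walk-counts and lower \step-values. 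This makes \smplang{$\step$} into a bounded-arity, finite-threshold object whose numerical range on a family of graphs grows only polynomially (indeed, the output is an integer combination of neighbour-counts of sub-queries each taking values in $\{0,1\}$ or a fixed finite set).

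Against this, \relu lets one build genuinely \emph{multiplicative / superlinear} numerical behaviour. The concrete separating query I would use is one that, on a suitable family of coloured graphs $G_n$ (say a node $v$ attached to a gadget whose size or structure is controlled by $n$), outputs a value that grows like $n^2$ (or $2^n$), whereas every fixed \smplang{$\step$}-expression — having bounded \step-nesting depth and only summing neighbour-counts of $\{0,1\}$-valued subqueries — can output at most a linear-in-(degree) quantity at the relevant nodes, or more carefully, a value from a set that cannot track the quadratic growth. A clean choice: let $G_n$ consist of the distinguished node $v$ with $n$ \textsf{red} neighbours, each of which in turn has $n$ \textsf{blue} neighbours; in \smplang{$\relu$} one can compute $\Diamond\relu(\Diamond\textsf{blue})$-type expressions and, by nesting with affine maps to build products (using the standard trick that $\relu$ plus affine combinations realize $\min$ and hence, via $xy = \tfrac14((x+y)^2-(x-y)^2)$ approximated through piecewise-linear bumps, arbitrarily many distinct superlinear values), obtain a query whose value at $v$ is exactly $n^2$; meanwhile any \smplang{$\step$}-expression of \step-nesting depth $h$ evaluated at $v$ is a $\Z$-combination of at most $h$-fold nested neighbour-counts of $\{0,1\}$-valued formulas, each layer contributing only a factor of $n$ \emph{additively across boundedly many terms}, and one shows by a counting/pigeonhole argument that for $n$ large the set of achievable values is too sparse to hit $n^2$. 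The inexpressibility is then nailed by either (i) a direct range argument, or (ii) exhibiting, for each fixed candidate \smplang{$\step$}-expression $e$, two graphs in the family on which the target query differs but $e$ agrees — the latter following because $e$'s value depends only on finitely many threshold crossings that coincide on both graphs.

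The main obstacle — and the step I expect to be genuinely delicate — is making the structural lemma about \smplang{$\step$} precise enough to yield a usable \emph{quantitative} bound on its numerical range, rather than just a qualitative ``piecewise'' description. The subtlety is that although each \step outputs $0$ or $1$, the $\Diamond$ and affine operators above it can rescale and sum these, and nested \step's can test increasingly refined conditions, so one must carefully argue that the \emph{nesting depth is a fixed constant} for any given expression and that, at that constant depth, the output is a linear combination (with fixed rational coefficients) of quantities each bounded by a polynomial of \emph{fixed degree} in the graph size — crucially, degree strictly smaller than the degree of growth of the \relu-query. I would handle this by induction on \step-depth, tracking at each level the maximal polynomial degree in the size parameter $n$ and showing it increases by exactly $1$ per $\Diamond$ but not at all across \step (since \step caps the value at $1$). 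One must also be slightly careful that the argument is robust to the real (non-rational) coefficients allowed in Proposition~\ref{reluvsstep}: the discreteness argument of Theorem~\ref{thm:eventually-constant-activation-functions} used rational coefficients, so here I would instead lean on the degree-counting/range argument, which is coefficient-agnostic, or alternatively reduce to finitely many coefficients by a compactness/continuity observation over the fixed finite expression. Assembling these, the target \relu-query's $n^2$ (or $2^n$) growth outstrips the provable polynomial-of-bounded-degree range of any \step-expression, giving the strict containment; that \relu can compute such a query is the easy direction, established by an explicit expression together with the identity activation built from \relu as in the Remark after Proposition~\ref{prop:gnn-is-mplang}.
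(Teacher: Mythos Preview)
Your proposal has a genuine gap: the growth-rate separation you sketch does not exist. For any fixed expression in either language the $\Diamond$-depth is a fixed constant $d$, and on a one-parameter family like your $G_n$ the value at $v$ is a piecewise polynomial in $n$ of degree at most $d$ in both \smplang{$\step$} and \smplang{$\relu$}. In particular, your target value $n^2$ is already hit by the pure \amplang expression $\Diamond\Diamond 1 - \Diamond 1$ (each red neighbour of $v$ has degree $n+1$, so $\Diamond\Diamond 1 = n(n+1)$ at $v$), so there is no separation there. The multiplication trick $xy=\tfrac14((x+y)^2-(x-y)^2)$ does not help: \relu together with affine maps computes only piecewise-\emph{linear} functions of its real arguments and cannot square exactly, so \relu cannot manufacture strictly higher polynomial degree than $\Diamond$ already provides. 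Exponential growth is likewise unreachable at any fixed $\Diamond$-depth. Your induction ``degree increases by $1$ per $\Diamond$ but not across $\step$'' is correct, but the same bound holds verbatim for \relu, so it yields no separation.

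The paper's proof is of a different nature. The separating query is simply $Q=\relu(\Diamond\,\mathsf{red}-\Diamond\,\mathsf{blue})$, and the obstruction is algebraic rather than quantitative. One evaluates candidate \smplang{$\step$}-expressions on a \emph{two-parameter} family $T[r,b,k]$ of highly symmetric trees, designed so that red and blue nodes are structurally interchangeable (each sees one white node of each of two mirror types). A structural induction then shows that any \smplang{$\step$}-expression at the root has the form $F(r,b)+\alpha(r,b)+\beta(r,b)$, with $F$ a finite sum of polynomials times \emph{simple symmetric} functions, $\alpha$ affine, and $\beta$ simple; the symmetry is forced by the tree design together with the fact that $\step$ collapses values to $\{0,1\}$. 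A separate lemma, proved by restricting to diagonals $r-b=\mathrm{const}$ and comparing polynomial identities, shows that no function of this shape can equal $|r-b|$ on all of $\N^2$. Since an expression for $Q$ plus its colour-swapped version would compute $|r-b|$ at the root, $Q$ is inexpressible in \smplang{$\step$}. The essential ingredients are the two parameters and the enforced symmetry; a one-parameter family cannot exhibit this phenomenon.
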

Combined with \Cref{thm:eventually-constant-activation-functions}, which shows
that all eventually constant activation functions are equivalent
to \bool, the proposition yields \Cref{thm:relusep} immediately.

The following important corollary of Proposition \ref{reluvsstep} constitutes an important, non-trivial extension of the aforementioned result by \citeauthor{DBLP:conf/icalp/BenediktLMT24} stating that \trrelu-GNNs are numerically strictly contained in \relu-GNNs \cite{DBLP:conf/icalp/BenediktLMT24}. 

\begin{corollary}
\relu-GNNs numerically strictly contain $\{\trrelu,{\sf id}\}$-GNNs on $k$-coloured graphs.
\end{corollary}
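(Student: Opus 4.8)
The plan is to read the corollary off the GNN/\mplang dictionary and then invoke \Cref{reluvsstep}. First I would replace both GNN classes by \mplang languages. Applying \Cref{prop:gnn-is-mplang} with $\Sigma=\{\trrelu\}$ shows that the $\{\trrelu,\mathsf{id}\}$-GNNs are numerically equivalent to \smplang{\trrelu}. For the \relu side I would invoke the Remark following \Cref{prop:gnn-is-mplang}: since \relu is idempotent and $\mathsf{id}(x)=\relu(x)-\relu(-x)$ is an affine combination of \relu-values, the \relu-GNNs already coincide with \smplang{\relu}, with no need for separate identity layers. Hence it suffices to show that \smplang{\relu} numerically strictly contains \smplang{\trrelu} on $k$-coloured graphs. (Equivalently, the corollary is the instance $\Sigma=\{\trrelu\}$ of \Cref{relusep}, re-expressed on the GNN side through these two translations.)

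The containment of \smplang{\trrelu} in \smplang{\relu} I would obtain by a purely syntactic rewriting. The identity $\trrelu(x)=\relu(x)-\relu(x-1)$ holds on all of $\R$, as one checks on the three regimes $x\le 0$, $0\le x\le 1$, and $x\ge 1$ separately; consequently, inside any \smplang{\trrelu}-expression every occurrence of $\trrelu(e)$ may be replaced by the \smplang{\relu}-expression $\relu(e)+(-1)\cdot\relu(e+(-1)\cdot 1)$ without changing the numerical query defined. Performing this replacement bottom-up turns any \smplang{\trrelu}-expression into a numerically equivalent \smplang{\relu}-expression.

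Strictness is precisely \Cref{reluvsstep} combined with \Cref{thm:eventually-constant-activation-functions}: since \trrelu is uneven eventually constant, the latter gives that (rational) \smplang{\trrelu} is numerically equivalent to \smplang{\step} on $k$-coloured graphs, while the former exhibits a numerical query over $k$-coloured graphs that is definable in \smplang{\relu} but not by any \smplang{\step}-expression. Transporting that query back through the dictionary above, it is computed by a \relu-GNN but by no $\{\trrelu,\mathsf{id}\}$-GNN, which establishes the strict containment. I do not expect a genuine obstacle in this corollary: the whole difficulty has been pushed into \Cref{reluvsstep} (the hard separation) and \Cref{thm:eventually-constant-activation-functions}. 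The only delicate point is the identity-activation bookkeeping — confirming that ``\relu-GNNs'', stated without explicit $\mathsf{id}$ layers, is indeed \smplang{\relu} (exactly the content of the Remark after \Cref{prop:gnn-is-mplang}), and dually that simulating the $\mathsf{id}$ layers of a $\{\trrelu,\mathsf{id}\}$-GNN inside a \relu-GNN needs nothing beyond \relu, which $\mathsf{id}(x)=\relu(x)-\relu(-x)$ settles.
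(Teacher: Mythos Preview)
Your argument is correct and mirrors the paper's approach: the corollary is stated there without separate proof, as an immediate consequence of \Cref{reluvsstep}, and you have simply made explicit the dictionary between GNNs and \mplang (via \Cref{prop:gnn-is-mplang} and its subsequent Remark) together with the direct simulation $\trrelu(x)=\relu(x)-\relu(x-1)$ for the containment direction. The strictness part is, as you say, exactly \Cref{reluvsstep} combined with \Cref{thm:eventually-constant-activation-functions}; your parenthetical ``(rational)'' correctly flags that the latter is stated for rational coefficients, which is the same level of precision the paper uses here.
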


The following subsection is dedicated to the proof of \Cref{reluvsstep}.

\subsection{Proof of \Cref{reluvsstep}}

We start from a very simple numerical query, definable in $\smplang{\relu}$:
\[
  Q  \coloneqq  \relu\bigl(\mpDiamond {\sf red} - \mpDiamond {\sf blue}\bigr),
\]
which counts how many more red than blue neighbours a node has, truncated at $0$.
Our aim is to show that $Q$ is \emph{not} definable in $\smplang{\step}$. This
will show the desired strictness in \Cref{reluvsstep}.

\paragraph{Red--blue symmetric trees}
Our argument uses a family of highly symmetric trees whose root can
``see'' the imbalance between red and blue children through the numerical
power of $\relu$.
{ For readability, we  introduce white nodes as a third colour,
but note that the following proof is exactly analogous
if all of the white nodes in the construction get coloured red.
In total, we only require two colours, which can be realized by a $1$-colouring
(which assigns $1$ to red nodes, $0$ to blue nodes).}

Fix  $r,b,k \in \N$.
We write $T[r,b,k]$ for the \emph{red--blue symmetric tree}
with parameters $(r,b,k)$.
To define $T[r,b,k]$, it is convenient to first introduce a basic gadget.
Let $H[r,b]$ be the star graph consisting of a single white node with $r$ red
children and $b$ blue children.
Intuitively, $T[r,b,k]$ is obtained by gluing together copies of $H[r,b]$
and $H[b,r]$ in such a way that red and blue nodes always ``sit between''
two white nodes of different types.
Formally, $T[r,b,k]$ is the unique finite rooted tree of height $k$ satisfying the following properties.
There are four kinds of nodes:
\begin{itemize}
  \item \emph{$w$-type} white nodes: they have exactly $r$ red and $b$ blue neighbours;
  \item \emph{$w'$-type} white nodes: they have exactly $b$ red and $r$ blue neighbours;
  \item red nodes: they have exactly one $w$-type and one $w'$-type white neighbour;
  \item blue nodes: they have exactly one $w$-type and one $w'$-type white neighbour.
\end{itemize}
Moreover, the root is a $w$-type white node. We remark that all nodes lie at distance at most
$k$ from the root. See \Cref{fig:redbluetree}
for the case $k=3$.
\begin{figure}
  \centering
  \includegraphics[width=\textwidth]{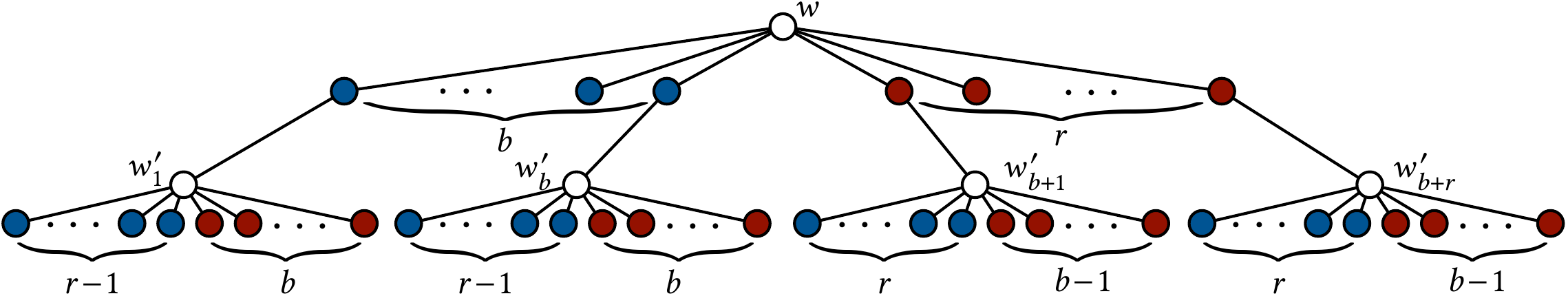}
  \caption{The red--blue symmetric tree $T[r,b,3]$.}
  \label{fig:redbluetree}
\end{figure}

The parameter $k$ will always be chosen to bound the $\mpDiamond$-depth of the
expressions we evaluate: if $e$ has $\mpDiamond$-depth at most $k$, then at each
node $v$ the value $e(T[r,b,k])(v)$ only depends on the radius-$k$
neighbourhood of $v$, which is completely contained in $T[r,b,k]$ and
isomorphic for all nodes of the same type.
Whenever $k$ is fixed and understood from context, we write
\[
  e(v) \;\coloneqq\; e\bigl(T[r,b,k]\bigr)(v)
\]
for the value of $e$ at a node $v$ of $T[r,b,k]$.

\paragraph{Nice functions on $(r,b)$}
We now describe the shape of the numerical functions
that arise when evaluating $\smplang{\step}$-expressions on $T[r,b,k]$.
The key point is that, because $\step$ is eventually constant and the tree
is highly symmetric, the dependence on $(r,b)$ is very tame.

\begin{definition}[Simple and symmetric functions]
  A function $\beta : \R^n \to \R$ is called \emph{simple}
  if there exist real constants $a_1,\dots,a_m$ and a partition
  $\R^n = I_1 \,\dot\cup\, \dots \,\dot\cup\, I_m$ such that
  \[
    \beta(\vec x)   =   a_j \qquad\text{for all }\vec x \in I_j.
  \]
  A function $s : \R^2 \to \R$ is \emph{symmetric}
  if $s(x,y) = s(y,x)$ for all $x,y \in \R$. \qed
\end{definition}

Thus simple functions take only finitely many values,
and symmetric functions are invariant under swapping their arguments.

\begin{definition}[Nice functions]
  A function $F : \R^2 \to \R$ is called \emph{nice}
  if it can be written as
  \[
    F(x,y)   =   \sum_{i=1}^m p_i(x,y)\,\beta_i(x,y),
  \]
  where each $p_i$ is a polynomial and each $\beta_i$ is simple and symmetric. \qed 
\end{definition}

We observe that nice functions are closed under addition and
multiplication by polynomials.

\paragraph{What \texorpdfstring{$\smplang{\step}$}{S-MPLang(step)} can see on $T[r,b,k]$}

We now show that, on the red--blue symmetric trees $T[r,b,k]$,
every $\smplang{\step}$-expression has an extremely rigid form.

\begin{restatable}{lemma}{boolmplangonrbgraph}\label{thm:stepmplang-on-rbgraph}
  Let $e$ be a $\smplang{\step}$-expression of $\mpDiamond$-depth at most $k$.
  Then there exist
  \begin{itemize}
    \item nice functions $F,S_r,S_b : \R^2 \to \R$, where $S_r$ and $S_b$ are symmetric,
    \item an affine function $\alpha : \R^2 \to \R$, and
    \item a simple function $\beta : \R^2 \to \R$,
  \end{itemize}
  such that, for all $r,b \in \N$ and all nodes $v$ of $T[r,b,k]$:
  \begin{itemize}
    \item if $v$ is red, then $e(v) = S_r(r,b)$;
    \item if $v$ is blue, then $e(v) = S_b(r,b)$;
    \item if $v$ is a $w$-type white node, then
      $e(v) = F(r,b) + \alpha(r,b) + \beta(r,b)$;
    \item if $v$ is a $w'$-type white node, then
      $e(v) = F(b,r) + \alpha(b,r) + \beta(b,r)$.
  \end{itemize}
\end{restatable}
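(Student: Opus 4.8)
The plan is to prove the structural lemma by induction on the construction of the \smplang{$\step$}-expression $e$, maintaining the four-way case distinction (red, blue, $w$-type, $w'$-type) as an invariant. For the base cases, the constant $1$ and each colour predicate $P_i$ evaluate to a fixed value depending only on the node type, so they trivially fit the claimed shape: on red and blue nodes the value is a constant (a degenerate nice function with $p_i\equiv\text{const}$ and $\beta_i\equiv 1$), and on white nodes it is likewise constant, absorbed into $\alpha$ (as a constant affine function) with $F\equiv 0$ and $\beta\equiv 0$. For the affine operations $ae$ and $e_1+e_2$, I would invoke the stated closure of nice functions under addition and scalar multiplication (a special case of multiplication by polynomials), observing that the symmetric-under-swap structure is preserved because the $w$/$w'$ roles are swapped consistently in both summands, and that sums of affine (resp.\ simple) functions are again affine (resp.\ simple).

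The two interesting inductive steps are $\Diamond e_1$ and $\step(e_1)$. For $\Diamond e_1$, I would use the local structure of $T[r,b,k]$: a $w$-type white node has $r$ red and $b$ blue neighbours, each red/blue node has exactly one $w$-type and one $w'$-type white neighbour, and a $w'$-type node has $b$ red and $r$ blue neighbours. Hence if $e_1$ has the claimed form with data $(F_1,S_{r,1},S_{b,1},\alpha_1,\beta_1)$, then at a red node $(\Diamond e_1)(v) = [F_1(r,b)+\alpha_1(r,b)+\beta_1(r,b)] + [F_1(b,r)+\alpha_1(b,r)+\beta_1(b,r)]$, which is symmetric in $(r,b)$ and is a sum of a nice function, an affine function, and a simple function — each of these classes being closed under the symmetrisation $g(x,y)\mapsto g(x,y)+g(y,x)$, so the whole is nice and symmetric, giving the required $S_{r}$ (and identically $S_b$). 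At a $w$-type white node, $(\Diamond e_1)(v) = r\cdot S_{r,1}(r,b) + b\cdot S_{b,1}(r,b)$; since $S_{r,1},S_{b,1}$ are nice, multiplying by the polynomials $r$ and $b$ keeps them nice (closure under multiplication by polynomials), so this contributes to $F$, with $\alpha\equiv 0$ and $\beta\equiv 0$ on this node. The $w'$-case is the mirror image, matching the $(b,r)$-swapped template. The key point making this work is that the neighbour-count polynomials ($r$, $b$) are exactly what the definition of "nice" is designed to absorb, and that the affine/simple pieces only ever occur at white nodes where they get multiplied by nothing or summed symmetrically.

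For $\step(e_1)$, the crucial observation is that $\step$ is $\{0,1\}$-valued, so $\step(e_1)(v)$ is automatically a simple function of $(r,b)$ on each node type. On red and blue nodes this is a simple symmetric function (symmetry because $e_1$'s value there is symmetric, hence so is its sign), which is in particular nice, yielding new $S_r,S_b$. On a $w$-type white node, $\step(e_1)(v) = \beta'(r,b)$ for a simple function $\beta'$, which we place into the $\beta$-slot (with $F\equiv 0$, $\alpha\equiv 0$); the $w'$-case gives $\beta'(b,r)$, consistent with the template. Thus the white-node contribution of a $\step$-application lands entirely in the simple-function summand, never in $F$ or $\alpha$ — this asymmetry between what $\Diamond$ produces (polynomial growth, hence $F$) and what $\step$ produces (bounded, hence $\beta$) is the heart of the lemma and of the eventual inexpressibility proof for $Q=\relu(\Diamond{\sf red}-\Diamond{\sf blue})$.

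The main obstacle I anticipate is bookkeeping the symmetry invariant correctly through the $\Diamond$ step at white nodes: one must verify that a $w$-type node's contribution $r\,S_{r,1}(r,b)+b\,S_{b,1}(r,b)$ and the $w'$-type node's contribution $b\,S_{r,1}(b,r)+r\,S_{b,1}(b,r)$ are genuinely related by the $(r,b)\leftrightarrow(b,r)$ swap as the template demands — this forces a careful choice of which of $S_{r,1},S_{b,1}$ plays which role and uses that $S_{r,1},S_{b,1}$ are themselves symmetric. A secondary subtlety is that the statement allows $\alpha$ and $\beta$ to be nonzero only via the base case $1$/$P_i$ and via $\step$, respectively, so I should make sure the inductive hypothesis is stated with enough slack (e.g.\ allowing $F,\alpha,\beta$ to be independently chosen at each step and merged by the closure properties) rather than being over-rigid; once the right inductive statement is fixed, each step is a routine verification.
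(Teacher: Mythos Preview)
Your proposal is correct and follows essentially the same approach as the paper's own proof: structural induction on $e$, with base cases trivially constant; closure of nice, affine, and simple functions under scalar multiplication and addition for the affine steps; the $\Diamond$ case producing the symmetric sum $F_1(r,b)+F_1(b,r)+\cdots$ at red/blue nodes and $r\,S_{r,1}+b\,S_{b,1}$ (nice, via closure under polynomial multiplication) at white nodes; and the $\step$ case collapsing to a simple function placed in the $\beta$-slot on white nodes and a simple symmetric function on coloured nodes. You have also correctly isolated the one genuinely delicate point---that the $w'$-type value $b\,S_{r,1}(r,b)+r\,S_{b,1}(r,b)$ equals the $(r,b)\leftrightarrow(b,r)$ swap of the $w$-type value precisely because $S_{r,1},S_{b,1}$ are symmetric---which is exactly how the paper handles it.
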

\begin{proof}[Proof sketch]
  The proof is by structural induction on $e$.
  The base cases are easy:
  constants and colour predicates $P_i$ clearly give rise to constant functions,
  which are both nice and symmetric. The inductive steps for scalar
  multiplication and addition use the closure of nice, affine and simple functions
  under these operations.
  The only interesting cases are the applications of $\step$ and of $\mpDiamond$.
  For $e = \step(e')$, the induction hypothesis provides functions of the desired
  shape for $e'$. Applying $\step$ then simply replaces these functions by simple
  ones: on each region where $e'$ is constant, $\step(e')$ is also constant.
  Symmetry is preserved because $\step$ is applied pointwise.
  For $e = \mpDiamond e'$, we distinguish the four node types.
  Red and blue nodes each have one $w$-type and one $w'$-type neighbour.
  The value of $\mpDiamond e'$ at such a node is therefore the sum
  of the values at these two neighbours, which yields a symmetric nice function
  in $(r,b)$ by the induction hypothesis and closure properties.
  On a $w$-type node, the value of $\mpDiamond e'$ is a linear combination of
  $S_r(r,b)$ and $S_b(r,b)$ with coefficients $r$ and $b$. This again yields a nice function of $(r,b)$.
  The case of $w'$-type nodes is obtained from this by swapping $r$ and $b$,
  which explains the occurrence of $F(b,r)$, $\alpha(b,r)$ and $\beta(b,r)$.
  A straightforward bookkeeping of the affine and simple parts gives the
  desired shape.
\end{proof}

In particular, evaluating $e$ at the root of $T[r,b,k]$ always yields a value
of the form
\[
  e(\text{root}) = F(r,b) + \alpha(r,b) + \beta(r,b), 
\]
for some nice $F$, affine $\alpha$, and simple~$\beta$ depending only on $e$. We recall that
the parameter $k$ bounds the $\mpDiamond$-depth of $e$, ensuring the root ``sees''
precisely the portion of the tree we control.

\paragraph{Nice functions cannot encode \texorpdfstring{$|r-b|$}{|r−b|}}
The next lemma is the technical heart of the argument: 
functions of the above form cannot coincide with
the absolute difference $|r-b|$ over the integers.

\begin{restatable}{lemma}{blobinomialsvsrelu}\label{thm:blobinomials-vs-relu}
  Let $F : \R^2 \to \R$ be nice, let $\alpha : \R^2 \to \R$ be affine,
  and let $\beta : \R^2 \to \R$ be simple.
  Then there exist $x,y \in \N$ such that
  \[
    F(x,y) + \alpha(x,y) + \beta(x,y) \;\neq\; |x-y|.
  \]
\end{restatable}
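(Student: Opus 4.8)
The plan is to argue by contradiction: suppose $F(x,y)+\alpha(x,y)+\beta(x,y)=|x-y|$ for all $x,y\in\N$, where $F=\sum_{i=1}^m p_i\,\beta_i$ is nice (each $p_i$ a polynomial, each $\beta_i$ simple and symmetric), $\alpha$ affine, and $\beta$ simple. The first step is a \emph{pigeonhole reduction to a single polynomial regime}. Each simple function $\beta_i$ and $\beta$ partitions $\R^2$ into finitely many pieces on which it is constant; intersecting all these partitions gives a finite partition of $\R^2$ into regions $R_1,\dots,R_N$ on each of which \emph{every} $\beta_i$ and $\beta$ is constant, so that $F+\alpha+\beta$ agrees on $R_\ell$ with a fixed polynomial $Q_\ell(x,y)$ (the polynomial $\sum_i c_{i,\ell}p_i(x,y)+\alpha(x,y)+c_\ell$). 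Since there are infinitely many integer points and finitely many regions, at least one region $R_\ell$ contains infinitely many points of $\N^2$; but more is needed — I want infinitely many integer points lying on a \emph{line} inside one region, so that a one-variable polynomial identity can be extracted.

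The second step supplies such a line. Restrict attention to the diagonal-type lines $y=x+t$ for fixed $t\in\Z$: on such a line $|x-y|=|t|$ is constant. For each fixed $t$, the points $\{(x,x+t):x\in\N\}$ are distributed among the finitely many regions $R_\ell$, so some region — call it $R_{\ell(t)}$ — contains infinitely many of them. On that region $Q_{\ell(t)}(x,x+t)$ is a univariate polynomial in $x$ that agrees with the constant $|t|$ at infinitely many integers, hence $Q_{\ell(t)}(x,x+t)\equiv|t|$ as a polynomial in $x$. Now let $t$ range: there are infinitely many values of $t$ but only $N$ regions, so by pigeonhole one region $R_{\ell^\star}$ serves as $R_{\ell(t)}$ for infinitely many distinct $t$, say $t\in T$ with $T$ infinite. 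For each such $t$ we get $Q_{\ell^\star}(x,x+t)=|t|$ for infinitely many $x$, hence identically in $x$.

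The third step derives the contradiction by a \emph{two-variable polynomial argument}. Consider the single polynomial $Q_{\ell^\star}(x,y)$. For every $t\in T$ the univariate polynomial $x\mapsto Q_{\ell^\star}(x,x+t)-|t|$ vanishes identically, i.e.\ the polynomial $g(x,s):=Q_{\ell^\star}(x,x+s)-s$ (viewing $s$ as the second variable, valid since all $t\in T\subseteq\N$ or can be split into the positive and negative halves by pigeonhole so that $|t|=\pm t$ is linear in $t$) vanishes on $\{(x,s):x\in\N,\ s\in T\}$, an infinite grid, and therefore $g\equiv 0$ as a polynomial in two variables. But then $Q_{\ell^\star}(x,y)=y-x+(\text{something})$ — more precisely $Q_{\ell^\star}(x,x+s)=s$ for \emph{all} $s\in\R$, which forces $Q_{\ell^\star}$ to be the affine polynomial $y-x$ (up to the sign chosen) when we fix any value of $x$. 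Symmetrically, repeating the whole argument with the \emph{other} sign family — the infinitely many $t$ with $|t|=-t$, or equivalently using lines $y=x+t$ with $t$ negative, noting $F,S_r,S_b$ force symmetry of $F$ under $(x,y)\mapsto(y,x)$ — yields a region on which the governing polynomial equals $x-y$. Since $y-x$ and $x-y$ are distinct polynomials, if these two regions were the same region we would already have a contradiction; if they are different regions $R_a\neq R_b$, we invoke symmetry: because the white-node value at the root involves $F$ which is a sum of \emph{symmetric} $\beta_i$'s times polynomials $p_i$, and $\alpha,\beta$ are the only asymmetric contributions, one checks that $F(x,y)-F(y,x)=0$, so $|x-y|-|y-x|=0$ gives $\alpha(x,y)-\alpha(y,x)+\beta(x,y)-\beta(y,x)=0$; since $\alpha$ is affine and $\beta$ simple (finitely-valued), $\alpha(x,y)-\alpha(y,x)$ is a linear function equal to a finitely-valued function, hence identically zero, so $\alpha$ is symmetric. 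Then $F+\alpha+\beta$ is a sum of a symmetric nice function, a symmetric affine function, and a simple function $\beta$, forcing $|x-y|-\beta(x,y)$ to be symmetric plus nice with \emph{symmetric} polynomial parts — but $|x-y|$ restricted to any region where $\beta$ is constant must then equal a \emph{symmetric} polynomial plus a constant, contradicting the fact that on the line $y=x+t$ it is $|t|$ while on $y=x-t$ it is also $|t|$ yet the symmetric polynomial would have to be genuinely non-constant in $t$ to track $|t|=|t|$ — no, the cleanest finish is: a symmetric polynomial restricted to $y=x+t$ is a polynomial in $x$ whose value at $t$ and at $-t$ must match the symmetric structure, yet we showed it equals $|t|$, and $|t|$ is not a polynomial in $t$, contradiction.

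\textbf{Main obstacle.} The delicate point is the bookkeeping in the last step: one must carefully track how the asymmetric pieces $\alpha$ and $\beta$ interact with the symmetric nice part $F$, and ensure that the pigeonhole over lines $y=x+t$ produces a \emph{single} polynomial regime valid for infinitely many $t$ of the \emph{same sign}, so that $|t|$ becomes a genuine polynomial (linear) expression in $t$ and the resulting two-variable polynomial identity can be contradicted. The finitely-many-values property of simple functions is what ultimately kills any asymmetric contribution, and the non-polynomial kink of $|r-b|$ at $r=b$ is what the symmetric polynomial part cannot absorb; making this rigorous is the crux. I expect the cleanest writeup first establishes $\alpha$ and $\beta$ are forced to be symmetric (using finitely-many-values of $\beta$ against linearity of $\alpha$), reducing to the statement that a symmetric nice function cannot equal $|x-y|$ on $\N^2$, and then runs the line-restriction pigeonhole to get a symmetric univariate polynomial equal to $|t|$ at infinitely many $t$ — which is absurd.
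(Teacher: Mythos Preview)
Your steps 1 and 2 are essentially the paper's approach and are correct: pigeonhole along each diagonal $y=x+t$ to freeze all the simple functions, then pigeonhole over $t$ to get a single polynomial $Q_{\ell^\star}$ valid for infinitely many positive $t\in T$, and hence $Q_{\ell^\star}(x,y)=y-x$ as a polynomial identity. The gap is in step~3. Your claim that $F(x,y)=F(y,x)$ is wrong: the $\beta_i$ are symmetric but the $p_i$ need not be, so $F=\sum_i p_i\beta_i$ is not symmetric in general (and your appeal to $S_r,S_b$ is importing structure from the previous lemma that the present statement does not assume). None of the three finishes you sketch actually closes the argument.

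The missing idea is to use the symmetry of the $\beta_i$ \emph{pointwise}, not to deduce symmetry of $F$. For $t\in T$ and the infinitely many $m$ with $(m,m+t)\in R_{\ell^\star}$, symmetry of each $\beta_i$ gives $\beta_i(m+t,m)=\beta_i(m,m+t)=c_{i,\ell^\star}$ as well. Hence on the reflected points $(m+t,m)$ the \emph{same} polynomial coefficients govern, and only the non-symmetric $\beta$ may change:
\[
G(m+t,m)=Q_{\ell^\star}(m+t,m)-c_{\ell^\star}+\beta(m+t,m).
\]
Since $Q_{\ell^\star}(x,y)=y-x$ and $G(m+t,m)=t$, this forces $\beta(m+t,m)=2t+c_{\ell^\star}$; as $t$ ranges over the infinite set $T$ this makes $\beta$ take infinitely many values, contradicting simplicity. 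The paper organises the same idea slightly differently: it carries separate constants $b,b'$ for $\beta$ on the two diagonals through the second pigeonhole, obtains $Q(x+d,x)\equiv d-b$ and $Q(x,x+d)\equiv d-b'$ for a single polynomial $Q$ and all $d$ in an infinite set, shows via the change of variables $u=x-y$ that $Q(x,y)=C_0(x-y)$, and then $C_0(d)+C_0(-d)=2d-(b+b')$ equates an even polynomial with one having a nonzero odd part.
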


\begin{proof}[Proof sketch]
Write $F(x,y)=\sum_{j=1}^k p_j(x,y)\,\beta_j(x,y)$ with $p_j$ polynomials and
$\beta_j$ simple symmetric. Assume for contradiction that
\[
G(x,y)\coloneqq F(x,y)+\alpha(x,y)+\beta(x,y)=|x-y|
\quad\text{for all }(x,y)\in\mathbb{N}^2.
\]
Fix $\delta\in\mathbb{N}$ and consider the diagonal
\[
L_\delta\coloneqq\{(m+\delta,m)\mid m\in\mathbb{N}\}.
\]
Along $L_\delta$ we have $G(m+\delta,m)=\delta$, while the tuple
\[
\bigl(\beta_1(m+\delta,m),\dots,\beta_k(m+\delta,m),\beta(m+\delta,m)\bigr)
\]
takes only finitely many values. Hence, on an infinite subset of $L_\delta$,
all these simple functions are constant: there exist real constants
$c_1(\delta),\dots,c_k(\delta)$ and $b_\delta$ such that
$\beta_j(m+\delta,m)=c_j(\delta)$ for all $j$ and
$\beta(m+\delta,m)=b_\delta$ on that subset. On this subset, $G$ reduces to a
single polynomial
\[
Q_\delta(r,b)=p_0(r,b)+\sum_{j=1}^k c_j(\delta)\,p_j(r,b)
\]
satisfying $Q_\delta(m+\delta,m)+b_\delta=\delta$, hence
\[
Q_\delta(x+\delta,x)\equiv \delta-b_\delta,
\]
where $\equiv$ denotes equality of polynomials on the entire domain. Using the
symmetry of the $\beta_j$ and again the finiteness of the range of $\beta$, we
similarly obtain
\[
Q_\delta(x,x+\delta)\equiv \delta-b'_\delta
\]
for some constant $b'_\delta$.

Also when $\delta$ varies, the finite tuple
$(c_1(\delta),\dots,c_k(\delta),b_\delta,b'_\delta)$
can take only finitely many values. Thus there exists an infinite set
$D\subseteq\mathbb{N}$ and fixed constants $c_1,\dots,c_k,b,b'$ such that for
all $d\in D$ the corresponding polynomial $Q_d$ equals a single polynomial
\[
Q(r,b):=p_0(r,b)+\sum_{j=1}^k c_j\,p_j(r,b),
\]
and satisfies
\[
Q(x+d,x)\equiv d-b,
\qquad
Q(x,x+d)\equiv d-b'.
\]
Set $u=x-y$ and $v=y$ and define
$\widetilde Q(u,v):=Q(u+v,v)$. For each $d\in D$ we have
$\widetilde Q(d,v)=Q(v+d,v)\equiv d-b$, which is constant in $v$. Hence
$\widetilde Q(u,v)$ does not depend on $v$, so there is a univariate polynomial
$C$ such that $Q(x,y)=C(x-y)$ depends only on $x-y$. For $d\in D$ this gives
$C(d)=d-b$, and using $Q(x,x+d)\equiv d-b'$ also $C(-d)=d-b'$. Thus
\[
C(d)+C(-d)=2d-(b+b').
\]
The left-hand side is an even polynomial in $d$, whereas the right-hand side is
not. The equality holds for infinitely many $d$ values, so they should be also equal as polynomials, which is impossible.
This is our contradiction. 
\end{proof}

\paragraph{ReLU beats eventually constant activations}
We can now put everything together.

\begin{theorem}\label{thm:relu-vs-trrelu}
  The following numerical query is not expressible in $\smplang{\step}$: 
  \[
    Q\coloneqq{\normalfont\relu}\bigl(\mpDiamond {\sf red} - \mpDiamond {\sf blue}\bigr).
  \]
\end{theorem}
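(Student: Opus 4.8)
The plan is to assume, toward a contradiction, that $Q$ is definable by some $\smplang{\step}$-expression $e$, and to run this $e$ through the red--blue symmetric trees. Let $k\ge 1$ be an upper bound on the $\Diamond$-depth of $e$. Applying \Cref{thm:stepmplang-on-rbgraph} to $e$ with this $k$ supplies a nice function $F$, an affine function $\alpha$, and a simple function $\beta$---all depending only on $e$ and $k$---such that at the root of $T[r,b,k]$, which is a $w$-type white node, one has $e(\text{root})=F(r,b)+\alpha(r,b)+\beta(r,b)$ for every $r,b\in\N$. Since this root has exactly $r$ red and $b$ blue neighbours, $Q$ evaluates there to $\relu(r-b)$, so the assumption that $e$ expresses $Q$ gives
\[
F(r,b)+\alpha(r,b)+\beta(r,b)=\relu(r-b)\qquad\text{for all }r,b\in\N .
\]

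The next step is to exploit the left--right symmetry of the construction. Swapping the two colour parameters---that is, reading the same identity on $T[b,r,k]$, whose root is a $w$-type white node with $b$ red and $r$ blue neighbours---yields $F(b,r)+\alpha(b,r)+\beta(b,r)=\relu(b-r)$ for all $r,b\in\N$ (equivalently, this second identity can be read off from the $w'$-type nodes of $T[r,b,k]$). Adding the two identities collapses the two truncations into an absolute value:
\[
\bigl(F(r,b)+F(b,r)\bigr)+\bigl(\alpha(r,b)+\alpha(b,r)\bigr)+\bigl(\beta(r,b)+\beta(b,r)\bigr)=|r-b|\qquad\text{for all }r,b\in\N .
\]
I would then check that the symmetrised pieces still have the required shapes: writing $F=\sum_i p_i\beta_i$ with $p_i$ polynomial and $\beta_i$ simple and symmetric, the identity $\beta_i(y,x)=\beta_i(x,y)$ gives $F(x,y)+F(y,x)=\sum_i\bigl(p_i(x,y)+p_i(y,x)\bigr)\beta_i(x,y)$, which is nice; $\alpha(x,y)+\alpha(y,x)$ is affine; and $\beta(x,y)+\beta(y,x)$ has finite range, hence is simple. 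So the left-hand side is of the form (nice) $+$ (affine) $+$ (simple) and agrees with $|x-y|$ on all of $\N^2$, which is exactly what \Cref{thm:blobinomials-vs-relu} forbids. This contradiction proves the theorem.

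Since \Cref{thm:stepmplang-on-rbgraph} and \Cref{thm:blobinomials-vs-relu} are already established, the only genuinely new point is small: a single evaluation at $w$-type roots gives only $\relu(r-b)$, not $|r-b|$, so one must also use the mirrored trees (equivalently, the $w'$-type nodes) and verify that summing the two contributions stays inside the class (nice) $+$ (affine) $+$ (simple)---which works precisely because niceness was defined with \emph{symmetric} simple factors. I do not expect any real obstacle here; alternatively, one could skip the mirroring altogether by observing that the proof of \Cref{thm:blobinomials-vs-relu} goes through essentially verbatim with $|x-y|$ replaced by $\relu(x-y)$, since along the anti-diagonal $\{(m,m+\delta):m\in\N\}$ the target becomes the constant $0$ rather than $\delta$ and the concluding parity argument is untouched.
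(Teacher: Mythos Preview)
Your proposal is correct and follows essentially the same route as the paper: reduce to the absolute value $|r-b|$ on the red--blue symmetric trees and invoke \Cref{thm:blobinomials-vs-relu}. The only cosmetic difference is the order of symmetrisation---the paper first swaps colours in the expression to form $e^*\coloneqq e+e'$ and then applies \Cref{thm:stepmplang-on-rbgraph} to $e^*$, whereas you apply the lemma to $e$ and symmetrise the resulting $F,\alpha,\beta$ afterwards; your explicit check that the symmetrised pieces stay nice/affine/simple (using that the $\beta_i$ in the definition of ``nice'' are symmetric) is exactly the small verification this reordering requires.
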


\begin{proof}
  Suppose, towards a contradiction, that there is a \smplang{$\step$}-expression $e$
  defining the numerical query $Q$.
  By exchanging the roles of ${\sf red}$ and ${\sf blue}$ in $e$ we obtain
  another \smplang{$\step$}-expression $e'$ that defines the query $\relu\bigl(\mpDiamond {\sf blue} - \mpDiamond {\sf red}\bigr)$.  Consider now the \smplang{$\step$}-expression $e^* \coloneqq e- e'$.
  By construction, $e^*$ defines the absolute difference
  $\bigl|\mpDiamond {\sf red} - \mpDiamond {\sf blue}\bigr|$ at every node.
  Let $k$ be the $\mpDiamond$-depth of $e^*$. Evaluating $e^*$ at the root
  of $T[r,b,k]$ only depends on the radius-$k$ neighbourhood of the root,
  so \Cref{thm:stepmplang-on-rbgraph} applies: there exist
  a nice function $F$, an affine function $\alpha$ and a simple function
  $\beta$ such that
$e^*(\text{root of }T[r,b,k])=F(r,b) + \alpha(r,b) + \beta(r,b)$ for all $r,b \in \N$.
  On the other hand, by the definition of $T[r,b,k]$,
  the root has exactly $r$ red and $b$ blue neighbours, so
  $e^*(\text{root of }T[r,b,k]) = |r-b|$.  This contradicts \Cref{thm:blobinomials-vs-relu}. 
\end{proof}

\subsection{Comparison with the separation of \relu-GNNs from \trrelu-GNNs.} 
Let us once more clarify the relationship between 
our results in this section and the work by Benedikt et al.~\cite{DBLP:conf/icalp/BenediktLMT24}.  They separate \relu-GNNs from \trrelu-GNNs \emph{without} linear layers by showing that the query $\mpDiamond\!\mpDiamond{\sf green} = \mpDiamond\!\mpDiamond{\sf blue}$ is expressible by \relu-GNNs but not by \trrelu-GNNs, where they could restrict attention to
graphs of depth two. In contrast, our separation must handle identity (linear) layers, and as proven in Proposition~\ref{prop:compare.presburger}, we \emph{can} express said query in $\smplang{\step}$, thus with
$\{\trrelu,{\rm id}\}$-GNNs.  Moreover, on graphs of bounded depth, the query of Theorem~\ref{thm:relu-vs-trrelu} is in fact expressible (proof omitted).  This explains why our proof must consider graphs of arbitrary depth.\looseness=-1

\section{Conclusion and Open Problems}
\label{sec:final}

{
\begin{table}[t] 
\caption{Graphical summary of results by \emph{query type} (vertical) and \emph{input setting} (horizontal). Results highlighted in gray represent ideal placement (most restrictive for negative results, least restrictive for positive results).}\label{tbl:summary}
\centering
\footnotesize
\begin{tikzpicture}[
  idealbox/.style={
    draw, rounded corners,
    align=left,
    fill=black!3,   
    draw=black,
    inner xsep=3pt, inner ysep=2.5pt,
    font=\scriptsize,
    text width=5.35cm
  },
  box/.style={
    draw, rounded corners,
    align=left,
    inner xsep=3pt, inner ysep=2.5pt,
    font=\scriptsize,
    text width=5.35cm
  },
  axis/.style={font=\small}
]

\draw (0,-0.8) rectangle (12,6);
\draw (6,-.8) -- (6,6);
\draw (0,3) -- (12,3);

\node[axis] at (3,6.35) {$d$-embedded graphs};
\node[axis] at (9,6.35) {coloured graphs};
\node[axis, rotate=90] at (-0.6,4.5) {numerical};
\node[axis, rotate=90] at (-0.6,1.2) {Boolean};

\coordinate (TL) at (0.25,5.75);
\coordinate (TR) at (6.25,5.75);
\coordinate (BL) at (0.25,2.7);
\coordinate (BR) at (6.25,2.7);

\node[idealbox, anchor=north west] (nf) at (TL) {%
\textbf{A-MPLang} is equivalent to 
affine combinations of walk-summed features
(Thm.~\ref{amplang-normalform})
};

\node[box, anchor=north west] (evc) at (TR) {%
\textbf{$\smplang{\Sigma}\equiv \smplang{\step}$} for $\Sigma$ eventually constant and rational coefficients (Thm.~\ref{thm:eventually-constant-activation-functions})
};
\node[box, anchor=north west, below=3mm of evc] (relu) {%
\textbf{$\smplang{\relu}\ \supsetneq\ \smplang{\Sigma}$}
for $\Sigma$ eventually constant and rational coefficients
(Thm.~\ref{thm:relusep})
};
\node[box, anchor=north west, below=3mm of relu] (relub) {%
\textbf{$\smplang{\relu}\ \supsetneq\ \smplang{\step}$}
(Prop.~\ref{reluvsstep})
};

\node[idealbox, anchor=north west] (stepB) at (BL) {%
\textbf{$\smplang{\step}_{\mathbb{B}}$} is
closed under Boolean operations; ML connectives and thresholds (Sec.~\ref{sec:bounded})
};

\node[idealbox, anchor=north west] (Aq) at (BR) {%
\textbf{$\amplang_{\mathbb{B}}$} with rational coefficients
is closed under $\neg$, but not under $\land$ nor Boolean $\mlDiamond$ (Prop.~\ref{amplang-negation-closure}, Prop.~\ref{prop:nonclosure})
};
\node[idealbox, anchor=north west, below=3mm of Aq] (Ar) {%
\textbf{$\amplang_{\mathbb{B}}$} with real coefficients is
not closed under $\neg$ (Prop.~\ref{prop:amplang-not-cloused-under-negation})
};
\node[idealbox, anchor=north west, below=3mm of Ar] (comp) {%
\textbf{$\amplang_{\mathbb{B}}$ and ML} incomparable on coloured graphs
(Remark~\ref{remark:incompML})\\
\textbf{$\mathcal{L}\text{-MP}^2 \subsetneq \smplang{\step}$} 
(Prop.~\ref{prop:compare.presburger})
};
\end{tikzpicture}
\label{fig:results-map}
\end{table}
}

{ 
A summary of our results and how they align with our assumptions on the types of
embeddings and queries is in \Cref{fig:results-map}.
Ideal placements (most restrictive for negative results,
least restrictive for positive results), are slightly gray-shaded. }
\colorlet{boolshade}{gray!10}

Small design choices in
message-passing---especially activation functions---can significantly affect
expressive behaviour of GNNs.  We have seen that \mplang\ provides an attractive and transparent logical framework to investigate this expressivity, not only for boolean queries, but also for numerical queries.
Our work confirms that expressive power in GNN-style models
is best viewed through the interaction of basic numerical operations and local
graph structure, rather than architectural templates alone. A more systematic
account of this perspective may help clarify which
features
truly matter for graph-based learning.

{
From a practical perspective, our results suggest that two architectural choices matter most: the activation class and whether identity layers are available. We show that for all uneven eventually constant activations, expressivity collapses to a common class, so combining several such activations does not by itself increase expressivity. By contrast, adding identity layers can increase Boolean expressivity, and unbounded activations (such as ReLU) are strictly stronger for numerical queries. In this sense, our results provide concrete and actionable information for a critical aspect of GNN architecture design.
}

Several natural questions remain open.  Theorem~\ref{thm:relu-vs-trrelu} gives a numerical query
on coloured graphs expressible in $\smplang{\relu}$ but not in $\smplang{\step}$. Does there exist a boolean such query?  
Also, while some of our positive results are valid over $d$-embeddings in general, Theorem~\ref{thm:eventually-constant-activation-functions} is stated over coloured graphs only.  Does there similarly exist a ``universal'' activation function for the uneven eventually constant activation functions, on arbitrary $d$-embeddings?

There are also several directions for further research.  Of course, the existing body of work on expressiveness of machine learning models is huge, so we focus on directions that directly link to our work presented here. For example, how does the expressive power of \mplang with \relu compare to that with other unbounded activation functions?  Also, even for bounded activation functions (cf.~Remark~\ref{limitremark}), does expressiveness formally shift when considering ``soft'' activation functions such as sigmoid or tanh?

Finally, we believe numerical logic frameworks for graph embeddings, and methods for them such as those developed in this paper, may be a good starting point to understand expressiveness issues underlying the recent interest in
algorithm learning (e.g., \cite{velickovic-cell,nerem,Wittig2026}).

\begin{acks}
Barceló is funded by ANID - Millennium Science Initiative Program - Code ICN17002, and also by the National Center for Artificial Intelligence CENIA FB210017, Basal ANID.
Geerts is supported by FWO project G019222N. Pakhomenko and Van den Bussche are
supported by the \grantsponsor{FAIR}{Flanders AI Program (FAIR)}{https://www.flandersairesearch.be/en}. Lanzinger is supported by the Vienna Science and Technology Fund (WWTF) [10.47379/ICT2201, 10.47379/VRG18013].
\end{acks}
\bibliography{refs} %

\appendix
\section{Deferred Proofs for Section 4}\label{app:amplang}

\amplangnormalform*

\begin{proof}
    We prove it with structural induction.
    \begin{itemize}
            \item[(1)] In the case that $e=1$,
            set $c_0 = 1$ and all the $c_{0,j}$ to 0.
            \item[($P_k$)] In the case that $e = P_k$ for some $k$,
            set $c_{0,k} = 1$ and the $c_{0,j}$ for $j \neq k$ to 0.
            Set $c_0$ to 0.
            \item[(+)] For $e=f+g$,
            we have by induction hypothesis that for all graphs $G$
            and $d$-embeddings $\gamma$ on them,
            $f(G,\gamma)(v)=\sum_{i=0}^{n_1}
            \big[ c^f_i\mpDiamond^i1 + \sum_{j=1}^{d} c^f_{i,j}\mpDiamond^iP_j \big]$
            and
            $g(G,\gamma)(v)=\sum_{i=0}^{n_2}
            \big[ c^g_i\mpDiamond^i1 + \sum_{j=1}^{d} c^g_{i,j}\mpDiamond^iP_j \big]$
            for some $n_1,n_2 \leq n$ and constants $c^f_i, c^f_{i,j}, c^g_i$
            and $c^g_{i,j}$,
            where $n \coloneqq \text{max}\{n_1,n_2\}$.
            Without loss of generality say $n_1 = n$.
            Define $c^e_i = c^f_i + c^g_i$
            and $c^e_{i,j} = c^f_{i,j} + c^g_{i,j}$ for $i \in [0,n_2]$ and $d \in [1,d]$,
            then $c^e_i = c^f_i$ and $c^e_{i,j} = c^f_{i,j}$
            for $i \in [n_2+1,n]$ and $d \in [1,d]$ and the result follows.
            \item[($a$)] For $e=ae'$, we have by induction hypothesis that
            ${e'}(G)(v)=\sum_{i=0}^n
            \big[ c^{e'}_i\mpDiamond^i1 + \sum_{j=1}^{d} c^{e'}_{i,j}\mpDiamond^iP_j \big]$
            for some $c^{e'}_i$ and $c^{e'}_{i,j}$, so we just set $c_i^e = ac^{e'}_i$
            and $c_{i,j}^e = ac^{e'}_{i,j}$
            and we get the result
            (and if $ae'$ is rational, $e$ is rational).
            \item[($\mpDiamond$)] In the case of $e = \mpDiamond e'$,
            we have for all graphs $G$ and all $d$-embeddings $\gamma$
            on them that
            ${e'}(G)(v)=\sum_{i=0}^n
            \big[ c^{e'}_i\mpDiamond^i1 + \sum_{j=1}^{d} c^{e'}_{i,j}\mpDiamond^iP_j \big]$
            for some $c^{e'}_i$ and $c^{e'}_{i,j}$.
            We will define our new constants for the expression $e$ based on this.
            Say $v$ has $k$ neighbours in total,
            $u_1,\ldots,u_k$.
            Note that $ (\mpDiamond^i1)(G,\gamma)(v) =
            \sum_{j=1}^{k} (\mpDiamond^{i-1}1)(G,\gamma)(u_j)$,
            and for all $j \in [d]$
            we have $(\mpDiamond^iP_j)(G,\gamma)(v) =
            \sum_{m=1}^{k} (\mpDiamond^{i-1}P_j)(G,\gamma)(u_m)$.
            We now observe the following:
            \allowdisplaybreaks
            \begin{align*}
                e(G,\gamma)(v) &= \sum_{\ell=1}^k {e'}(G,\gamma)({u_\ell}) \\
                &= \sum_{\ell=1}^k \sum_{i=0}^{n-1} \big[ c_i(\mpDiamond^i1)(G,\gamma)(u_\ell)
                + \sum_{j=1}^{d} c_{i,j}(\mpDiamond^iP_j)(G,\gamma)(u_\ell) \big] \\
                &= \sum_{i=0}^{n-1} \sum_{\ell=1}^k \big[ c_i(\mpDiamond^i1)(G,\gamma)(u_\ell)
                + \sum_{j=1}^{d} c_{i,j}(\mpDiamond^iP_j)(G,\gamma)(u_\ell) \big] \\
                &= \sum_{i=0}^{n-1} \big[ \sum_{\ell=1}^k c_i(\mpDiamond^i1)(G,\gamma)(u_\ell) 
                + \sum_{\ell=1}^k \sum_{j=1}^{d} c_{i,j}(\mpDiamond^iP_j)(G,\gamma)(u_\ell) \big]\\
                &= \sum_{i=0}^{n-1} \big[ c_i\sum_{\ell=1}^k (\mpDiamond^i1)(G,\gamma)(u_\ell) 
                + \sum_{j=1}^{d}c_{i,j}\sum_{\ell=1}^k (\mpDiamond^iP_j)(G,\gamma)(u_\ell) \big]\\
                &= \sum_{i=0}^{n-1} \big[ c_i (\mpDiamond^{i+1}1)(G,\gamma)(v) + 
                 \sum_{j=1}^{d}c_{i,j}(\mpDiamond^{i+1}P_j)(G,\gamma)(v) \big]\\
                &= \sum_{i=1}^{n} \big[ c_{i-1} (\mpDiamond^{i}1)(G,\gamma)(v) + 
                 \sum_{j=1}^{d}c_{i-1,j}(\mpDiamond^{i}P_j)(G,\gamma)(v) \big]
            \end{align*}
            So, we get our result for the expression $e$ by setting $c^e_0 = 0$,
            $c^e_i = c^{e'}_{i-1}$, and
            $c^e_{i,j} = c^{e'}_{i-1,j}$ for $i \in [1,n]$.
        \end{itemize}
        Note that in each case, if $e$ is rational,
        the equivalent expression in our normal form is also rational.
\end{proof}

\amplangnonclosure*
\begin{proof}
    The non-closure under the $\mpDiamond$
    was shown in the section.
    For the non-closure under conjunction,
    consider the following trees:
\begin{center}
\includegraphics[height=1.2cm]{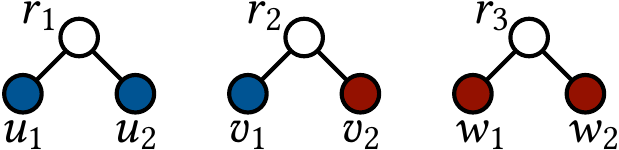}
\end{center}

The following can be shown for every \amplang expression $e$ by structural
induction: 
\begin{enumerate} 
\item $e(r_1) - e(r_2) = e(r_2) - e(r_3)$. 
\item $[e(u_1) + e(u_2)] - [e(v_1) + e(v_2)] = [e(v_1) + e(v_2)] - [e(w_1) + e(w_2)]$.
\end{enumerate}  
    The colouring is a 3-embedding $\gamma$
where $\gamma(v) = (1,0,0)$
if $v$ is white, $\gamma(v) = (0,1,0)$ if $v$ is blue,
and $\gamma(v) = (0,0,1)$ if it is red.
Then, the structural induction proof is as follows.
\begin{itemize}
    \item[(1)] For $e=1$, we have
        \begin{enumerate}
            \item $1 - 1 = 1 - 1$
            \item $[1 + 1] - [1 + 1] = [1 + 1] - [1 + 1]$
        \end{enumerate}
    \item[($P_i$)] For $e = P_i$ further make the case distinction on $i$:
        \begin{itemize}
            \item[$e = P_1$] 
                \begin{enumerate}
                    \item $1 - 1 = 1 - 1$
                    \item $[0 + 0] - [0 + 0] = [0 + 0] - [0 + 0]$
                \end{enumerate}
            \item[$e = P_2$]
                \begin{enumerate}
                    \item $0 - 0 = 0 - 0$
                    \item $[1 + 1] - [1 + 0] = [1 + 0] - [0 + 0]$
                \end{enumerate}
            \item[$e = P_3$]
                \begin{enumerate}
                    \item $0 - 0 = 0 - 0$
                    \item $[0 + 0] - [0 + 1] = [0 + 1] - [1 + 1]$
                \end{enumerate}
        \end{itemize}
    \item[($a$)] If $e = ae'$, then we have by induction and simple arithmetic that
        \begin{enumerate} 
        \item $a[e'(r_1) - e'(r_2)] = a[e'(r_2) - e'(r_3)]$. 
        \item $a \bigg[[e'(u_1) + e'(u_2)] - [e'(v_1) + e'(v_2)]\bigg] = 
        a\bigg[[e'(v_1) + e'(v_2)] - [e'(w_1) + e'(w_2)]\bigg]$
        \end{enumerate}
        which of course gives us
        \begin{enumerate} 
        \item $ae'(r_1) - ae'(r_2) = ae'(r_2) - ae'(r_3)$. 
        \item $a \bigg[[e'(u_1) + e'(u_2)] - [e'(v_1) + e'(v_2)]\bigg] = 
        [ae'(v_1) + ae'(v_2)] - [ae'(w_1) + ae'(w_2)]$.
        \end{enumerate}
        which proves the statement.
    \item[(+)] Similarly trivial.
    \item[($\mpDiamond$)] Say $e = \mpDiamond e'$.
        We look at the two cases separately.
        \begin{enumerate}
            \item We know that by induction hypothesis, 
            for $e'$ we have (2).
            Then (1) follows because $\mpDiamond e'(r_i) = e'(f(i)_1) + e'(f(i)_2)$
            where $f(1) = u$, $f(2) = v$ and $f(3) = w$.
            \item We have (1) for $e'$ by induction hypothesis,
            therefore by simple arithmetic
            \[ 2e'(r_1) - 2e'(r_2) = 2e'(r_2) - 2e'(r_3). \]
            Furthermore, $\mpDiamond e'(f(i)_j) = e'(r_i)$ for $j \in \{1,2\}$
            where $f$ as above. This proves the statement.
        \end{enumerate}
\end{itemize}
\end{proof}

In preparation for the proof of \Cref{prop:amplang-not-cloused-under-negation} we prepare two lemmas. The first is a well known standard fact about polynomials and we recall it simply for ease of presentation.

\begin{lemma}\label{lem:eventual-sign}
Let $p\in\mathbb{R}[x]$ be non-zero and let $d:=\deg p$. Then there exist $N\in\mathbb{N}$ and $c>0$ such that
$|p(n)|\ge c n^{d}$ for all $n\ge N$. In particular, $p(n)$ has eventually constant sign, and if $d\ge 1$ then
$|p(n)|\to\infty$ as $n\to\infty$.
\end{lemma}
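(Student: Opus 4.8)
The plan is to reduce the estimate to the leading term $a_d x^{d}$ and to control the lower-order terms \emph{uniformly} once the argument is large. Write $p(x)=a_d x^{d}+a_{d-1}x^{d-1}+\cdots+a_0$ with $a_d\neq 0$, and for $x>0$ factor out the leading monomial:
\[
p(x)=a_d x^{d}\Bigl(1+\sum_{i=0}^{d-1}\frac{a_i}{a_d}\,x^{\,i-d}\Bigr).
\]

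First I would bound the sum in parentheses. For $x\ge 1$ every exponent $i-d$ is negative, so $x^{\,i-d}\le x^{-1}$, and therefore
$\bigl|\sum_{i=0}^{d-1}\frac{a_i}{a_d}x^{\,i-d}\bigr|\le \frac1x\sum_{i=0}^{d-1}\frac{|a_i|}{|a_d|}=:\frac{M}{x}$. Taking $N:=\max\{1,\lceil 2M\rceil\}$, for every integer $n\ge N$ this quantity is at most $\tfrac12$, so the parenthetical factor lies in $[\tfrac12,\tfrac32]$; in particular it is strictly positive. Setting $c:=|a_d|/2>0$ then yields $|p(n)|\ge c\,n^{d}$ for all $n\ge N$, which is the main claim.

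The two consequences are then immediate. Since for $n\ge N$ the parenthetical factor is positive, $\operatorname{sign}(p(n))=\operatorname{sign}(a_d)$ for all such $n$, so $p$ has eventually constant sign. If $d\ge 1$, then $c\,n^{d}\to\infty$, and the lower bound forces $|p(n)|\to\infty$.

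There is no genuine obstacle here; the only point requiring a little care is making the bound on the lower-order terms \emph{uniform} in $n$, which is exactly what the factor $\tfrac1x$ (valid once $x\ge 1$) provides, allowing a single threshold $N$ to work for all larger $n$.
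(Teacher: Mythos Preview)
Your proof is correct and follows essentially the same approach as the paper: both arguments factor out the leading term and use that $p(n)/(a_d n^{d})\to 1$ to conclude $|p(n)|\ge \tfrac{|a_d|}{2}\,n^{d}$ for large $n$. Your version is simply more explicit about the threshold $N$ and the bound on the lower-order terms, whereas the paper states the limit and the resulting inequality in one line.
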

\begin{proof}
Let $p(x)=c_d x^{d}+\cdots$ with $c_d\neq 0$ for $d \ge 1$. Then $p(n)/(c_d n^{d})\to 1$ as $n\to\infty$, so for all large $n$,
$|p(n)|\ge \frac{|c_d|}{2}n^{d}$.
\end{proof}

The following is based on a standard construction of Liouville numbers. Our proof of \Cref{prop:amplang-not-cloused-under-negation} uses a slightly specialised form of the Liouville property and we explicitly show below how it can be observed in a standard construction of Liouville numbers. In particular we add sign alternation to the original construction by Liouville \cite{liouville1851classes}, a known alternative approach to construction via continued fractions (see, e.g., \cite{bugeaud2004approximation}).

\begin{lemma}\label{lem:two-sided-liouville-n}
There exists an irrational $r>0$ and constants $c_M>0$ (for each $M\in\mathbb{N}$) such that for every $M$
there are infinitely many pairs $(a,b)\in\mathbb{N}^2$ with
$0<ra-b<c_M\,(a+b)^{-M}$ and infinitely many pairs $(a,b)\in\mathbb{N}^2$ with
$0<b-ra<c_M\,(a+b)^{-M}$.
\end{lemma}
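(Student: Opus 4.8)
The plan is to give an explicit construction by modifying Liouville's classical lacunary series with an alternating sign, so that its partial sums approximate the limit alternately from below and from above, each time with Liouville‑quality error. Concretely, set
\[
  r \;\coloneqq\; \sum_{n=1}^{\infty} \frac{(-1)^{n+1}}{2^{n!}},
\]
so that $0<r<1$, and for $N\ge 1$ write $s_N\coloneqq\sum_{n=1}^{N}(-1)^{n+1}2^{-n!}$. Since $2^{n!}\mid 2^{N!}$ for $n\le N$, we have $s_N=p_N/2^{N!}$ for a unique $p_N\in\mathbb{Z}$, and one checks $p_N\ge 1$ for all $N$ because the partial sums stay in $[\tfrac14,\tfrac12]$.

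First I would record the elementary tail estimate: $r-s_N=\sum_{n\ge N+1}(-1)^{n+1}2^{-n!}$ is an alternating series with strictly decreasing terms, hence nonzero, with sign equal to $(-1)^{N}$ (the sign of its leading term) and with $|r-s_N|\le 2^{-(N+1)!}$. Thus for even $N$ one has $s_N<r$ (approximation from below) and for odd $N$ one has $s_N>r$ (approximation from above). Scaling by $a_N\coloneqq 2^{N!}$ and $b_N\coloneqq p_N$ gives $|r a_N-b_N|=2^{N!}|r-s_N|\le 2^{N!-(N+1)!}=2^{-N\cdot N!}$, with strict positivity inherited from the tail, while $0<s_N<1$ yields $a_N+b_N<2^{N!+1}$. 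Hence for any fixed $M\in\mathbb{N}$ and all sufficiently large $N>M$,
\[
  |r a_N-b_N|\;\le\;2^{-N\cdot N!}\;<\;2^{-M(N!+1)}\;\le\;(a_N+b_N)^{-M},
\]
because $N\cdot N!>M(N!+1)$. Restricting to even $N$ gives infinitely many pairs with $0<r a_N-b_N<(a_N+b_N)^{-M}$, and restricting to odd $N$ gives infinitely many pairs with $0<b_N-r a_N<(a_N+b_N)^{-M}$; so $c_M=1$ works for every $M$. Irrationality of $r$ follows from the same estimate by the standard Liouville argument: a representation $r=p/q$ would force $|r-s_N|\ge 1/(q\,2^{N!})$ whenever $p\,2^{N!}\neq q\,p_N$, contradicting $|r-s_N|\le 2^{-(N+1)!}$ once $q<2^{N\cdot N!}$, whereas $p\,2^{N!}=q\,p_N$ for all large $N$ would make $(s_N)$ eventually constant, which it is not.

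The construction is essentially routine; the only point needing care is that a single number $r$ must serve for one‑sided rational approximation from below and from above \emph{simultaneously} — precisely what the sign alternation supplies — and that the error is phrased in terms of $a+b$ rather than $a$. The latter is harmless, since for the pairs produced $b_N$ is comparable to $r\,a_N$, so $a_N+b_N$ and $a_N$ differ by a bounded factor, and the crude bound $a_N+b_N<2a_N$ already absorbs the change of the polynomial weight into the factorial‑rate gain. I do not expect any genuine obstacle beyond this bookkeeping; the lemma is a repackaging of Liouville's construction tailored to the needs of the proof of \Cref{prop:amplang-not-cloused-under-negation}.
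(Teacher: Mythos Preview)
Your proof is correct and follows essentially the same approach as the paper: both construct $r$ as an alternating lacunary Liouville series so that the partial sums alternate between under- and over-approximations with factorial-rate error, then take $(a,b)$ to be the denominator--numerator pair of a partial sum. The only differences are cosmetic: you work in base~$2$ with $r=\sum_{n\ge 1}(-1)^{n+1}2^{-n!}\in(0,1)$ and obtain $c_M=1$, whereas the paper works in base~$10$ with $r=1+\sum_{t\ge 1}(-1)^t 10^{-t!}$ (the shift by~$1$ merely ensures $r>0$) and obtains $c_M=2(19/9)^M$.
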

\begin{proof}
    Let
\[
r := 1+\sum_{t\ge 1}(-1)^t 10^{-t!}.
\]
For $m\ge 1$, define the \emph{$m$-th truncation} (partial sum)
\[
r_m := 1+\sum_{t=1}^{m}(-1)^t 10^{-t!}.
\]
Set $q_m:=10^{m!}$ and $p_m:=q_m r_m\in\mathbb{Z}$, so $r_m=p_m/q_m$. 

Let $\delta_m:=r-r_m$ and let $(a_m,b_m)=(q_m,p_m)\in\mathbb{N}^2$ and $\varepsilon_m:=ra_m-b_m=q_m\delta_m$.
Since the series for $r$ is alternating with strictly decreasing term magnitudes, $\delta_m$ has sign $(-1)^{m+1}$ and 
$|\delta_m| < 2\cdot 10^{-(m+1)!}$.
Therefore $\varepsilon_m$ alternates in sign (hence is $>0$ for infinitely many $m$ and $<0$ for infinitely many $m$) and
\[
0<|\varepsilon_m|
= q_m|\delta_m|
< 2\cdot 10^{m!-(m+1)!}
= \frac{2}{10^{m\cdot m!}}
= \frac{2}{q_m^{m}}.
\tag{$\ast$}
\]

Note that $\sum_{t\ge 1}10^{-t!}\le \sum_{k\ge 1}10^{-k}=1/9$, and therefore $r_m\in[8/9,10/9]$ for all $m$ and in particular $p_m>0$.
Let $n_m:=a_m+b_m=q_m+p_m=q_m(1+r_m)$. Since $r_m\le 10/9$, we have $n_m\le (19/9)q_m$, i.e. $q_m\ge (9/19)n_m$.
Fix $M\in\mathbb{N}$. For all $m\ge M$, $(\ast)$ gives
$0<|\varepsilon_m|<2/q_m^m\le 2/q_m^M \le 2(19/9)^M\,n_m^{-M}$.
So with $c_M=2(19/9)^M$, the claim follows by taking infinitely many $m$ with $\varepsilon_m>0$ and infinitely many with $\varepsilon_m<0$.
\end{proof}

\amplangnotclosedundernegation*
\begin{proof}
        We show that with even a single real coefficient, \amplang is not closed under negation. Consider a family of graphs $\{G(a,b)\}_{a,b\in\mathbb{N}_0}$, where $G(a,b)$ is a star graph with central node $v$, that has $a$ red neighbours and $b$ blue neighbours. Consider the expression $e=r \cdot  \mpDiamond \mathsf{red}-\mpDiamond \mathsf{blue}$, where $r>0$ is an irrational number satisfying \Cref{lem:two-sided-liouville-n}.  We write $f(a,b)=a\cdot r - b$ for the corresponding function that matches the numeric evaluation of $e$ at $v$ on $G(a,b)$.

Suppose towards a contradiction that there is an \amplang expression $e'$ that expresses the negation of $e$ at $v$ (i.e., $e(v) > 0 \iff e'(v) \leq 0)$. 
By the star shape of the graph, $\mpDiamond^i \mathsf{red}$ equals $0$ for even $i$, and equals $a (a+b)^k$ for odd $i=2k+1$, and analogously for $\mathsf{blue}$. Then by \Cref{amplang-normalform},
$e'$ must be of form $\sum_i \alpha_i \cdot \mpDiamond^{2i+1} \mathsf{red} + \sum_j \beta_j \cdot \mpDiamond^{2j+1} \mathsf{blue}+\gamma$. 
Numerically, this corresponds to the function $g(a,b) = \sum_k^K( \alpha_k a n^k +  \beta_k b n^k )+ \gamma$, where $n=a+b$.

Our goal is therefore to show that there exists no $g$ of this form, such that $f(a,b) > 0 \iff g(a,b) \le 0$. We will focus on this setting and show that no such $g$ exists. As this covers all numerical forms that $e'$ can take (at $v$), this then also implies that $e$ has no expression in $\amplang_{\mathbb{B}}$ that expresses the negation of $e$.

Let us write $g(a,b)$ as $aA(n)+bB(n) + \gamma$. That is, $A(n) = \sum_k^K \alpha_k n^k$ and $B(n) = \sum_k^K \beta_k n^k$.
Observe $b=ra-\varepsilon$ where $\varepsilon = f(a,b)$. Then 
\[
g(a,b)=g(a,ra-\varepsilon) = a(A(n)+rB(n))-\varepsilon B(n) + \gamma
\]
Let $C(n) := A(n)+rB(n)$. We first show that assuming $f(a,b)>0 \iff g(a,b)\leq 0$, then $C\equiv 0$. 

Suppose towards a contradiction that $C \not\equiv 0$. Let $d=\deg C$, by \Cref{lem:eventual-sign} there exist $N\in\mathbb{N}$ and $c>0$ such that either
$C(n)\ge c n^{d}$ for all $n\ge N$, or $C(n)\le -c n^{d}$ for all $n\ge N$.

Let $K =\deg B$ and pick $M>K+2$. By \Cref{lem:two-sided-liouville-n} we can choose infinitely many
pairs $(a,b)$ with $n=a+b$ arbitrarily large and $0<\varepsilon<c_M n^{-M}$, and also infinitely many with
$\varepsilon<0$ and $|\varepsilon|<c_M n^{-M}$.
Since $|B(n)|=O(n^K)$, along either such sequence we have $|\varepsilon| \,  |B(n)|\to 0$ as $n\to\infty$.

We proceed with a case distinction on the two alternative behaviours for $C$ for large $n$.
If $C(n)\ge c n^{d}$ for all $n\ge N$, choose $M>\deg B+2$ and take $(a,b)\in\mathbb{N}^2$ from \Cref{lem:two-sided-liouville-n} with
$\varepsilon=ra-b>0$ and $n=a+b\ge N$ arbitrarily large.
Then $f(a,b)>0$ implies $g(a,b)\le 0$, and thus
$aC(n)+\gamma=g(a,b)+\varepsilon B(n)\le \varepsilon B(n)\le |\varepsilon|\,|B(n)|$.
Since $|B(n)|=O(n^{\deg B})$ and $|\varepsilon|<c_M n^{-M}$, the right-hand side tends to $0$ as $n\to\infty$.
Moreover, $n=(1+r)a-\varepsilon$, hence $a=(n+\varepsilon)/(1+r)$. For $n$ large we have $|\varepsilon|<1$, so $n+\varepsilon\ge n-1$ and
$a\ge (n-1)/(1+r)$. Therefore, for $n\ge N$,
$aC(n)+\gamma \ge \frac{n-1}{1+r}\cdot c n^{d}+\gamma \to +\infty$,
contradicting $aC(n)+\gamma\le |\varepsilon|\,|B(n)|\to 0$.

If $C(n)\le -c n^{d}$ for all $n\ge N$, take $(a,b)\in\mathbb{N}^2$ from \Cref{lem:two-sided-liouville-n} with $\varepsilon<0$ and $n\ge N$ arbitrarily large.
Then $f(a,b)\le 0$ implies $g(a,b)>0$, and thus
$aC(n)+\gamma=g(a,b)+\varepsilon B(n)>\varepsilon B(n)\ge -|\varepsilon|\,|B(n)|$.
For $n$ large we have $|\varepsilon|<1$ and thus $a=(n+\varepsilon)/(1+r)\ge (n-1)/(1+r)$, hence
$aC(n)+\gamma \le -\frac{n-1}{1+r}\cdot c n^{d}+\gamma \to -\infty$,
contradicting $aC(n)+\gamma> -|\varepsilon|\,|B(n)|\to 0$.
Therefore $C\equiv 0$, i.e. $A(n)=-rB(n)$.

Since we just showed that under assumption that $e'$ expresses the boolean negation of $e$, then $C \equiv 0$, we also have the following:
\begin{equation}
    g(a,b) = -(ra-b)B(a+b)+\gamma.
\end{equation}
We show that this implies $\gamma\le 0$.
Assume towards a contradiction that $\gamma>0$.
If $B\equiv 0$, then $g(a,b)\equiv \gamma$ and choosing e.g.\ $(a,b)=(1,0)$ (so $f(1,0)=r>0$) forces $\gamma\le 0$.
Hence assume $B\not\equiv 0$. By \Cref{lem:eventual-sign} there exists $N\in\mathbb{N}$ such that either
$B(n)>0$ for all $n\ge N$ or $B(n)<0$ for all $n\ge N$.

Let $K=\deg B$ and choose $M>K+1$. By \Cref{lem:two-sided-liouville-n} there are infinitely many pairs $(a,b)\in\mathbb{N}^2$
with $\varepsilon=ra-b>0$ and $\varepsilon<c_M(a+b)^{-M}$; in particular, we may take such pairs with $n=a+b\ge N$.
For each such pair we have $f(a,b)>0$ and thus $g(a,b)\le 0$, i.e.
$\gamma \le \varepsilon\,B(n)$.

If $B(n)<0$ for all $n\ge N$, then $\gamma \le \varepsilon B(n) < 0$, hence $\gamma\le 0$.
Otherwise $B(n)>0$ for all $n\ge N$. Since $B$ is a polynomial of degree $K$, there is a constant $D>0$ such that
$B(n)\le D n^K$ for all $n\ge N$. Therefore, for the pairs above,
$\gamma \le \varepsilon B(n) \le c_M \,n^{-M}\, D \,n^K = c_MD\,n^{K-M}$.
As $K-M<0$ and $n\to\infty$ along infinitely many  pairs $a,b \in \mathbb{N}^2$, the right-hand side tends to $0$, contradicting $\gamma>0$.
Thus in all cases $\gamma\le 0$.

However, $f(0,0)\le 0$ and thus by assumption $g(0,0) = \gamma >0$, i.e.\ $\gamma>0$, a contradiction.
\end{proof}
We note that our proof requires $r$ to be a Liouville number, a rather particular type of irrational number. In particular, this means $r$ is not algebraic. It remains an interesting open question  whether $\amplang_\mathbb{B}$ with algebraic irrational coefficients is closed under negation.

\section{Deferred Proof for Section 5}\label{app:bounded}

\eventuallyconstantfunctions*

\begin{proof}
    Let $e$ be a rational $\smplang{f}$-expression,
    where $f$ is an eventually constant function
    with endpoints $t^-$, $t^+$
    such that $f(t^+) \neq f(t^-)$.
    Let $f^- \coloneqq f(t^-)$
    and $f^+ \coloneqq f(t^+)$.
    We show by induction on the structure of $e$
    that there is a numerically equivalent expression $e'$ in $\smplang{\step}$.
    \begin{itemize}
        \item If $e = 1$ or $e = P_i$ it is already a $\smplang{\step}$-expression.
        \item For the cases $e = e_1 + e_2$, $e = ae_1$ or $e = \mpDiamond e_1$
        it follows by the induction hypothesis.
        \item If $e = f(e_1)$.
    By induction, we can assume $e_1$
    to be a $\smplang{\step}$-expression.
    Let $d$ be the lowest common denominator
    of the coefficients in $e_1$.
    Note that on any pointed $k$-coloured graph $(G,c,v)$,
    we have that since $e_1$ is a rational
    $\smplang{\step}$-expression
    and $d$ is the common denominator
    of its coefficients,
    $e_1(G,c)(v)$ only takes on values
    of the form $\nicefrac{\ell}{d}$ for some $\ell \in \Z$.
    Now note that there are $\ell^-,\ell^+ \in \Z$ such that:
    \begin{align*}
        \forall \ell \in \Z, \ell \leq \ell^-: f(\nicefrac{\ell}{d}) &= f^- \\
        \forall \ell \in \Z, \ell \geq \ell^+: f(\nicefrac{\ell}{d}) &= f^+.
    \end{align*}
    As a result, $f(e_1)$ only takes on values $f(x)$
    where
    $x \in D \coloneqq [\nicefrac{\ell^-}{d}, \nicefrac{\ell^-+1}{d}, \ldots, \nicefrac{\ell^+}{d}]$.
    Thus, in order to give a $\smplang{\step}$-expression
    that is numerically equivalent to $f(e_1)$,
    we have to build a function out of $\step$ using affine combinations
    that coincides with $f$ on all of the values $D$,
    which are finitely many.
    
    We start by defining the alternative step function
    $\step'(x) \coloneqq 1-\step(-x)$. Note that this only changes in the case
    where $x$ is 0, that is
        \[ \step'(x) = \begin{cases}
            1, & x \geq 0 \\ 0, & x < 0.
        \end{cases} \]
    Further, we define the transformed step function
    \[ t_{p_1,p_2}(x) \coloneqq p_2\step'(x - p_1). \]
    Note that
    \[ t_{p_1,p_2}(x) = \begin{cases}
        p_2, & x \geq p_1 \\ 0, & x < p_1.
    \end{cases} \]
    We also define the 'blip' function
    \[ s_{p_1,p_2} = t_{p_1,p_2}(x) - t_{(p_1 + \nicefrac{1}{d}),p_2}(x). \] Note that
    \[ s_{p_1,p_2}(x) = \begin{cases}
        0, & x \geq p_1 + \nicefrac{1}{d} \\
        p_2, & p_1 \leq x < p_1 + \nicefrac{1}{d} \\ 0, & x < p_1.
    \end{cases} \]
    Lastly, we defined the reverse transformed step function
    \[ t'_{p_1,p_2} := p_2[1-\step(x-p_1)] \]
    where
    \[ t'_{p_1,p_2}(x) = \begin{cases}
        0, & x > p_1 \\ p_2, & x \leq p_1.
    \end{cases} \]
    \noindent
    Then,
    \begin{align*}
        & t'_{\nicefrac{k^-}{d},f^-}(e_1) \\
       &{} +  \sum_{i \in D}s_{i,f(i)}(e_1) \\
       &{} +  t_{\nicefrac{k^+}{d},f^+}(e_1)
    \end{align*}
    is a $\smplang{\step}$-expression and
    is numerically equivalent to $f(e_1)$:
    It returns the same values 
    when $e_1$ takes values in $D$,
    and when $e_1$ does not take values in $D$,
    $f(e_1)$ becomes constant, also coinciding
    with the expression.
    \end{itemize}
    Note that if $e$ makes use of a set of uneven eventually constant functions,
    each one can individually be emulated by $\step$ in the way outlined above,
    and so the translation also works for $\smplang{\Sigma}$
    if $\Sigma$ is any set of uneven eventually constant functions.
    For the reverse, it remains to show that $\step$
    can be emulated by any eventually constant function.
    So let $\sigma$ be an eventually constant function with endpoints $t^-$, $t^+$.
    Let $e$ be a rational $\smplang{\step}$-expression
    and $d_e$ be the lowest common denominator
    of its coefficients.
    Analogously to the preceding argument,
    $e$ will only take values of the form $\nicefrac{\ell}{d_e}$ for some $\ell \in \N$.
    First, we move the expression along the x axis by setting $e' = e + t_-$.
    Then we set $e'' \coloneqq \sigma(d_e(t^- - t^+)e')$,
    noting that for any $\ell \in \N$:
    \[ \sigma(d_e(t^- - t^+)(\nicefrac{\ell}{d_e} + t^-)) =
    \begin{cases}
        \sigma(t^-),& \nicefrac{\ell}{d_e} \leq 0 \\
        \sigma(t^+),& \nicefrac{\ell}{d_e} > 0.
    \end{cases}\]
    Finally, we move the expression
    to the correct spot on the y axis by setting
    $e'' \coloneqq (e' - \sigma(t^-)/(\sigma(t^-)-\sigma(t^+))$.
    This is where it is crucial that $\sigma$ is in fact uneven.
    Now $e''$ is a $\smplang{\sigma}$ expression
    that equates to $\step(e)$ on any possible
    value that $e$ can take.
\end{proof}

\section{Deferred Proofs for Section 6}\label{app:unbounded}

\boolmplangonrbgraph*
\begin{proof}
    We show this inductively by structural induction on $e$.
    \begin{itemize}
        \item The case $e=1$ is a constant,
        and constants are nice and symmetric.
        \item The case $e=P_i$ depends only on the color of the node,
        so all cases are constant (thus also nice and symmetric).
            \item The case $e=ae'$ follows by induction because nice functions,
            affine functions and simple functions
            are all closed under scalar multiplication.
            \item The case $e=f+g$ follows by induction,
            again because nice functions, affine functions and simple functions
            are all closed under addition.
            \item The case $e=\step(e')$ follows because
            \begin{itemize}
                \item On the red and blue nodes,
                the expression returns
                \begin{itemize}
                    \item $\beta_r^S(r,b) \coloneqq \step(S_r(r,b))$ on the red nodes, and
                    \item $\beta_b^S(r,b) \coloneqq \step(S_b(r,b))$ on the blue nodes.
                \end{itemize}
                Both are symmetric and simple.
                They are also nice:
                $S_r(r,b) \coloneqq 1\beta_r^S(r,b)$,
                and $S_b(r,b) \coloneqq 1\beta_b^S(r,b)$
                (1 is a polynomial).
                \item On the white nodes...
                \begin{itemize}
                    \item On the $w$-type nodes,
                    the expression evaluates to
                    $0 + 0 + \beta(r,b)$
                    where $\beta(r,b) \coloneqq \step(F(r,b) + \alpha(r,b) + \beta(r,b))$,
                    which is simple.
                    This fits the required shape.
                    \item And on the $w'$-type nodes,
                    it evaluates to $0 + 0 + \beta(b,r)$.
                \end{itemize}
              \end{itemize}
            \item For the interesting case $e = \mpDiamond e'$, we make a case distinction
            on which type of node is being evaluated.
            By induction there are
            functions $S_r$, $S_b$, $F$ and $f_w$ that satisfy the statement
            for $e'$.
            \begin{itemize}
                \item For the case of the red and blue nodes,
                they have one $w$-type neighbour and one $w'$ type neighbour.
                Therefore, the resulting function
                will be
                \begin{align*}
                S(r,b)& = F(r,b) + \alpha(r,b) + \beta(r,b) + F(b,r) + \alpha(b,r) + \beta(b,r)\\
                &=\sum_{i=0}^k p_i(r,b)\beta_i(r,b)
                +\sum_{i=0}^k p_i(b,r)\beta_i(b,r)
                +\alpha(r,b)+\beta(r,b)+\alpha(b,r)+\beta(b,r)\\
                &=\sum_{i=0}^k \underbrace{(p_i(r,b)+p_i(b,r))}_{p_i'(r,b)}\beta_i(r,b)
                +\underbrace{(\alpha(r,b)+\alpha(b,r))}_{p'_{k+1}(r,b)}1+
                1\underbrace{(\beta(r,b)+\beta(b,r))}_{\beta_{k+2}(r,b)}
                \end{align*}
                which is symmetric and nice.
                \item For $w$-type nodes,
                the expression returns $rS_r(r,b) + bS_b(r,b)$.
                Now, we define the polynomial $p(r,b) \coloneqq r$.
                Then at $w$, the expression returns
                $F(r,b) \coloneqq p(r,b)S_r(r,b) + p(b,r)S_b(r,b)$
                which is again nice (nice functions
                are closed under multiplication with a polynomial,
                and addition).
                \item For $w'$-type nodes, the result is analogous
                but with the number of red and blue neighbours exchanged:
                The expression evaluates to
                $bS_r(r,b) + rS_b(r,b) = p(b,r)S_r(r,b) + p(r,b)S_b(r,b)$.
                By symmetry of $S$, this equals $p(b,r)S_r(b,r) + p(r,b)S_b(b,r)$
                which is exactly $F(b,r)$.
            \end{itemize}
        \end{itemize}
\end{proof}

\blobinomialsvsrelu*
\begin{proof}
To prove something slightly stronger,
let the affine function in the statement become polynomial
and assume that there is a polynomial $p: \R^2 \to \R$,
nice $F$ and simple $\beta$
such that
\[
G(x,y) \coloneqq p(x,y) + \sum_{j=1}^k q_j(x,y)\beta_j(x,y)  + \beta(x,y)
\]
satisfies $G(x,y)=|x-y|$ for all $(x,y)\in\N^2$.

We first use that the $\beta_j$ are simple and hence take only finitely many values. This allows us to view $G$ as one of finitely many {\em base polynomials}. More specifically, for each vector $\vec{c}=(c_1,\ldots,c_k)$ with each $c_j$ in the (finite) range of $\beta_j$, we define 
\[
Q_{\vec{c}}(x,y):=p(x,y)+\sum_{j=1}^k c_j\,q_j(x,y)\in\R[x,y].
\]
In particular, if  $\beta_1(x,y)=c_1,\ldots,\beta_k(x,y)=c_k$, then we can view $G$ as the polynomial
\[
G(x,y)=Q_{(c_1,\ldots,c_k)}(x,y)+\beta(x,y).
\]

We fix $\delta\in\N$ and consider the following diagonal line $L_\delta$ in the grid $\N^2$: 
\[
L_\delta:=\bigl\{(m+\delta,m)\bigm\vert m\in\N\bigr\}.\]
Since, by assumption, $G(x,y)=|x-y|$ on $\N^2$, we have
\[
F(m+\delta,m)=\delta\qquad\text{for all } m\in\N,
\]
so $F$ is constant on each $L_\delta$. We note that $L_\delta$ contains infinitely many grid points. By contrast,
\[
B_\delta:=\bigl\{\bigl(\beta_1(m+\delta,m),\ldots,\beta_k(m+\delta,m),\beta(m+\delta,m)\bigr)\bigm\vert m\in\N\bigr\}
\]
contains only finitely many values because the $\beta_j$ and $\beta$ are simple. By the pigeonhole principle, this implies that there exists an 
element 
$(\vec{c}(\delta),b_\delta)\in B_\delta$ that is visited infinitely often along $L_\delta$.
That is, there exists an infinite set $S_\delta\subseteq\N$ such that for all $m\in S_\delta$,
\[
\bigl(\beta_1(m+\delta,m),\ldots,\beta_k(m+\delta,m),\beta(m+\delta,m)\bigr)=(\vec{c}(\delta),b_\delta),
\]
and thus for all $m\in S_\delta$,
\[
Q_{\vec{c}(\delta)}(m+\delta,m)+b_\delta=\delta.
\]
Hence, we also have, as a polynomial identity in $x$, that
\[
Q_{\vec{c}(\delta)}(x+\delta,x)\equiv \delta-b_\delta.
\]

We next use that all $\beta_j$ are symmetric. For each $j=1,\ldots,k$, $\delta\in\N$ and $m\in S_\delta$, by symmetry
\[
\beta_j(m,m+\delta)=\beta_j(m+\delta,m)=c_j(\delta),
\]
where $\vec{c}(\delta)=(c_1(\delta),\ldots,c_k(\delta))$.
Since $\beta$ has finite range, passing to an infinite subset $S_{\delta}'\subseteq S_\delta$, we may also assume that
\[
Q_{\vec{c}(\delta)}(m,m+\delta)+b_\delta'=\delta,
\]
for some constant $b_\delta'$ in the range of $\beta$,
for all $m\in S_\delta'$. Hence, we can also conclude that
\[
Q_{\vec{c}(\delta)}(x,x+\delta)\equiv \delta-b_\delta'
\]
as polynomials.

Let us now vary $\delta\in\N$ and consider
\[
C:=\bigl\{\bigl(\vec{c}(\delta),b_\delta,b_\delta'\bigr)\bigm\vert \delta\in\N\bigr\},
\]
where $\vec{c}(\delta)$, $b_\delta$ and $b_\delta'$ are as defined above. We remark again that, being values in the range of the simple functions,
$C$ only contains finitely many values. By the pigeonhole principle, there exists a $(\vec{c},b,b')\in C$ and an infinite set $D\subseteq\N$ such that for all $d\in D$, 
\[
(\vec{c}(d),b_d,b_d')=(\vec{c},b,b'),
\]
and hence we also have the following identities as polynomials
\[
Q_{\vec{c}}(x+d,x)\equiv d-b \quad\text{and}\quad Q_{\vec{c}}(x,x+d)\equiv d-b',
\]
for all $d\in D$. We let
\[
Q(x,y):=Q_{\vec{c}}(x,y)=p(x,y)+\sum_{j=1}^k c_j q_j(x,y).
\]

We now come to the key part of the proof where we observe that $Q(x,y)$ can be viewed as a polynomial in $x-y$. To this aim we make the change of variables $u=x-y$, $v=y$, and write $\widetilde Q(u,v):=Q(u+v,v)\in\R[u,v]$. For every $d\in D$, the specialization $v\mapsto\widetilde Q(d,v)=Q(v+d,v)$ is constant in $v$. Indeed, we know that $Q(v+d,v)=d-b$.
Writing
\[
\widetilde Q(u,v)=C_0(u)+C_1(u)v+\cdots+C_k(u)v^k \quad\text{with each } C_j\in\R[u],
\]
we get $C_j(d)=0$ for all $j\ge1$ and infinitely many $d$ (all $d$ in $D$). As a consequence, we get zero polynomials $C_j\equiv 0$ for $j\ge 1$. Thus $\widetilde Q(u,v)=C_0(u)$ depends only on $u$. That is, $Q(x,y)$ can be viewed as the polynomial
\[
Q(x,y)=\widetilde Q(x-y,y)=C_0(x-y).
\]

We are now ready to obtain a contradiction. Indeed, from the identities above, for all $d\in D$ we have
\[
C_0(d)=d-b,\qquad C_0(-d)=d-b'.
\]
Adding yields
\[
C_0(d)+C_0(-d)=2d-(b+b')
\]
for all $d\in D$.
The left-hand side is an even polynomial in $d$, that is, replacing $d$ by $-d$ has no effect. By contrast, the right-hand side has a non-zero odd part $2d$ and hence is not even.
Since two polynomials that agree on an infinite set (all $d\in D$) must be identical, this is impossible. The contradiction completes the proof.
\end{proof}

\end{document}